\newcommand{\bigspace}{\mbox{\hspace{10mm}}}
\newcounter{mycounter}
\newcommand{\ranges}{\mathrm{ranges}}
\newcommand{\coralg}{\impalg}
\newcommand{\size}{\mathrm{size}}
\newcommand{\stralg}{\textsc{Streaming}_\textsc{Coreset}}
\newcommand{\stream}{stream}
\newcommand{\ta}{\xi}
\newcommand{\impalg}{\textsc{Coreset}}
\newcommand{\cost}{\mathrm{cost}}
\newcommand{\dist}{\mathrm{dist}}
\newcommand{\br}[1]{\left\{#1\right\}}
\newcommand{\REAL}{\ensuremath{\mathbb{R}}}
        \newcommand{\red}{\textsc{Sensitivity}}
        \newcommand{\redd}{$\textsc{WSensitivity}$_{\ell_{\inf}}\textsc{-coreset}}
        \newcommand{\linfcore}{\textsc{$\ell_\infty$-Coreset}}
        \newcommand{\projcore}{\textsc{$\ell_\infty$-Proj-Clustering-Coreset}}
        \newcommand{\linfrec}{\textsc{Recursion}}
\newcommand{\argmin}{\operatornamewithlimits{arg\, min}}
\newcommand{\argmax}{\operatornamewithlimits{arg\, max}}
\newcommand{\R }{\mathbb{R}}
\newcommand{\eps}{\varepsilon}
\newcommand{\loss}{\mathrm{loss}}
\newcommand{\CC}{Y}
\renewcommand{\c}{y}
\newcommand{\range}{\mathrm{range}}
\newcommand{\pr}{\mathrm{Pr}}
\newcommand{\coralgfinal}{\textsc{$k$-GMM-Coreset}}
\newcommand{\ff}{f}
\renewcommand{\paragraph}[1]{\medskip\noindent\textbf{{#1}}}
\renewcommand{\epsilon}{\varepsilon}
\renewcommand{\baselinestretch}{1.67}
\newtheorem{theorem}{Theorem}[chapter]
\newtheorem{corollary}[theorem]{Corollary}
\newtheorem{lemma}[theorem]{Lemma}
\newtheorem{definition}[theorem]{Definition}
\newtheorem{observation}[theorem]{Observation}
\theoremstyle{plain}
\providecommand{\norm}[1]{\left\lVert#1\right\rVert}
\newcommand{\subcore}{\textsc{Subspace-Coreset}}
\begin{document}

\title{\Huge{Coresets for Gaussian Mixture Models\\ of Any Shape}
                \huge
             \\[10mm] Zahi Kfir
             \\[25mm] \Large THESIS SUBMITTED IN PARTIAL FULFILLMENT OF THE
             \\       REQUIREMENTS FOR THE MASTER'S DEGREE
             \\[15mm] University of Haifa
             \\       Faculty of Social Sciences
             \\       Department of Computer Sciences
             \\[10mm] November, 2017
}
\author{}
\date{}
\maketitle{}

\pagestyle{plain}
\pagenumbering{Roman}

\begin{center}
\Huge
Coresets for Big Data Learning\\ of Gaussian Mixture Models of Any Shape
\huge
\\[5mm] By: Zahi Kfir
\\[3mm] Supervised By: Dr. Dan Feldman
\Large
\\ [10mm]THESIS SUBMITTED IN PARTIAL FULFILLMENT OF THE
\\ REQUIREMENTS FOR THE MASTER'S DEGREE
\\ [5mm]University of Haifa
\\ [1mm]Faculty of Social Sciences
\\ [1mm]Department of Computer Sciences
\\ [3mm]November, 2017
\\ [5mm] Approved by:
$\underline{\bigspace\bigspace\bigspace\bigspace\bigspace\bigspace\bigspace\bigspace}$
   \bigspace    Date:$\underline{\bigspace\bigspace}$
\\ (supervisor)\bigspace
\\ [3mm]Approved by:
$\underline{\bigspace\bigspace\bigspace\bigspace\bigspace\bigspace\bigspace\bigspace}$
   \bigspace    Date:$\underline{\bigspace\bigspace}$
\\ (Chairman of  M.Sc Committee) \bigspace

\end{center}

\chapter*{}
\begin{center}
\LARGE{Acknowledgment}
\end{center}
I would like to thank Dr. Dan Feldman, for introducing me to the world of scientific research.
Dan's door is always open, and he is always willing to assist with any kind of problem. His help was invaluable.

I thank Mr. Elad Tolochinsky for his contributions to the thinking process, his friendship, and for the advices over the past years.


Last but not least, I would like to express my profound gratitude to my parents for all their support and encouragement,
and to my wife Yonit, who has been there for me throughout this process.
I could not have done this without you.

\tableofcontents

\chapter*{}
\addcontentsline{toc}{chapter}{Abstract}
\begin{center}
\huge{Coresets for Big Data Learning of\\ Gaussian Mixture Models of Any Shape}
\\ [3mm] Zahi Kfir
\\ [3mm] \huge{Abstract}
\end{center}
    An $\eps$-coreset for a given set $D$ of $n$ points, is usually a small weighted set,
    such that querying the coreset \emph{provably} yields a $(1+\varepsilon)$-factor approximation to the original (full) dataset,
    for a given family of queries. Using existing techniques, coresets can be maintained for streaming, dynamic (insertion/deletions),
    and distributed data in parallel, e.g. on a network, GPU or cloud.

    We suggest the first coresets that approximate the negative log-likelihood for $k$-Gaussians Mixture Models (GMM) of arbitrary shapes (ratio between eigenvalues of their covariance matrices). For example, for any input set $D$ whose coordinates are integers in $[-n^{100},n^{100}]$ and any fixed $k,d\geq 1$, the coreset size is $(\log n)^{O(1)}/\eps^2$,
    and can be computed in time near-linear in $n$, with high probability. The optimal GMM may then be approximated quickly by learning the small coreset.

    Previous results [NIPS'11, JMLR'18] 
    suggested such small coresets for the case of semi-speherical unit Gaussians,
    i.e., where their corresponding eigenvalues are constants between $\frac{1}{2\pi}$ to $2\pi$.

    Our main technique is a reduction between coresets for $k$-GMMs and projective clustering problems. We implemented our algorithms, and provide open code, and experimental results. Since our coresets are generic, with no special dependency on GMMs,
    we hope that they will be useful for many other functions.




\listoffigures
\addcontentsline{toc}{chapter}{List of Figures}

\chapter{Introduction}\label{sec:Introduction}
\thispagestyle{empty}
\pagenumbering{arabic}
    In this section we give background for the problems and techniques that are relevant to the rest of the thesis.
    Section ~\ref{sec_Related_Work} then gives related work followed by our contribution in Section~\ref{sec_Our Contribution}.

    \section{Background}\label{sec_Background}
    The theoretical analyses in this thesis focus on Gaussian Mixture Models as explained below. We then introduce coresets, and the motivation for learning and querying very large and dynamic distributed databases.
        However, we expect that our main algorithm,
        results and techniques would be relevant and inspired many other problems in machine learning and neural networks.

        \paragraph{Gaussian mixture models. }
            A \emph{$k$-Gaussian mixture model} ($k$-GMM for short) in $\REAL^d$ is an ordered set
            \[
                \theta=\big((\omega_i,\Sigma_i,\mu_i)\big)_{i=1}^k=\big((\omega_1,\mu_1,\Sigma_1),\cdots,(\omega_k,\mu_k,\Sigma_k)\big)
             \]
             of $k$ tuples, where $\mu_i\in\REAL^d$, $\Sigma\in\REAL^{d\times d}$ is a positive definite matrix,
             and $\omega=(\omega_1,\cdots,\omega_k)\in[0,1]^k$ is a \emph{distribution vector}, i.e.,
             whose sum is $\norm{\omega}_1=\sum_{j=1}^k\omega_j=1$.

            We consider the $GMM$ fitting problem of computing a Mixture of $k$-Gaussians Models ($k$-GMM for short)
            that maximizes the likelihood of generating a given set $P$ of $n$ points in the $d$-dimensional Euclidean space.
            That is, to minimize the negative log-likelihood
            \[
                L(P,\theta)=-\sum_{p\in P}\ln\sum_{i=1}^k \omega_i \cdot \frac{\exp(-\frac{1}{2}(p-\mu_i)^T\Sigma_i^{-1}(p-\mu_i))}{\sqrt{\det(2\pi \Sigma_i)}}.
            \]
            As is common in computational geometry, we assume worst case input. That is, unlike in PAC-learning and other many machine learning communities, we do not assume e.g. that the input points were sampled i.i.d. from Gaussian or from any other specific distribution. See Section~\ref{subsec Likelihood of Gaussians} for details.

            More generally, we would like to quickly answer database queries where  (not necessarily optimal) $k$-GMMs are given and we wish to know how good each one of them fits the database records,
            in time that is sub-linear in the number of records,
            or solve variant versions of the optimization problem that minimizes $L(P,\theta)+f(\theta,k)$
            for a function $f$ that depends only on $k$ and the $k$-GMM $\theta$, e.g. its sparsity,
            the number of $k$-GMMs or a regularization term~\cite{scholkopf2002learning}.

            In Section~\ref{subsec Likelihood of Gaussians} we give  formal definitions of $k$-GMMs and their fitting cost.

        \paragraph{Turning VLDB to VSDB using coresets. }
            A possible approach for solving the $k$-GMM fitting problem and its variants above, maybe also for big data,
            is to develop new algorithms from scratch. Instead, we suggest \emph{coresets} for this and related problems.
            In this paper, a coreset for a given finite input set $P$ of points is a (possibly weighted) subset $C\subseteq D$ such that,
            for any given $k$ Gaussian Mixture Model $\theta$, the negative log-likelihood $(P\mid \theta)$ is
            \[
            L(P,\theta):=\sum_{p\in P}L(\br{p}, \theta)
             \]
            that $\theta$ generated the input set is provably approximately the same as the negative log-likelihood $L(\theta\mid C)$
            that $\theta$ generated $C$. More precisely, the approximation is up to a given multiplicative factor of $1\pm\eps$.
            The goal is to have a small coreset, i.e.,
            a good trade off between $\eps$ and the size of the coreset $C$
            that is a small database (SMDB) representation of the original (possibly very large) database.
            Note that coreset is problem dependent, and its definition also changes from paper to paper.
            For example, the coresets in this paper are always subsets of the input set $P$ (and not arbitrary points in $\REAL^d$), and they are either un-weighted or positively weighted.

        \paragraph{Why coresets? }
            The first coresets, two decades ago,
            suggested first efficient near-linear time algorithms to optimization problems in Computational
            Geometry~\cite{agarwal2004approximating,agarwal2005geometric,agarwal2001maintaining},
            and then to more general problems in theoretical computer
            science~\cite{feldman2006coresets,assadi2017randomized,feldman2006coresets}.
            However, over the recent decade, coresets suggested significant breakthroughs in many other fields, such as machine learning,
            computer vision, and cryptography~\cite{akavia2017secure,cryptoeprint},
            as well as real-world applications by main players in the
            industry~\cite{clarkson2010coresets,boutsidis2012rich}.

            The natural motivation for having a coreset is simply to run an existing optimization algorithm on the coreset.
            If the coreset is small and its construction time is fast, then the overall running time may be smaller by order of magnitudes.
            For example, while the $k$-means clustering problem for $n$ points in $\REAL^d$ is NP-hard when either $d$ or $k$ are not constants (part of the input),
            a coreset for this problem of size that depends polynomially on $k$ and $1/\eps$,
            and independent of the input cardinality $n$ or dimension $d$
            can be computed in $O(ndk)$ time~\cite{feldman2013turning}.
            The running time is then reduced from $n^{dk}$ to $O(ndk)+2^{O(dk)}$ by running naive exhaustive search algorithms on the coreset.

            However, this is not what is done in practice and also this paper.
            Instead, popular off the shelf heuristic is applied on the coreset to avoid terms such as $2^{O(dk)}$ above.
            In this paper, For learning GMMs, the EM-algorithm is a common candidate for such a heuristic as explained e.g. in~\cite{feldman2011scalable}.
            While the global optimal guarantee is no longer preserved, the coreset property still holds:
            any solution obtained by the heuristics on the original data would be approximated by the coreset.
            In fact, usually running a heuristic on the coreset yields \emph{better} results than running it on the original data; see e.g.~\cite{feldman2015idiary}. Intuitively, the coreset smooth the solution space and removes noise that causes the heuristic to get trapped in local minima.
            Even if the heuristic is already fast,
            we may run it thousands of times on the coreset instead of a single run on the the original data, or use more iterations/seeds etc. in the same running time to improve the state of the art.

        \paragraph{Handling Streaming Distributed Dynamic Data. }
            Even if we already have an efficient and good solution to our problem,
            one of the main advantage of coresets is that an inefficient (say, $n^{10}$ time) off-line non-parallel coreset construction can be maintained for "Big Data":
            a possibly infinite stream of points that may be distributed on a cloud or networks of hundreds of machines using small memory,
            communication (``embarrassingly in parallel"~\cite{regin2013embarrassingly}) and update time per point.
            This holds if the coresets are mergable in the sense that if $C_1$ is a coreset for $D_1$, and $C_2$ is a coreset for $D_2$,
            then $C_1\cup C_2$ is a coreset for $D_1\cup D_2$.
            See survey and details of this well known technique
            in~\cite{har2004coresets,indyk2014composable,bentley1980decomposable}.
            This property allows us to compute coresets independently over time and different machines for only small subsets of the input (say, $O(\log n)$ points),
            and then merge and re-reduce them; see descriptions in Algorithm~\ref{algsup} and figures~\ref{fig:streaming},~\ref{fig:streaming_9}.

            The off-line optimization algorithm can then be applied on the maintained coreset (from scratch) every now and then when needed.
            This simple but generic and provable reduction can be applied for any mergeable coreset and is explained in details in many papers;
            see~\cite{har2004coresets,bentley1980decomposable}. However, the results are usually for a specific problem and not formalized. E.g. issues of handling ``coreset for coreset" via weighted input are usually ignored. We give a generic framework with provable bound regarding time, space and probability of success in Section~\ref{sec Coresets for Streaming}.  Similarly, such coresets support streaming and distribution data simultaneously~\cite{feldman2015more} delete an input point and update the model,
            usually in near-logarithmic time per point, as explained e.g. in~\cite{AMRSLS12,FGSSS13,feldman2010coresets}.

        \paragraph{Constrained and sparse optimization. }
            A coreset for a family of models is significantly different than sparse optimal solution to the problem such as e.g. the output of Frank-Wolfe algorithm~\cite{swoboda2018map,bauer2016understanding}.
            In particular, unlike mergable coresets, it is not clear how to maintain sparse solution when a new point is inserted to the input set,
            or for streaming/distributed data in general.
            Moreover, since a coreset approximates \emph{every} model in a given family of models,
            it can be used to compute not only the optimal model in the family of solutions, but also optimal under any given constraints that depend on on the model. For example, a Gaussian whose covariance matrix is sprase, or has few non-zero eigenvalues.

            A coreset for a family of models also approximates, by its definition, additional regularization terms that depend only on the model.
            While a  different optimization algorithm should be applied on the coreset,
            if the coreset is small -- then this algorithm may be relatively inefficient,
            but still efficient when applied on the small coreset.
            Alternatively, heuristics such as the EM-algorithm~\cite{dempster1977maximum} may be used to handle such constraints on the coreset.

            An important property of all the suggested coresets in this paper is that they are (weighted) subsets of the input set.
            In particular, sparse input points imply sparse points in the coreset.
            The fact that the coreset is a subset of the input, and not, say, linear combinations of points,
            as in~\cite{dasgupta2003elementary}, PCA~\cite{munteanu2014smallest} or random  projections~\cite{kerber2014approximation}
            is also useful in practice for many other applications that need to interpret the coreset as a set of representatives,
            or apply the same coreset for other algorithms that expects data in a specific format.
            It also reduces numerical issues that arise when the coreset consists of linear combinations or projections of the input points.

        \paragraph{Projective Clustering Problem.} In this paper we forge a link between the family of $k$-GMM problems and the family of projective clustering problems. The input for a  projective clustering problem is a set $P$ of $n$ points in $\REAL^d$ and an integer $k\geq 1$.
            The \emph{fitting cost} $\dist(P,S)$ of a given set $S$
            of $k$ hyperplanes ($d-1$ dimensional affine subspaces)
            to $P$ is the maximum over the $n$ distances between every point to its closest hyperplane.
            The optimal projective clustering $S^*$ is the set of $k$ hyperplane that minimize this fitting cost, i.e., smallest width set that covers all the input points.
            Formally, by letting $H_{d,k}$ denote the union over every set of $k$-hyperplanes in $\REAL^d$,
            \[
            \dist(P,S^*) = \min_{S\in H_{d,k}}\dist(P,S) =
            \min_{S\in H_{d,k}}\max_{p\in P}\dist(p,S) =
            \min_{S\in H_{d,k}}\max_{p\in P}\min_{s\in S}\norm{p-s}_2.
            \]

            For a given $\eps>0$, usually in $(0,1)$, an $\eps$-coreset for $(P,k,\dist)$ approximates $\dist(P,S)$
            for every set of $k$ hyperplanes $S\in H_{d,k}$
            up to $(1+\eps)$ multiplicative error.
            More generally, the input may also include a variable
            $j\in\br{0,..,d-1}$ that restricts the dimension of each subspace to be $j$.
            In particular, for $j=0$ the problem is known as $k$-center
            where we wish to cover the input points by the smallest $k$ balls of the same radius.

            The projective clustering problem may also be defined for say,
            sum or sum of squared distances instead of maximum distance between each point to its closest subspace.
            In this case $j=0$ yields the classic $k$-means problem, and $k=1$ is
            related to the PCA problem or low-rank approximation
            (if the subspace should passes through the origin).
            By combining techniques from~\cite{VX12-soda,feldman2013turning} and~\cite{FL11,braverman2016new} coreset for projective clustering for the maximum distance can be used to compute a coreset for sum or sum of squared distances; see also Chapter~\ref{sec From Sensitivity to ell infty}.

    \section{Related Work}\label{sec_Related_Work}
            The GMM fitting problem is one of the fundamental problems in machine learning which generalizes the notion of $k$-means clustering,
            where the covariance matrix that corresponds to each Gaussian is simply the identity matrix,
            i.e., its eigenvalues are all $1$ and the Gaussian has the shape of a ball
            around some point $\mu\in\REAL^d$.
            It is also strongly related to Radial Basis Networks~\cite{aljarah2018training,alexandridis2017fast} and Radial Basis function~\cite{rbf1,rbf2,rbf3,rbf5}.
            The problem is NP-hard when $k$ is part of the input~\cite{raghunathan2017learning}
            and many heuristics and approximation algorithms under different assumptions were suggested over the years;
            e.g.~\cite{vlassis2002greedy,zhang2003algorithms,gauvain1994maximum}.
            The EM-algorithm (Expected Maximization) is one of the popular in practice and used in common software
            libraries~\cite{townsend2016pymanopt,kapoor2015mpi,hicks2017pydigree,gopi2014digital}.
            However, there are very little results that provably handle scalable (big) data,
            or that handle constraints such as sparse covariance matrices that represent the Gaussians,
            or are able to compute the fitting of a given GMM to the data in sub-linear time.

\paragraph{Projective Clustering. }            It was proved in~\cite{EV05} that a coreset of size sub-linear in $n$ for approximating $k$ hyperplanes as defined in the previous section,
            does not exists for some example input sets.
            However, a coreset of size $(\log M)^{g(d,k)}/\eps^d$ was suggested in~\cite{EV05}
            for the case that the input is contained in a polynomial grid, i.e., $P\subset \br{-M,\cdots,M}^d$ and $g(d,k)$ is a function that depends only on $d$ and $k$; see Theorem~\ref{evmaintheorem}. The exponential dependency on $d$ is unavoidable even for the case of $k$-center ($j=0$); see~\cite{agarwal2005geometric} and more references therein.
            For $j\geq 1$ the coreset depends exponentially also in $k$,
            and logarithmic in $n$, which are both unavoidable due to the lower bounds in~\cite{agarwal2005geometric,EV05}. These claims hold for both maximum, sum or sum of squared distances from the points to the subspaces.

        \paragraph{Theoretical results on mixtures of Gaussians as summarized in~\cite{feldman2011scalable}.}
            There has been a significant amount of work on learning and applying GMMs (and more general distributions).
            Perhaps the most commonly used technique in practice is the EM algorithm~\cite{dempster1977maximum},
            which is however only guaranteed to converge to a local optimum of the likelihood.
            Dasgupta~\cite{dasgupta1999learning} is the first to show that parameters of an unknown GMM $P$
            can be estimated in polynomial time, with arbitrary accuracy $\varepsilon$,given i.i.d. samples from $P$.
            However, his algorithm assumes a common covariance, bounded excentricity,
            a (known) bound on the smallest component weight, as well as a separation (distance of the means),
            that scales as $\Omega(\sqrt{d})$.
            Subsequent works relax the assumption on separation to $d^{\frac{1}{4}}$~\cite{dasgupta2013two}
            and $k^{\frac{1}{4}}$~\cite{vempala2004spectral}.
            \cite{arora2005learning} is the first to learn general GMMs, with separation $d^{\frac{1}{4}}$.
            \cite{feldman2006pac} provides the first result that does not require any separation, but assumes that the Gaussians are axis-aligned.
            Recently, \cite{moitra2010settling} and~\cite{belkin2010polynomial}
            provide algorithms with polynomial running time (except exponential dependence on $k$) and sample
            complexity for arbitrary GMMs.
            However, in contrast to our results, all the results described above crucially
            rely on the fact that the data set $D$ is actually generated by a mixture of Gaussians.
            The problem of fitting a mixture model with near-optimal log-likelihood for arbitrary data
            is studied by~\cite{arora2005learning}, who provides a PTAS for this problem.
            However, their result requires that the Gaussians are identical spheres,
            in which case the maximum likelihood problem is identical to the $k$-means problem.
            \cite{feldman2011scalable} make only mild assumptions about the Gaussian components.
            \cite{lucic2017training} extended Feldman et al. \cite{feldman2011scalable},
            by suggesting a more practical algorithm with linear running time in $n$.

        \paragraph{Coreset as summarized in \cite{feldman2011scalable}.}
            Approximation algorithms in computational geometry often make use of random sampling, feature extraction,
            and $\epsilon$-samples \cite{haussler2018decision}.
            Coresets can be viewed as a general concept that includes all of the above, and more.
            See a comprehensive survey on this topic in \cite{feldman2011unified}. 
            It is not clear that there is any commonly agreed-upon definition of a coreset,
            despite several inconsistent attempts to do so \cite{har2004coresets,feldman2007ptas}.
            Coresets have been the subject of many recent papers and several surveys
            \cite{agarwal2005geometric,czumaj2007sublinear}.
            They have been used to great effect for a host of geometric and graph problems,
            including $k$-median \cite{har2004coresets}, $k$-mean \cite{feldman2007ptas,barger2016k},
            $k$-center \cite{har2004high}, $k$-line median \cite{feldman2006coresets},
            pose-estimation \cite{nasser2015coresets,nasser2015low}, etc.
            Coresets also imply streaming algorithms for many of these problems
            \cite{har2004coresets, agarwal2005geometric, frahling2005coresets, feldman2007ptas}.
            Framework that generalizes and improves several of these results has recently appeared in \cite{dasgupta1999learning}.
            \cite{lucic2017training} proved that one can use \emph{any} bicriteria approximation
            for the $k$-means clustering problem as a basis for the importance sampling scheme,
            thus, construct coresets in less time.

        \paragraph{Scaling issues.}
            In~\cite{feldman2011scalable,lucic2017training}
            it was proved that to obtain a coreset for approximating the negative log-likelihood of $k$-GMMs
            we must assume some lower bound on all the eigenvalues of each of the $k$-GMMs
            in the family of approximated $k$-GMMs.
            Otherwise, achieving an $\eps$-coreset is as hard as achieving $0$-coreset with no error at all,
            which is clearly impossible in general unless that coreset has all the input points.
            This problem is due to scaling issue that do not appear in problem such as projective clustering,
            where scaling the input (multiplying each coordinate by a constant)
            would not make the problem easier or harder with respect to $(1+\eps)$ multiplicative factor approximation.
            This is why a lower bound of $1/(2\pi)$ is assumed for each eigenvalue, when we wish to approximate the negative log-likelihood, as explained in~\cite{feldman2011scalable,lucic2017training}.  In this paper we use the same lower bound for these eigenvalues. It is an open problem whether we can obtain a smaller bound. However, for the following $\phi$-function, unlike these previous results we do not assume any (upper or lower) bounds on these eigenvalues.

        \paragraph{$\phi$ approximation. }            Cost functions and approximation algorithms in general are used to approximate sum of non-negative loss functions or fitting errors.
            It is thus more natural in this paper, as many others, e.g.~\cite{tolochinsky2018coresets,feldman2011scalable,kumar2009discriminative,kannan1994maximum},
            to approximate the negative log-likelihood $L(\theta\mid D)$ of a given $k$-GMM $\theta$, which is a sum over non-negative numbers,
            than the likelihood itself, which is a multiplication of $n=|D|$ numbers between $0$ to $1$.
            This term $L(\theta\mid D)$ was decomposed into a sum of two expressions
            in~\cite{feldman2011scalable,lucic2017training}:
            one that is independent of $D$,
            and thus can be computed \emph{exactly} from the given $\theta$, and one that is denoted by $\phi(\theta\mid D)$ and can be approximated by the coreset.
            Moreover, the value $\phi(\theta\mid D)$ captures all dependencies of $L(\theta\mid D)$ on $D$,
            and via Jensen’s inequality, it can be seen that $\phi(\theta\mid D)$ is always nonnegative, as explained in~\cite{feldman2011scalable,lucic2017training}.

            The main result in~\cite{feldman2011scalable,lucic2017training}
            is a coreset for $\phi$ that is denoted by $\phi_0$
            and is generalized in our paper to $\phi_{\xi}$, where $\xi\geq 0$. Using a generalization of~\cite[Theorem 14]{lucic2017training}, a coreset for $\phi_\xi$ is also a coreset for $e^{\xi}/(2\pi)$, where $e^0/(2\pi)\sim 0.11$; see Observation~\ref{obss}.

            Unfortunately, a $(1+\eps)$-approximation to $\phi(\theta\mid D)$ does not imply a $(1+\eps)$-approximation
            to the desired log-likelihood $L(\theta\mid D)$,
            if the first additive term in~\eqref{LL} (that is independent of $D$) is negative.
            In this case, we have an additional additive error.
            This is unavoidable in general, due to scaling issues as explained above;
            See Chapter~\ref{sec Conclusion and Open Problems} for related open problems.
            However, as shown in~\cite{lucic2017training}, if each eigenvalue of the covariance matrices of $\theta$ is at least $1/(2\pi)=e^0/2\pi$,
            the value $\phi(\theta\mid D)$ is indeed a $(1+\eps)$-approximation to $L(\theta\mid D)$.
            In particular, if the optimal $k$-GMM that is computed on the coreset satisfies this constrain, or its eigenvalues are rounded up,
            then we get a $(1+\eps)$-approximation for the original data $D$. We generalize this observation for $\xi>0$ in Observation~\ref{obss}.

        \subsection{Comparison to most related results in~\cite{lucic2017training}.}
            The suggested coresets in~\cite{lucic2017training} and its earlier version in~\cite{feldman2011scalable} can be considered as a special case of our reduction for the case of semi-spherical Gaussians. Formally, they
            approximates $\phi_0$ for eigenvalues between $\xi$ and $1/\xi$,
            where the coreset size has quadratic dependency on $1/\xi$.
            Formally it is an $\eps$-coreset for $(P,H_{k,d},\dist,\norm{\cdot}_\infty)$ as in Definition~\ref{coresetdef3}.

            For approximating negative-log-likelihood there is a lower bound of
            $\xi=1/(2\pi)$ as explained in the previous section, so the coreset in~\cite{lucic2017training}
            approximates the negative log-likelihood only for Gaussians
            whose corresponding eigenvalues are constants in the range $(1/2\pi,2\pi$).

            In contrast, our main reduction from projective clustering also implies
            a coreset for $\phi_{\xi}$ for \emph{any} $k$-GMM with arbitrary eigenvalues,
            and for arbitrarily small constant $\xi>0$ where the coreset size depends polynomially on $1/\xi$.
            In some sense, we remove the constraint on the eigenvalue to the $\xi$ in the $\phi_{\xi}$ function,
            which has much smaller impact on the approximation error $e^{\xi}/2\pi$, i.e., the constant approximation factor changed from $e^0/(2\pi)$ to $e^{\xi}/(2\pi)$ where $\xi$ is arbitrarily small constant.

            For the case of negative log-likelihood we get a coreset that approximates
            any $k$-GMM that satisfies the lower bound in~\cite{lucic2017training},
            but there is no required upper bound.
            In particular, there is no bound on the ratio between eigenvalues
            so the Gaussians may be of arbitrary shapes
            (not just unit spherical Gaussians as in~\cite{lucic2017training}).

            Our result can also be considered as a generalization of ~\cite{lucic2017training} when the queries are $k$ points instead of $k$ hyperplanes. Using our main reduction to projective clustering yields similar coresets that are indeed much smaller than the general case as explained in the next paragraph.


            \paragraph {Remaining Gaps.}
            The main limitation of our coreset is its size,
            which exponential in the dimension of the original space $d$
            and that the input must lie on $P\subseteq \br{-M,..M}^d$.
            Unfortunately, existing lower bounds for projective clustering implies
            that these properties are unavoidable to handle Gaussians of any shape; see Related Work.

            However, for the case of semi-spherical bounds,
            these restrictions are not needed,
            and the coreset size has polynomial dependency on $d$ with no restriction on the input.
            There is still a small gap of $O(\log n)$ in our coreset size
            and the corresponding result in~\cite{feldman2011scalable}, due to our usage of the reduction from $\ell_{\infty}$ to $\ell_1$ in Theorem~\ref{sensitivitylemmasquareddistances}.

            A natural open problem is to generalize our results for $j\in (1,d-2)$ dimensional affine subspaces,
            and to use our reduction for $j$-subspaces
            where $j$-eigenvalues of each Gaussians can be arbitrary small or large.

            Another (less significant) gap is that the coresets in this paper
            assumes $\xi>0$ and not $\xi=0$ as in~\cite{feldman2011scalable}.
            A more interesting result would be to generalize our solution for negative value of $\xi$,
            which would also reduce the lower bound of $1/(2\pi)$ for negative log-likelihood approximation.

    \section{Our Contribution}\label{sec_Our Contribution}
        Our main technical result is a generic reduction from coresets to projective clustering to coresets for $k$-GMMs as described in Section~\ref{sec Road Map}.
        The input is a set $D$ of points in $\REAL^d$, an approximation error and probability of failure $\eps,\delta\in(0,1)$, and an integer $k\geq1$.
        We assume that we are also given a coreset construction scheme (``black-box") that computes an $O(1)$-coreset of size $f(n)$ in time $t(n)$
        for the $\norm{\cdot}_{\infty}$-projective clustering problem as defined in the section~\ref{sec:proj}.
        The main implications of this reduction are then as follows:

        \begin{enumerate}
        \renewcommand{\labelenumi}{\theenumi}
        \renewcommand{\theenumi}{(\roman{enumi})}
        \item \label{res1} An algorithm that returns, with probability at least $1-\delta$,
            a mergable coreset $C\subseteq D$ that approximates the fitting cost $\phi(D,\theta)=\phi_\gamma(D,\theta)$ of \emph{any} $k$-GMM (with no restrictions on its eigenvalues),
            up to a factor of $(1\pm\eps)$. See Section~\ref{sec Likelihood to phi approx} for exact details.

            The size of $C$ and its construction time are $f(n)$ and $t(n)$ respectively,
            as the given coreset construction for projective clustering,
            up to factor that are near-linear in $f(n)$ and $t(n)$, and poly-logarithmic in $n$. See exact details in Theorem~\ref{evmaintheorem}.
        \item \label{res2} A proof that $C$ approximates the negative log-likellihood $L(\theta\mid D)$ of $D$ to any $k$-GMM whose smallest eigenvalue is at least, say,
            $0.160754=e^{0.01}/(2\pi)$. Here, $0.01$ can be replaced by $\ta$ above.
            See Theorem~\ref{offline}. In particular, there is no upper bound on the eigenvalues or the ratio between them.
            As an example, we use the coreset construction for projective clustering in~\cite{har2004no} to obtain a coreset $C$ as defined above
            of size $|C|\in\log^{O(1)} (n)/\eps^2$ for any constant (fixed) $k,d\geq 1$,
            under the assumption that $D$ is scaled to be in a polynomial grid,
            where every coordinate can be represented by $O(\log n)$ bits.
            More precisely, $D\subseteq \br{-n^c,\cdots,n^c}^d$ for some constant $c=O(1)$. See Theorem~\ref{thm33} and its proof for exact dependencies.
        \item \label{res3} Similar coresets for approximating the maximum (worst case fitting error) $L(\theta\mid \br{p})$ and $\phi(\theta\mid \br{p})$, respectively, over every input point $p\in D$.
            Here the construction is deterministic and the multiplicative approximation factor is constant.
            See Theorem~\ref{hyp} and Observation~\ref{obss} for details.
        \item Generalization of results~(i)-(iii) for big data is straight-forward by plugging the above coresets in traditional coreset merge-reduce techniques.
            Specifically, the construction of the above coresets can be maintained for a possibly infinite stream of points where $n$ is the number of points seen so far.
            The update time per point and the required memory is poly-logarithmic in $n$, and the overall running time is thus near-linear in $n$.
            If the input data is distributed in parallel to $M$ machines or threads, then the running time reduces by a factor of $M$, with no communication between the machines,
            except for transmitting their current coreset to a main server.
            If linear $(O(n))$ memory is allowed (e.g. using hard-drive) then maintaining the coreset after deletion of an input point is also possible in $\log ^{O(1)}n$ update time.
            In Section~\ref{sec Coresets for Streaming} we define the notion of \emph{mergable coresets} and provide a generic framework of independent interest on how to compute them in the streaming model.
        \item Experimental results, as well as comparison to previous coresets and uniform sampling are demonstrated on public datasets.
            As expected from many previous coresets experiments, e.g.~\cite{feldman2011scalable,lucic2017training},
            the theoretical upper bound for the worst case analysis are significantly pessimistic compared to real-world data.
            In particular, we ignored both the theoretical assumptions on the input
            (bounded and integer coordinate) and the $k$-GMMs (and their eigenvalues).
            Nevertheless, running existing optimization algorithms on the suggested coresets improves the approximation error up to a factor of $40$ compared to the state of the art.
        \item Open code of our coreset construction is provided to the community
            in order to reproduce and extend our preliminary experiments,
            and for the open problems and future research that are suggested in Section~\ref{sec Conclusion and Open Problems}.
        \end{enumerate}

        \section{Novel Technique: Reduction to Projective Clustering.\label{novel}}
            Our main technical result is the proof of Lemma~\ref{lemma cost_inf to dist_inf}.
            It describes general reduction from  $\ell_{\infty}$-coresets for the family of $k$-GMMs to $\ell_{\infty}$-coresets for projective clustering.
            More precisely, we first pad every $d$-dimensional point in the input set $D$ with zeroes and obtain a set $P$ in a $2d+1$ dimensional space.
            Next, we construct a $(1/3)$-coreset $C\subseteq P$ for projective clustering of $P$ using any existing algorithm (there is no assumption on the construction algorithm).
            That is, for every possible set $S$ of $k$ affine subspaces of $\REAL^{2d+1}$, the farthest point from $S$ in $P$ is at most $(1+1/3)$ times farther than the farthest point in the coreset $C$.
            Finally, we remove the zeroes from each point of $C$ to obtain a point in $D$ which we proved to be an $\ell_{\infty}$-coreset for $k$-GMM in Theorem~\ref{hyp2}.

            As explained in Section~\ref{sec Road Map}, this main result is combined with few existing coreset techniques:
            \begin{enumerate}
            \renewcommand{\labelenumi}{\theenumi}
            \renewcommand{\theenumi}{(\roman{enumi})}
            \item From maximum to sum of distances, with $\eps$ instead of constant factor approximation.
            \item From likelihood to $\phi$ error fitting.
            \item From mixture of $k$-Gaussians to $k$ subspaces.
            \item From inefficient $O(n^2)$ time construction to near-linear $n\log^{O(1)}n$ construction that also supports streaming,
            distributed and dynamic computations; see Section~\ref{sec Coresets for Streaming}.
            \end{enumerate}

            Since our coreset construction itself (ignoring the proofs) has nothing to do directly with the $k$-GMM problem or its corresponding Radial Basis Network,
            we believe that it can be used for many other non-convex kernels or networks.

    \section{Road Map}\label{sec Road Map}
            \tikzstyle{decision} = [ellipse, draw, fill=blue!20, text centered, minimum height=4.5em]
            \tikzstyle{block} = [rectangle, draw, fill=blue!20, text centered, minimum height=4.5em, rounded corners]
            \tikzstyle{cloud} = [rectangle, draw, fill=red!20,  text centered, minimum height=4.5em, rounded corners]
            \tikzstyle{line} = [draw, -latex']
            \renewcommand{\baselinestretch}{1.0}
            \begin{figure*}
            \centering
                \begin{tikzpicture}[node distance = 2cm, auto]
                    \node [block] (a) {\makecell[c]{$k$-GMM \\ $\varepsilon$-coreset for log-likelihood $L(\cdot)$\\ Definition~\ref{coresetdef}}};
                    \node [block, below of=a, node distance=3cm] (b) {\makecell[c]{From $k$-GMM to $\phi(\cdot)$\\$\varepsilon$-coreset for $\phi(\cdot)$ \\ Definition~\ref{coresetdef2}}};
                    \node [block, below of=b, node distance=3cm] (c) {\makecell[c]{From $\phi(\cdot)$ to $k$-SMM \\ $\varepsilon$-coreset for $\cost(\cdot)$ \\ Section~\ref{sec:three}}};
                    \node [cloud, below of=c, node distance=3cm] (d) {\makecell[c]{Coreset using Sensitivity and VC-dimension. \\ Definitions~\ref{sensitivity}~\ref{vdim}  }};
                    \node [cloud, below of=d, node distance=3cm] (e) {\makecell[c]{From Sensitivity to $\ell_\infty$-Coreset \\ Section~\ref{sec From Sensitivity to ell infty}}};
                    \node [cloud, left of=e, node distance=5cm] (f) {\makecell[c]{VC-dimension Bound\\ Section~\ref{sec Bound on the VC-Dimension}}};
                    \node [block, below of=e, node distance=3cm] (g) {\makecell[c]{From $k$-SMM to Projective Clustering $\ell_{\infty}$-Coreset\\ $\cost_{\infty}$ to $\dist_{\infty}$ \\ Section~\ref{sec:proj}  }};
                    \node [cloud, below of=g, node distance=3cm] (h) {\makecell[c]{From off-line Coresets to Streaming Coresets \\ Section~\ref{sec Coresets for Streaming}}};
                    \path [line] (a) -- node {Section~\ref{sec Likelihood to phi approx}} (b);
                    \path [line] (b) -- node {Lemma~\ref{lemma2}} (c);
                    \path [line] (c) -- node {Theorem~\ref{supsampe}} (d);
                    \path [line] (d) -- node {Lemma~\ref{sensitivitylemmasquareddistances2},Theorem~\ref{sensitivitylemmasquareddistances}} (e);
                    \path [line] (d) -- node {} (f);
                    \path [line] (e) -- node {Lemma~\ref{lemma cost_inf to dist_inf}} (g);
                    \path [line] (g) -- node {Algorithm~\ref{algsup}} (h);

                \end{tikzpicture}
            \caption[Article Proof Flow Chart]
            {
                Road-map diagram of the techniques and reductions in this thesis.
            } \label{fig road map}
            \end{figure*}
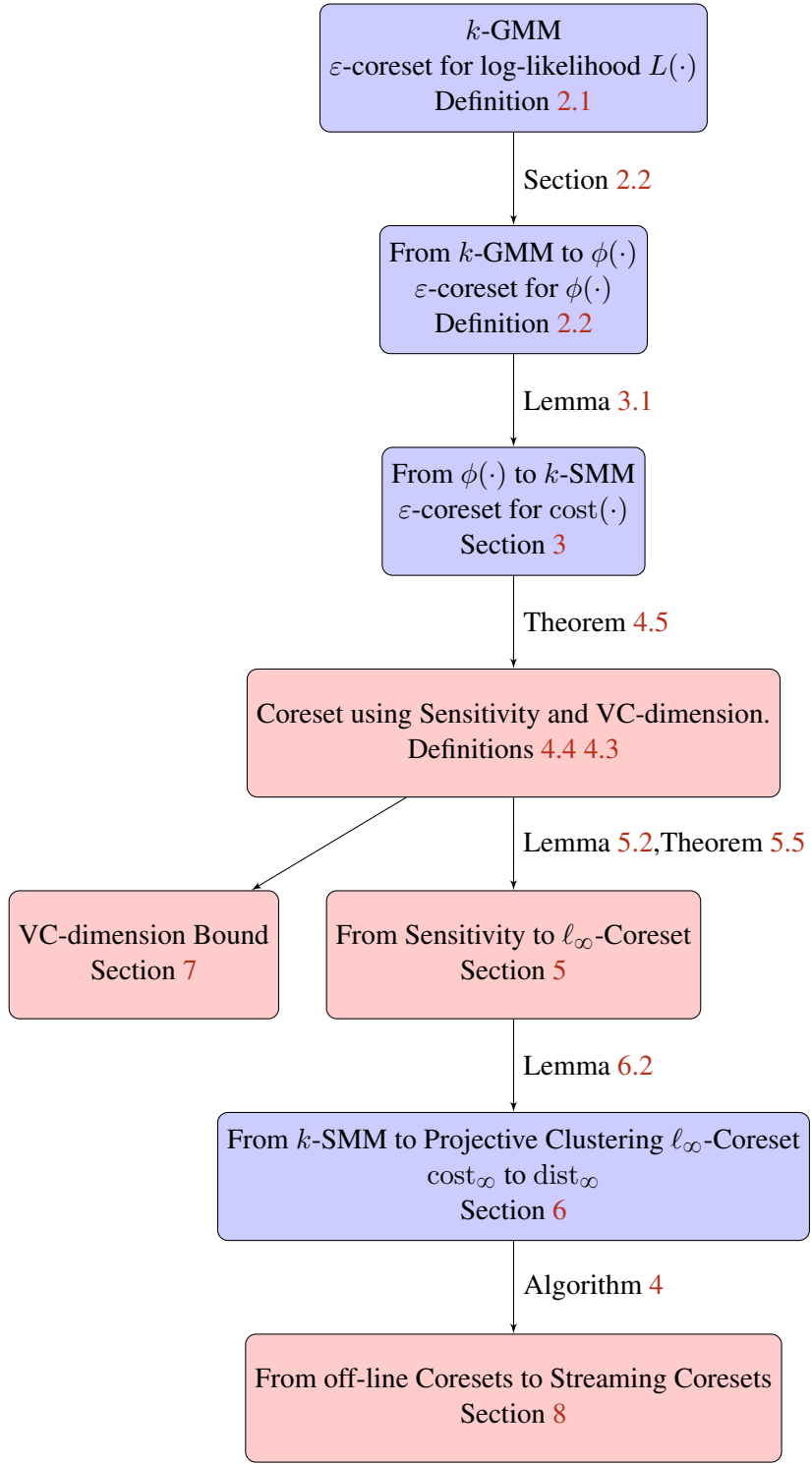
            \renewcommand{\baselinestretch}{1.67}

        To obtain small coresets efficiently,
        we suggest a framework that combines several coreset techniques and the following related reductions that are summarized in Fig.~\ref{fig road map}.

        \paragraph{From $L$ to $\phi_{\xi}$. }
            The reduction from the likelihood $L$ to the cost function $\phi$ was suggested in~\cite{feldman2011scalable} as explained in Section~\ref{sec From Likelihood to phi}.
            However the eigenvalues of the Gaussians are restricted to be in $[\xi,1/\xi]$,
            or in $[1/(2\pi),2\pi]$ for approximating $\phi$ and $L$ respectively.
            In Section~\ref{sec Likelihood to phi approx} we generalize $\phi$ to a related loss function $\phi_{\xi}$ that has similar properties to $\phi$.
            However, by using $\phi_{\xi}$ for an arbitrarily small constant $\xi>0$, instead of $\phi=\phi_0$ as in~\cite{lucic2017training},
            we obtain in the next sections a coreset that approximates \emph{any} $k$-GMM, with no upper/lower bound on its eigenvalues.
            The price is relatively small: the lower bound for the case of the non-negative likelihood approximation $L$ increases
            from $e^0/(2\pi)\sim 0.11$ to $e^{\xi}/(2\pi)\sim 0.11 e^\xi$ which is larger by an arbitrarily small constant that depends on $\ta$.

        \paragraph{Importance sampling. }
            The importance or \emph{sensitivity} of an input point $p\in D$ with respect to a query $\theta$ ($k$-GMM in our case) is its relative contribution to the overall loss;
            See Definition~\ref{sensitivity}.
            In the case of $k$-GMMs, it is the negative log-likelihood $L(\theta\mid \br{p})$ divided by its sum $L(\theta \mid D)$ over $p\in D$.
            The sensitivity~\cite{langberg2010universal,feldman2011unified} $s(p)$
            of a point is the maximum of this ratio over all possible $k$-GMMs.

            It was proven in~\cite{FL11,braverman2016new} that an $\eps$-coreset can be obtained for a given problem
            (query space as in Definition~\ref{def::query space}), with high probability,
            by sampling points i.i.d. from the input with respect to their sensitivity.
            Then we assign for each sample point a weight that is inverse proportional to its sensitivity.
            That is, a point with high sensitivity should be sampled with high probability and get a small weight.
            The number of sampled points should be proportional to $d't/\eps^2$ where $d'$ is related to the VC-dimension of the query space, and $t=\sum_{p\in P}s(p)$
            is the sum of sensitivities, called \emph{total sensitivity}.
            See Section~\ref{sec From Coresets to Sensitivity} for formal definitions and details.

            More generally, an upper bound to the sensitivity $s(p)$ of each point suffices, where a higher bound would yield higher sum $t$ and thus larger coreset.
            It is relatively easy to bound the corresponding VC-dimension as explained e.g. in~\cite{anthony2009neural}.
            We generalize the existing bound to handle weighted input points (that are needed for the streaming merge-reduce in Section~\ref{sec Coresets for Streaming}) and fixed some technical issues as explained in Section~\ref{sec Weighted Input}.

            This technique reduces the problem of computing a coreset for sum of likelihoods ($\norm{\cdot}_1$)
            to the problem of computing a bound on the sensitivity of each input point, so that the sum $t$ would still be small in order to get a small coreset.
            This is the challenge in the next sections.

        \paragraph{From $\norm{\cdot}_\infty$ to sensitivity bound. }
            A general approach for computing the above sensitivity bounds is to use an $\eps'$-coreset for the $\norm{\cdot}_{\infty}$
            version of the problem (max instead of sum of distances), where $\eps'=2$ or any other constant~\cite{feldman2013turning,VX12-soda}. The idea is to compute this coreset, remove it from the input, and continue recursively on the remaining points.
            The sensitivity of a point is then proved to be roughly $1/i$ where $i$ is the iteration that it was removed from the input set.
            The total sensitivity is then approximately $\sum_{i=1}^n 1/i=O(\log n)$ multiplied by the size of the $\norm{\cdot}_{\infty}$ coreset.
            See Section~\ref{sec From Sensitivity to ell infty} for exact and formal details.

            An important observation is that $\eps'$-coreset for $\norm{\cdot}_\infty$ suffices to obtain
            $\eps$-coreset for $\norm{\cdot}_1$ even for $\eps'>1$ and $\eps\ll 1$.
            For example, this may enable us to obtain coresets for $k$-means/median of size polynomial in $d$ and $k$ via this reduction with
            $\eps'\in O(1)$ (that corresponds to semi-spherical Gaussians as explained in~\cite{lucic2017training}, even though the corresponding $\eps'$-coreset for $k$-center is exponential in $d$ and $k$ for every $\eps'<1$~\cite{agarwal2005geometric}.

            The main disadvantage of this approach is the fact that we need to compute different $\norm{\cdot}_\infty$ coresets $O(n)$ times which results in $O(n^2)$ running time.
            See Lemma~\ref{sensitivitylemmasquareddistances2}.
            To obtain linear time, we use the streaming merge-reduce tree in Section~\ref{sec Coresets for Streaming}, even when the input is given off-line.
            We then need to compute the coresets only on small subsets of the input, which gives an algorithm whose running time is near-linear in $n$.
            See Theorem~\ref{thmstream}.

            The remaining challenge is thus to compute an $\norm{\cdot}_{\infty}$ coreset for the $\phi$ loss function of Gaussian.
            To do this, we use the main technical result of this paper, which is the reduction to projective clustering that is described in Section~\ref{novel} and Chapter~\ref{sec:proj}.

\chapter{Problem Statement}
The goal of this thesis is to provide coresets for approximating both the negative log-likelihood of a given mixture of Gaussians, and its related $\phi$-cost. These are defined in this chapter.

    \paragraph{Notation. }For an integer $n\geq1$ we define $[n]=\br{1,\cdots,n}$.
    For an integer $d\geq1$, the set of $n\times d$ real matrices is denoted by $\REAL^{n\times d}$.
    An {affine subspace} $S$ in $\REAL^d$ is a linear subspace of $\REAL^d$ that may be translated from the origin,
    i.e., $S=\br{Ax+v\mid x\in \REAL^d}$ for some matrix $A\in\REAL^{d\times d}$ and a vector $v\in\REAL^d$.
    A vector (point) $v\in \REAL^d$ is a column vector unless stated otherwise.
    A concatenation of two rows $x\in\REAL^d$ and $y\in\REAL^{n}$ is denoted by $(x \mid y)\in\REAL^{d+n}$.
    The exponent of $x\in\REAL^d$ is denoted by $\exp(x)=e^x$.
    For $a,b\in \REAL$ we define the interval $(a\pm b)=[a-b,a+b]$, e.g. $(1\pm\eps)=[1-\eps,1+\eps]$.

    \paragraph{Matrix factorizations. }
        Every matrix $A\in\R ^{n\times d}$ whose rank is $r\geq1$ has a \emph{thin} Singular Value Decomposition (thin SVD) $UDV^T=A$,
        where $U\in\R ^{n\times r}$ and $V\in\R ^{d\times r}$ such that $U^TU=I$ and $V^TV=I$,
        and $D\in\R ^{r\times r}$ is a diagonal matrix whose non-zero entries are $D_{1,1}\geq\cdots\geq D_{r,r}\geq0$ and called its \emph{singular values}.
        A matrix $\Sigma\in\REAL^{d\times d}$ is called a \emph{semi-positive definite covariance matrix}
        if and only if there is a matrix $A\in\REAL^{d\times d}$ such that $\Sigma=A^TA$,
        which is called the \emph{Cholesky Decomposition} of $\Sigma$.
        Hence, $\Sigma=VD^2V^T$ is the SVD of $\Sigma$,
        and $D_{i,i}$ is the $i$th eigenvalue of $\Sigma$ for every $i\in [r]$.
        If $\Sigma$ has a full rank it is called \emph{positive definite covariance matrix}.

    \section{Likelihood of Gaussians} \label{subsec Likelihood of Gaussians}
        \paragraph{Gaussian distribution. }
            Given a covariance positive definite matrix $\Sigma\in\REAL^{d\times d}$ and a vector $\mu\in\REAL^d$,
            the \emph{density function of a multivariate distribution} $\mathcal{N}(\mu,\Sigma)$ is defined for every $p\in\REAL^d$ as
            \[
            \pr(p\mid \mu,\Sigma)=\frac{\exp(-\frac{1}{2}(p-\mu)^T\Sigma^{-1}(p-\mu))}{\sqrt{\det(2\pi \Sigma)}},
            \]
            where $\mu$ represents the mean of a Gaussian, $\Sigma$ is its covariance matrix and $\det(\cdot)$ is the determinant operator.

        \paragraph{$k$-GMM. } A \emph{$k$-Gaussian mixture model} ($k$-GMM for short) in $\REAL^d$ is an ordered set
            \[
            \theta=\big((\omega_i,\Sigma_i,\mu_i)\big)_{i=1}^k=\big((\omega_1,\mu_1,\Sigma_1),\cdots,(\omega_k,\mu_k,\Sigma_k)\big)
             \]
             of $k$ tuples, where $\mu_i\in\REAL^d$, $\mathcal{N}(\mu_i,\Sigma_i)$ is a multivariate distribution as defined above
             for every $i\in\br{1,\cdots,k}$, and $\omega=(\omega_1,\cdots,\omega_k)\in[0,1]^k$ is a \emph{distribution vector},
             i.e., whose sum is $\norm{\omega}_1=\sum_{j=1}^k\omega_j=1$,

        \paragraph{The \emph{likelihood}} of sampling $p\in\REAL^d$ from such a mixture of Gaussians is the distribution
            \begin{align}
            \label{prr}\Pr( p\mid \theta)=\sum_{i=1}^k \omega_i \pr(p\mid \mu_i,\Sigma_i),
            \end{align}
            following e.g.~\cite{feldman2011scalable} and~\cite{lucic2017training}.
            By taking $-\ln(\cdot)$ of~\eqref{prr}, we obtain the \emph{negative log-likelihood} of $x$
            \[
            L(\br{p},\theta):=-\ln\Pr( p\mid \theta)
            =-\ln\sum_{i=1}^k \omega_i \Pr(p\mid \mu_i,\Sigma_i).
            \]
            Similarly, the probability that a set $P=\br{p_1,\cdots,p_n}$ of points in $\REAL^d$ was generated i.i.d.
            from the mixture of Gaussians $\theta$ is
            \[
            \Pr(P\mid \theta)=\prod_{p\in P}\Pr(p\mid \theta),
            \]
            and the corresponding negative log-likelihood is
            \begin{equation}\label{LLDef}
            \begin{split}
            L(P,\theta)&=-\ln(\Pr( P\mid \theta))=-\ln\left(\prod_{p\in P}\Pr( p\mid \theta)\right)=-\sum_{p\in P}\ln\Pr( p\mid \theta)\\
            &=\sum_{p\in P}L(\br{p},\theta)
            =-\sum_{p\in P}\ln\sum_{i=1}^k \omega_i \Pr(p\mid \mu_i,\Sigma_i)\\
            &=-\sum_{p\in P}\ln\sum_{i=1}^k \omega_i \cdot \frac{\exp(-\frac{1}{2}(p-\mu_i)^T\Sigma^{-1}(p-\mu_i))}{\sqrt{\det(2\pi \Sigma)}}.
            \end{split}
            \end{equation}

        \paragraph{A weighted set }
            is a pair $C'=(C,w)$ where $C$ is a non-empty ordered multi-set of points in $\REAL^d$,
            and $w:D\to (0,\infty)$ is a function that maps every $p\in C$ to $w(p)\geq 0$, called the \emph{weight} of $p$.
            A weighted set $(C,\mathbf{1})$ where $\mathbf{1}$ is the weight function $w:C\to\br{1}$
            that assigns $w(p)=1$ for every $p\in C$ may be denoted by $C$ for short.
            We denote the normalized average weight by
            \begin{equation}\label{eq801}
                \overline{w}(C')=\frac{\sum_{p\in D}w(p)}{\min_{q}w(q)},
            \end{equation}
            where the minimum is over every point $q\in D$ with positive weight $w(q)$.

            We generalize the definition of $L(C,\theta)$ from~\eqref{LLDef} for such a weighted set $C'=(C,w)$ by defining the weighted sum of negative log-likelihoods,
            \begin{equation}\label{lw}
            \begin{split}
            L(C',\theta)&=\sum_{p\in C}w(p)L(\br{p},\theta)
            =-\sum_{p\in C}w(p)\ln\sum_{i=1}^k \omega_i \cdot \frac{\exp(-\frac{1}{2}(p-\mu_i)^T\Sigma^{-1}(p-\mu_i))}{\sqrt{\det(2\pi \Sigma)}}.
            \end{split}
            \end{equation}

            Since some of the coresets in this paper have restrictions on the approximated Gaussians,
            we define the set $\vartheta_k(0)$ of all possible $k$-GMMs,
            and $\vartheta_k(\xi)$ as those $k$-GMMs $\theta=\big((\omega_i,\Sigma_i,\mu_i)\big)_{i=1}^k$ in $\REAL^d$ such that all
            the eigenvalues of $\Sigma_i$ are at least $\ta$, for every $i\in[k]$.

            We also request that the sum of weights of the points in the coreset
            would be the same as the original sum $n$ of points.
            The latter property will be used in the streaming version
            in Section~\ref{sec Coresets for Streaming} where we construct
            "coresets for coresets" and wish to keep the total weight constant.

            We are now ready to define an $\eps$-coreset for our main problem of approximating every $k$-GMM in a given set $\vartheta_\xi(k)$ of $k$-GMMs whose eigenvalues are bounded by $\xi$.
            \begin{definition}[$\eps$-coreset for log-likelihood\label{coresetdef}]
            Let $k\geq1$ be an integer, $\ta'>0$, and $\vartheta_k({\ta'})$ denote the union over every
            $k$-GMM $\theta=\big((\omega_i,\Sigma_i,\mu_i)\big)_{i=1}^k$ in $\REAL^d$ such that all the eigenvalues of $\Sigma_i$ are at least $\ta'$, for every $i\in[k]$.

            Let $P$ be a set of $n$ points in $\REAL^d$ and $\eps>0$. A weighted set $C'=(C,u)$
            is an $\eps$-coreset of $(P,\vartheta({\ta'}),L,\norm{\cdot}_1)$  if $C\subseteq P$, $\sum_{q\in C}u(q)=n$,
            and for \emph{every} $k$-GMM $\theta\in \vartheta_k({\ta'})$ we have
            \begin{equation}\label{core}
            (1-\eps)L(C',\theta) \leq L(P,\theta)\leq (1+\eps)L(C',\theta).
            \end{equation}
            \end{definition}
            Although $L$ is defined to be negative log-like-likelihood, the coreset $C'$ above approximated the (non-negative) log-likelihood $(-L)$ since~\eqref{core} implies
            \[
            |-L(P,\theta)-\big(-L(C',\theta)\big)|=|L(P,\theta)-L(C',\theta)|\leq \eps L(C',\theta)=\eps |-L(C',\theta)|.
            \]

    \section{From Likelihood to $\phi$ approximation} \label{sec From Likelihood to phi}\label{sec Likelihood to phi approx}
        Inspired by~\cite{feldman2011scalable}, let $\ta>0$ and define
        \begin{equation}\label{zdef}
        Z(\theta):=\sum_{i=1}^k\frac{ \omega_i e^{\ta}}{\sqrt{\det(2\pi \Sigma_i)}}.
        \end{equation}
        For every $i\in[k]$, let
        \begin{equation}\label{ot}
        \omega'_i:=\frac{\omega_i e^\ta }{Z(\theta)\sqrt{\det(2\pi \Sigma_i)}}.
        \end{equation}
        Hereby $Z$ is a normalizer that asserts that $\omega'$ is a distribution vector,
        and a sufficiently small constant $\ta$ can be considered as scaling of the data by $e^{\ta}\sim (1+\ta)$
        or multiplicative approximation to the likelihood as will be shown in Observation~\ref{obss}. Finally, we define for every $p\in \REAL^d$
        \begin{equation}\label{phi}
        \phi_{\ta}(\br{p},\theta)
        =-\ln\sum_{i=1}^k \omega'_i\exp\left(-\frac{1}{2}(p-\mu_i)^T\Sigma_i^{-1}(p-\mu_i)-\ta\right),
        \end{equation}
        and $\phi(\br{p},\theta)=\phi_{\ta}(\br{p},\theta)$ for short.

        The relation between the negative log-likelihood and $\phi$ as was shown in~\cite{feldman2011scalable} is
        \[
        \begin{split}
        L(\br{p},\theta)&=-\ln\sum_{i=1}^k \omega_i f_{(\nu_i,\Sigma_i)}(p)\\
        &=-\ln\sum_{i=1}^k \omega_i\frac{\exp\left(-\frac{1}{2}(p-\mu_i)^T\Sigma_i^{-1}(p-\mu_i)\right)}{\sqrt{\det(2\pi \Sigma_i)}}\\
        &=-\ln Z(\theta)\sum_{i=1}^k \omega'_ie^{-\ta} \exp\left(-\frac{1}{2}(p-\mu_i)^T\Sigma_i^{-1}(p-\mu_i)\right)\\
        &=-\ln Z(\theta)-\ln\sum_{i=1}^k \omega'_i\exp\left(-\frac{1}{2}(p-\mu_i)^T\Sigma_i^{-1}(p-\mu_i)-\ta\right)\\
        &=-\ln Z(\theta)+\phi(\br{p},\theta).
        \end{split}
        \]
        Letting $\phi(P,\theta):=\sum_{p\in P}\phi(\br{p},\theta)$ yields
        \begin{equation}\label{LL}
        L(P,\theta)=\sum_{p\in P}L(\br{p},\theta)=-n\ln Z(\theta)+\phi(P,\theta).
        \end{equation}
        For a weighted set $(C,w)$ we generalize the definition of $\phi(C,\theta)$ as we did in~\eqref{lw}, by letting
        \[
        \phi((C,w),\theta)=\sum_{x\in C}w(x)\phi(x,\theta)
        =\sum_{p\in C}-w(p)\ln\sum_{i=1}^k \omega'_i\exp\left(-\frac{1}{2}(p-\mu_i)^T\Sigma_i^{-1}(p-\mu_i)-\ta\right).
        \]

        The coreset for the $\phi$ cost function is similar to the log-likelihood as explained above.
        Nevertheless, for the case of $\phi$ we would have coresets for any $k$-GMM. Moreover,
        it would be easier to work with this function in the rest of the paper.
        The justification and reduction for the likelihood is explained in the next section.

        \begin{definition}[$\eps$-coreset for $\phi$\label{coresetdef2}]
        Let $P$ be a set of $n$ points in $\REAL^d$ and $\eps,\ta>0$. A weighted set $C'=(C,u)$ is an $\eps$-coreset of
        $(P,\vartheta_k(0), \phi_{\ta},\norm{\cdot}_1)$ if $\sum_{q\in C}u(q)=n$, $C\subseteq P$, and for \emph{every} $k$-GMM $\theta$ we have
        \[
        \phi_{\ta}(C',\theta)\leq \phi_{\ta}(P,\theta)\leq (1+\eps)\phi_{\ta}(C',\theta).
        \]
        \end{definition}

        As noted in~\cite[Section 2]{feldman2011scalable}, $Z(\theta)$ can be computed \emph{exactly} and independently of the set $P$.
        Furthermore, the function $\phi$ captures all dependencies of $L(P,\theta)$ on $\theta$.
        However, a $(1+\eps)$ multiplicative factor approximation for $\phi(P,\theta)$ is not such an approximation for the likelihood
        (or negative log-likelihood) $L(P,\theta)$: while the left hand side of~\eqref{LL} $L(P,\theta)$ is always non-negative,
        its right hand side $-n\ln Z(\theta)+\phi(P,\theta)$ may be the sum of negative and positive term.
        In fact, such an approximation for the likelihood is impossible in the sense that it can be reduced to no approximation ($\eps=0$)
        via scaling of the input data to be contained in an infinitesimally small ball as explained in~\cite{feldman2011scalable}.

        However, if either the input set $P$ or the $k$-GMM $\theta$ is scaled such that $Z(\theta)\leq 1$,
        then the right hand side of~\eqref{LL} is non-negative and a $(1+\eps)$-approximation to $\phi(P,\theta)$ is indeed a $(1+\eps)$-approximation to $L(P,\theta)$.
        This occurs e.g. if all the eigenvalues $\sigma_{i,1},\cdots,\sigma_{i,d}$ of $\Sigma_i$ are greater than $e^\ta/(2\pi)\leq 0.18$ for every $i\in[k]$.
        More generally, $Z(\theta)\leq1$ if $\sqrt{\det(2\pi \Sigma_i)}\geq e^{\ta}$ which always hold in this case since
        \[
        \begin{split}
        \det(2\pi \Sigma_i)&=\exp(\ln \det(2\pi \Sigma_i))
        =\exp(\ln \prod_{j=1}^d (2\pi \sigma_{i,j}))\\
        &=\exp(\sum_{j=1}^d \ln  (2\pi \sigma_{i,j}))
        =\exp( d\ln(2\pi)+\sum_{j=1}^d  \ln(\sigma_{i,j})
        \geq e^{\ta}.
        \end{split}
        \]
        We obtain a generalization of~\cite[Theorem 14]{lucic2017training} for $\ta\in(0,1/10]$ which we summarize as follows.
        That is, if $Z(\theta)\leq 1$, e.g. all the eigenvalues of $\Sigma_i$ are greater than $0.18$, for every $i\in[k]$, then for every weighted set $C'$,
        \[
        \phi(C',\theta)\leq \phi(P',\theta)\leq (1+\eps)\phi(C',\theta)
        \]
        implies
        \[
        L(C',\theta)\leq L(P',\theta)\leq (1+\eps)L(C',\theta).
        \]
        This gives the following reduction that motivates the construction of coresets of $(P,\vartheta_k(0),\phi_{\ta},\norm{\cdot}_1)$ for the rest of the paper.
        \begin{observation}\label{obss}
        Let $P\subseteq\REAL^d$, $\ta>0$, $\ta'=\frac{e^{\ta}}{2\pi}$ and $\eps>0$.
        Then, for every $z\in [0,\infty]$, an $\eps$-coreset of $(P,\vartheta_k(0),\phi_{\ta},\norm{\cdot}_z)$ is an $\eps$-coreset of $(P,\vartheta_k(\ta'),L,\norm{\cdot}_z)$.
        \end{observation}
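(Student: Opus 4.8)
The plan is to exploit the exact additive decomposition of the negative log-likelihood into the $\phi$-cost plus a term that depends only on $\theta$, established in~\eqref{LL}. Recall that for a single point $L(\br{p},\theta)=\phi(\br{p},\theta)-\ln Z(\theta)$, and aggregating over the $n$ input points gives $L(P,\theta)=-n\ln Z(\theta)+\phi(P,\theta)$. The crucial observation is that the identity holds verbatim for a weighted coreset $C'=(C,u)$: because the coreset definition enforces $\sum_{q\in C}u(q)=n$, the weighted total of the per-point shifts $-\ln Z(\theta)$ again contributes exactly $-n\ln Z(\theta)$, so $L(C',\theta)=-n\ln Z(\theta)+\phi(C',\theta)$. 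Hence $L$ and $\phi$ differ, for \emph{both} $P$ and $C'$, by one and the same point-independent quantity $A:=-n\ln Z(\theta)$. This is precisely why the definitions insist on weight-preservation $\sum u = n$.

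First I would fix an arbitrary $\theta\in\vartheta_k(\ta')$ and record two facts. Since $\vartheta_k(\ta')\subseteq\vartheta_k(0)$, the $\phi$-coreset hypothesis applies to this $\theta$, giving the sandwich $\phi(C',\theta)\leq \phi(P,\theta)\leq (1+\eps)\phi(C',\theta)$. Moreover, because every eigenvalue of each $\Sigma_i$ is at least $\ta'=e^{\ta}/(2\pi)$, the determinant estimate carried out just before the statement yields $Z(\theta)\leq 1$, and therefore $A=-n\ln Z(\theta)\geq 0$. I would also invoke the nonnegativity of $\phi$ (via Jensen's inequality, as noted in the introduction), so that $\phi(C',\theta)\geq 0$ and hence $L(C',\theta)=A+\phi(C',\theta)\geq 0$.

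With these ingredients the reduction is a two-line algebraic sandwich obtained by adding the common nonnegative constant $A$ to the $\phi$-bounds. For the lower bound, $L(P,\theta)=A+\phi(P,\theta)\geq A+\phi(C',\theta)=L(C',\theta)\geq (1-\eps)L(C',\theta)$, using $L(C',\theta)\geq 0$. For the upper bound, $L(P,\theta)=A+\phi(P,\theta)\leq A+(1+\eps)\phi(C',\theta)\leq (1+\eps)\bigl(A+\phi(C',\theta)\bigr)=(1+\eps)L(C',\theta)$, where the middle step uses $A\leq (1+\eps)A$, valid since $A\geq 0$ and $\eps>0$. Together these give exactly~\eqref{core}, so $C'$ is an $\eps$-coreset of $(P,\vartheta_k(\ta'),L,\norm{\cdot}_1)$.

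I expect the only delicate point to be the passage from the plain sum ($z=1$) written above to an arbitrary aggregation $\norm{\cdot}_z$ with $z\in[0,\infty]$: there one must check that a uniform per-point shift by the nonnegative constant $-\ln Z(\theta)$ still converts the multiplicative $\phi$-sandwich into a multiplicative $L$-sandwich under the relevant norm. For the two cases the paper actually uses, $z=1$ (sum) and $z=\infty$ (maximum), this is immediate, since the shift commutes through both $\sum$ and $\max$ and nonnegativity of $A$ preserves the bounds exactly as above. The remaining input, $Z(\theta)\leq 1$ for $\theta\in\vartheta_k(\ta')$, is the determinant estimate already spelled out in the text, so no new work is needed there; the heart of the argument is simply that the discrepancy between $L$ and $\phi$ is a single nonnegative constant shared by the full set and the coreset.
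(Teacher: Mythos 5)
Your proof is correct and follows essentially the same route as the paper, which justifies Observation~\ref{obss} in the text immediately preceding it: the decomposition $L(P,\theta)=-n\ln Z(\theta)+\phi(P,\theta)$ from~\eqref{LL}, the eigenvalue bound $\ta'=e^{\ta}/(2\pi)$ forcing $Z(\theta)\leq 1$ so the shift $-n\ln Z(\theta)$ is nonnegative, and adding this common constant to the multiplicative $\phi$-sandwich. Your write-up is in fact slightly more careful than the paper's sketch, since you make explicit the two points the paper leaves implicit --- that the weight-preservation condition $\sum_{q\in C}u(q)=n$ is what makes the same constant appear on the coreset side, and that $\phi_{\ta}\geq 0$ (indeed $\phi_{\ta}\geq\ta$) keeps both sides of the sandwich valid.
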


\chapter{From $k$-GMM to $k$-SMM\label{sec:three}}
    In this section we reduce the family of $k$-GMM from machine learning to $k$-SMM that are more related to computational geometry,
    and projective clustering in particular. The Euclidean distance between a point $p\in\REAL^d$ and a set $S\subseteq \REAL^d$ is denoted by $$\dist(p,S)=\inf_{s\in S}\norm{p-s}_2.$$
    Its squared is denoted by $\dist^2(p,S)=\big(\dist(p,S)\big)^2$.

    Let $k\geq1$ be an integer.
    A \emph{subspace $k$-mixture model} in $\REAL^d$, or $k$-SMM for short, is a tuple $y=(W,\omega_1,\cdots,\omega_k,S_1,\cdots,S_k)$ where $W\geq 1$,
    $\omega$ is a distribution vector, and each $S_i$ is an affine linear subspace of $\REAL^d$, for every $i\in[k]$.
    We also define $S(y)=\bigcup_{i=1}^k S_i$, so $\dist(p,S(y))=\dist(p,S_1\cup\cdots \cup S_k)=\min_{i\in [k]} \dist(p,S_i)$.

    The squared distance from a point $p\in\REAL^d$ to the $k$-SMM $y$, is defined by
    \[
        \cost(p,y)=-\ln \sum_{i=1}^k \omega_i \exp(-W\dist^2(p,S_i)),
    \]
    which is similar to $\phi(\br{p},\theta)$ as defined in~\eqref{phi}.

    The following corollary follows from replacing $d$ with $2d+1$, $j=d+1$, $r=d+1$, $m=d$, and $k=d-1$ in~\cite{cute}.
    It allows us to compute coresets for subspaces instead of GMMs,
    while assuring a lower bound on the distance between the farthest point from the subspace.

    \begin{lemma}[From $k$-GMM to $k$-SMM]\label{lemma2}
    Let $p\in\REAL^d$, and $p'=(p^T \mid 0,\cdots,0)^T\in\REAL^{2d+1}$.
    Let $\ta\geq 0$, and let $\theta$ be a $k$-GMM in $\REAL^d$.
    Then there is a $k$-SMM $y=(W,\omega_1,\cdots,\omega_k,S_1,\cdots,S_k)$ in $\REAL^{2d+1}$ such that \begin{equation}\label{WW123}
                W\dist^2(p',S(y))\geq \ta,
    \end{equation}
    and
    \[
    \phi_{\ta}(\br{p}, \theta)=\cost(\br{p'},y).
    \]
    \end{lemma}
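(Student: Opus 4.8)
The plan is to build the $k$-SMM $y$ directly from $\theta$ by matching the two cost functions summand by summand. Comparing $\cost(\br{p'},y)=-\ln\sum_i \omega_i\exp(-W\dist^2(p',S_i))$ with $\phi_{\ta}(\br{p},\theta)=-\ln\sum_i \omega'_i\exp(-\tfrac12(p-\mu_i)^T\Sigma_i^{-1}(p-\mu_i)-\ta)$, it suffices to set the SMM weights to the reweighted distribution $\omega_i:=\omega'_i$ from~\eqref{ot} and to construct, for each $i$, an affine subspace $S_i\subseteq\REAL^{2d+1}$ together with a single shared scalar $W\geq1$ satisfying
\[
W\dist^2(p',S_i)=\tfrac12(p-\mu_i)^T\Sigma_i^{-1}(p-\mu_i)+\ta\qquad\text{for every }p\in\REAL^d.
\]
Once this per-subspace identity holds, the lower bound~\eqref{WW123} is immediate: the quadratic form is nonnegative, so each $W\dist^2(p',S_i)\geq\ta$, whence $W\dist^2(p',S(y))=\min_i W\dist^2(p',S_i)\geq\ta$.

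The heart of the argument is realizing an arbitrary positive-definite quadratic form (this is exactly where ``any shape'' enters) as a plain Euclidean squared distance. I would let $\Sigma_i=V_i\Lambda_iV_i^T$ be an eigendecomposition with orthonormal eigenvectors $v_{i,1},\dots,v_{i,d}$ and eigenvalues $\sigma_{i,1},\dots,\sigma_{i,d}$, and write $a_j:=v_{i,j}^T(p-\mu_i)$, so the target form is $\sum_j a_j^2/(2\sigma_{i,j})+\ta$. Splitting $\REAL^{2d+1}=\REAL^d\times\REAL^d\times\REAL$ with $p'=(p,0,0)$, I define $S_i$ as the $d$-dimensional graph $\br{(x,(\alpha_{i,j}v_{i,j}^T(x-\mu_i))_{j=1}^d,\gamma)\mid x\in\REAL^d}$, where the slopes $\alpha_{i,j}\geq0$ and the common height $\gamma:=\sqrt{\ta/W}$ are still to be fixed. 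Computing the distance reduces, after the isometric change of variable $b_j:=v_{i,j}^T(x-\mu_i)$ (orthonormality gives $\norm{p-x}_2^2=\sum_j(a_j-b_j)^2$), to $d$ independent scalar minimizations $\min_{b}\big[(a_j-b)^2+\alpha_{i,j}^2 b^2\big]=\tfrac{\alpha_{i,j}^2}{1+\alpha_{i,j}^2}a_j^2$, yielding $\dist^2(p',S_i)=\sum_j \tfrac{\alpha_{i,j}^2}{1+\alpha_{i,j}^2}a_j^2+\gamma^2$.

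It then remains to solve $\tfrac{\alpha_{i,j}^2}{1+\alpha_{i,j}^2}=\tfrac{1}{2W\sigma_{i,j}}$ for the slopes and $\gamma^2=\ta/W$ for the offset. This is the step I expect to be the genuine obstacle, and it is precisely where the shared scalar $W$ must be chosen with care: the slope equation has a solution $\alpha_{i,j}=\sqrt{c_{i,j}/(1-c_{i,j})}$ only when $c_{i,j}:=1/(2W\sigma_{i,j})<1$ for \emph{every} $i$ and $j$ simultaneously. Since $\theta$ is fixed with finitely many positive eigenvalues, I would take $W:=\max\br{1,\ 1/\min_{i,j}\sigma_{i,j}}$, which forces $c_{i,j}\leq 1/2<1$ for all $i,j$ while respecting $W\geq1$; the freedom to enlarge $W$ is exactly what lets the construction absorb arbitrary eigenvalue ratios into the subspace geometry.

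Substituting back gives $W\dist^2(p',S_i)=W\sum_j c_{i,j}a_j^2+W\gamma^2=\sum_j a_j^2/(2\sigma_{i,j})+\ta=\tfrac12(p-\mu_i)^T\Sigma_i^{-1}(p-\mu_i)+\ta$, establishing both the displayed identity and~\eqref{WW123}. Plugging this, together with $\omega_i=\omega'_i$, into the definition of $\cost$ reproduces $\phi_{\ta}(\br{p},\theta)$ term by term, which completes the argument. The dimension bookkeeping is tight and self-explanatory: $d$ coordinates carry $p$, a further $d$ encode the rescaled eigendirections $v_{i,j}$, and the final coordinate supplies the additive $\ta$-offset through $\gamma$, for a total of $2d+1$ (matching the codimension $d+1$, i.e. $j=d+1$, used when invoking~\cite{cute}).
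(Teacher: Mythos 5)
Your construction is correct, and it shares the paper's overall skeleton: pad $p$ to $p'\in\REAL^{2d+1}$, take the SMM weights to be the reweighted $\omega'_i$ of~\eqref{ot}, pick one shared $W$, realize $\frac12(p-\mu_i)^T\Sigma_i^{-1}(p-\mu_i)$ as $W$ times a squared Euclidean distance to a $d$-dimensional affine subspace, and absorb the additive $\ta$ by translating each subspace a distance $\sqrt{\ta/W}$ along the fresh $(2d+1)$-st axis, so that~\eqref{WW123} follows from nonnegativity of the quadratic form. Where you genuinely diverge is in how the key subspace is built. The paper takes a Cholesky factor $A$ with $A^TA=\Sigma_i^{-1}/(2W)$, chooses $W$ so that $\norm{A}\leq 1$, completes $A$ to an orthonormal $2d\times d$ frame $U=B[A\mid L]^T$ via $LL^T=I-AA^T$, and takes $T_i$ to be the orthogonal complement of the column space of $U$, giving $\dist^2(Ep,T_i+E\mu_i)=\norm{A(p-\mu_i)}^2$ in one basis-free identity. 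You instead diagonalize $\Sigma_i=V_i\Lambda_iV_i^T$, write the subspace explicitly as a tilted graph over the first $d$ coordinates, and compute the distance by $d$ decoupled one-dimensional minimizations, solving $\alpha_{i,j}^2/(1+\alpha_{i,j}^2)=1/(2W\sigma_{i,j})$ per eigendirection; your feasibility constraint $c_{i,j}<1$ is exactly the paper's constraint $\norm{A}\leq 1$ in scalar form, and your $W$ plays the role of the paper's. Your route is more elementary and self-contained (no orthonormal-completion bookkeeping, no appeal to~\cite{cute}), and it has one concrete advantage: your choice $W=\max\br{1,\ 1/\min_{i,j}\sigma_{i,j}}$ explicitly enforces the requirement $W\geq 1$ from the definition of a $k$-SMM, whereas the paper's choice $W=\max_i\sigma_{i,1}^2/2$ can fall below $1$ when all the singular values of the matrices $\Sigma_i^{-1}$ are small, a point the paper's proof glosses over. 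What the paper's matrix formulation buys in exchange is a statement that avoids committing to an eigenbasis and plugs directly into the cited projective-clustering machinery and its $j$-subspace generalizations.
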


    \begin{proof}
    Identify $\theta=\big((\omega_i,\Sigma_i,\mu_i)\big)_{i=1}^k$. For every $i\in[k]$, let $\sigma_{i,1}$ denote the largest singular value of $\Sigma^{-1}_i$, and
    \begin{equation}\label{WW}
    W=\frac{\max_{i\in[k]} \sigma^2_{i,1}}{2}.
    \end{equation}

    Put $i\in[k]$. Since $\Sigma_i$ is a positive definite covariance matrix, $\Sigma^{-1}_i/(2W)$ has a Cholesky Decomposition
    \begin{equation}\label{defata}
    A^TA=\frac{\Sigma^{-1}_i}{2W},
    \end{equation}
    for some $A\in\REAL^{d\times d}$. The largest singular value of $A$ is bounded by $1$, since for every unit vector $z\in\REAL^d$,
    \begin{equation}\label{AA}
    \norm{Az}^2=z^TA^TAz=\frac{z^T\Sigma^{-1}_iz}{2W}\leq \frac{\sigma^2_{i,1}}{2W}\leq 1,
    \end{equation}
    where the last inequality is by~\eqref{WW}.

    Let $L\in\REAL^{d\times d}$ be a matrix such that $LL^T=I-AA^T$ is the Cholesky decomposition of $I-AA^T$.
    It exists, since $I-AA^T$ is semi-positive definite matrix. Indeed, let $QDV^T=A$ be the Singular Value Decomposition (SVD) of $A$,
    and $D_{j,j}$ denote the $j$th diagonal entry of $D$ (the $j$th largest singular value of $A$) for every $j\in[d]$.
    By~\eqref{AA} we have that $D_{j,j}\leq  1$, so we can define the diagonal matrix $\sqrt{I-DD^T}\in\REAL^{d\times d}$,
    whose $j$th diagonal entry is $\sqrt{1-D_{j,j}^2}$ for every $j\in[d]$. Hence, the Cholesky decompostion of $I-AA^T$ is $I-AA^T=Q(I-DD^T)Q^T=LL^T$ as claimed,
    for the $d\times d$ matrix $L=Q\sqrt{I-DD^T}$.

    Similarly, let $E=I_{*,1:d}$ denote the first $d$ columns of the identity $2d\times 2d$ matrix, and let $Y\in\REAL^{2d \times d}$
    such that $YY^T=I-EE^T$ is the Cholesky Decomposition of $I-EE^T$. Let $B=[E \mid Y]\in\REAL^{2d\times 2d}$. Since $BB^T=EE^T+YY^T=I$,
    we have that $B^TB=I$ and $E=BI_{*,1:d}$. By defining the $2d\times d$ matrix $U=B[A \mid L]^T$ we obtain
    \[
    U^TU=[A \mid L][A \mid L]^T=AA^T+LL^T=I,
    \] and
    \begin{equation}\label{ute}
    U^TE=U^TB I_{*,1:d}=[A \mid L]B^TB I_{*,1:d} =[A \mid L] I_{*,1:d}=A.
    \end{equation}

    By letting $p\in\REAL^d$, $T_i\subseteq\REAL^{2d}$ denote the $d$-dimensional subspace that is spanned by the columns that are orthogonal to $U$,
    and $T_i+E\mu_i=\br{t+E\mu_i\mid t\in T_i}$ denote its translation by $E\mu_i$, we obtain
    \begin{align}
    \label{ute1}\frac{1}{2}(p-\mu_i)^T\Sigma^{-1}_i (p-\mu_i)
    &=W (p-\mu_i)^TA^TA(p-\mu_i)\\
    \label{ute3}&=W\norm{A(p-\mu_i)}^2=W\norm{U^TE(p-\mu_i)}^2\\
    \label{ute4}&=W\dist^2(E(p-\mu_i),T_i)=W\dist^2(Ep,T_i+E\mu_i),
    \end{align}
    where~\eqref{ute1} is by~\eqref{defata},~\eqref{ute3} is by~\eqref{ute}, and~\eqref{ute4} is since $U$ has $d$ orthogonal columns.

    Finally, we add another entry to every vector $t$ in the affine subspace $T_i+E\mu_i$,
    \[
    S_i:=\br{\left(t^T \mid \sqrt{\frac{\xi}{W}}\right)\mid t\in T_i+E\mu_i}.
    \]
    That is, $S_i$ is a translation of $T_i+E\mu_i$ along a new axis.
    By letting $x$ denote the projection (closest point) of $Ep$ onto
    $T_i+E\mu_i$ and $p'=(Ep  \mid 0)\in\REAL^{2d+1}$, we obtain by the Pythagorean Theorem
    \[
    \dist^2(Ep,T_i+E\mu_i)+\frac{\xi}{W}
    =\norm{Ep-x}^2+\frac{\xi}{W}
    =\norm{p'-({x} \mid 0)}^2+\dist^2((x\mid 0),S_i)
    =\dist^2(p',S_i).
    \]
    Plugging the last equality after~\eqref{ute4} yields
    \begin{equation}\label{tofin}
    \frac{1}{2}(p-\mu_i)^T\Sigma^{-1}_i (p-\mu_i)+\xi
    =W\dist^2(x,S_i).
    \end{equation}
    This proves~\eqref{WW123} as
    \begin{align}
    W\dist^2(p',S(y))
    \nonumber&=\min_{i\in[k]}W\dist^2(x,S_i)\\
    \label{W3}&=\min_{i\in[k]}\frac{1}{2}(p-\mu_i)^T\Sigma^{-1}_i (p-\mu_i)+\xi\\
    \nonumber&\geq \xi,
    \end{align}
    where~\eqref{W3} holds by~\eqref{tofin}.

    The $k$-SMM $y=(W,\omega'_1,\cdots,\omega'_k,S_1,\cdots,S_k)$ satisfies the lemma as
    \[
    \begin{split}
    \phi(\br{p},\theta)
    &=-\ln\sum_{i=1}^k \omega'_i\exp\left(-\frac{1}{2}(p-\mu_i)^T\Sigma_i^{-1}(p-\mu_i)-\ta\right)\\
    &=-\ln \sum_{i=1}^k \omega'_i \exp(-W\dist^2(x,S_i))
    =\cost(x,y),
    \end{split}
    \]
    where the first equality is by~\eqref{phi}, and the second is by~\eqref{tofin}.
    \end{proof}

\chapter{From Coresets to Sensitivity and VC-Dimension} \label{sec From Coresets to Sensitivity}
    Our next goal is to compute coresets for $k$-SMMs, which would be used for $k$-GMMs as explained in the previous chapter. To this end, we introduce an existing generic framework for coreset constructions.

    A coreset is problem dependent, and the problem is defined by four items:
    the input weighted set, the possible set of queries (models) that we want to approximate,
    the cost function per point, and the overall loss calculation.
    In this thesis, the input is usually a set of points in $\R^d$ or $\R^{2d+1}$,
    but for the streaming case in Section~\ref{sec Coresets for Streaming}
    we compute coreset for union of (weighted) coresets and thus weights will be needed.
    The queries are either Gaussians or subspaces (for projective clustering),
    the cost/kernel $f$ is the $phi$ function, negative log-likelihood $L$,
    or Euclidean distance, and the loss would be either maximum or sum over costs in Sections~\ref{sec From Sensitivity to ell infty} and~\ref{sec:proj} respectively.

    \begin{definition}[query space]\label{def::query space}
    Let $\CC$ be a (possibly infinite) set called \emph{query set},  $P'=(P,w)$ be a weighted set called the \emph{input set}, $\ff:P\times \CC\to [0,\infty)$ be called a \emph{kernel or cost function}, and $\loss$ be a function that assigns a non-negative real number for every real vector.
    The tuple $(P',\CC,\ff,\loss)$ is called a \emph{query space}. For every weighted set $C'=(C,u)$ such that $C=\br{p_1,\cdots,p_m}\subseteq P$,
    and every $y\in \CC$ we define the overall fitting error of $C'$ to $y$ by
    \[
    \ff_{\loss}(C',y):=\loss((w(p)\ff(p,y))_{p\in C})=\loss(w(p_1)\ff(p_1,y),\cdots,w(p_m)\ff(p_m,y)).
    \]
    \end{definition}

    A coreset (or core-set) that approximates a set of models (queries) is defined as follows, where Definitions~\ref{coresetdef} and~\ref{coresetdef2} are special cases. Recall that weighted set were defined in Section~\ref{subsec Likelihood of Gaussians} as a pair that consists of a set $P$ and a (weight) function $w:P\to[0,\infty)$.

    \begin{definition}[$\eps$-coreset\label{coresetdef3}]
    Let $P'=(P,w)$ be a weighted set. For an approximation error $\eps> 0$, the weighted set $C'=(C,u)$ is called an \emph{$\eps$-coreset}
    for a query space $(P',Y,f,\loss)$, $C\subseteq P$, and for every $y\in Y$ we have
    \[
    \ff_{\loss}(P',y)\in (1\pm \eps)\ff_{\loss}(C',y).
    \]
    \end{definition}

    \renewcommand{\time}{\mathrm{time}}

    The dimension of a query space $(P,\CC,\ff)$ is the VC-dimension of the range space that it induced, as defined below.
    The classic VC-dimension was defined for sets and subset and here we generalize it to query spaces, following~\cite{feldman2011unified}.
    \begin{definition}[Dimension for a query space~\cite{Vap71a,braverman2016new,feldman2011unified}\label{vdim}]
    For a set $P$ and a set $\ranges$ of subsets of $P$, the VC-dimension of $(P,\ranges)$ is the size $|C|$ of the largest subset $C\subseteq P$ such that
    \[
    |\br{C\cap \range \mid \range\in\ranges}|= 2^{|C|}.
    \]

    Let $Y$ be a set and $f:P\times Y\to\REAL$.
    For every query $y\in \CC$, and $r\in\REAL$ we define the set
    $$\range_{P,\ff}(\c,r):=\br{p\in P\mid \ff(p,\c)\leq r}.$$
    and
    \[
    \ranges:=\ranges(P,Y,f):=\br{C\cap \range_{P,\ff}(\c,r)\mid C\subseteq P, \c\in \CC, r\in\REAL}.
    \]
    The \emph{dimension} of $(P,\CC,\ff)$ is the VC-dimension of $(P,\ranges)$.
    \end{definition}

    In this thesis we use the general reduction for computing coresets
    for sum over the cost of each point,
    by bounding its sensitivity (importance) as defined below.
    The size of the coreset then depends near linearly on the sum of these bounds
    via~\cite{braverman2016new} following the quadratic bound in~\cite{feldman2011unified}.
    The rest of the thesis will be devoted mainly to compute such a bound.

    \begin{definition}[sensitivity\label{sensitivity}]
    Let $P'=(P,w)$ be a weighted set, and $((P,w),Y,\cost,\norm{\cdot}_1)$ be a query space. 
    The function $s^*:P\to[0,\infty)$ is the \emph{sensitivity} of $(P',Y,\cost)$ if
    \[
    s^*(p)= \sup_{y}\frac{w(p)\cost(p,y)}{\sum_{q\in P}w(p)\cost(q,y)},
    \]
    for every $p\in P$, where the $\sup$ is over every $y\in Y$ such that the denominator is non-zero.

    The function $s:P\to[0,\infty)$ is a \emph{sensitivity bound} for $(P',Y,\cost)$ if $s(p)\geq s^*(p)$ for every $p\in P$. The total sensitivity of $s$ is defined as
    \[
    t=\sum_{p\in P}s(p).
    \]
    \end{definition}

    The following theorem proves
    that a coreset can be computed by sampling according to sensitivity of points.
    The size of the coreset depends on the total sensitivity
    and the complexity (VC-dimension) of the query space,
    as well as the desired error $\eps$ and probability $\delta$ of failure.
    \begin{theorem}[\cite{braverman2016new}\label{supsampe}]
    Let
    \begin{itemize}
    \item $((P,w),\CC,\cost,\norm{\cdot}_1)$ be a query space, and $n=|P|$.
    \item $\ff:P\times Y\to[0,\infty)$ such that for every $p\in P$ and $y\in Y$,
    \[
    \ff(p,y)
    =\begin{cases}
    \frac{w(p)\cost(p,y)}{\sum_{q\in P}w(q)\cost(q,y)} & \sum_{q\in P}w(q)\cost(q,y)>0\\
    0 & \sum_{q\in P}w(q)\cost(p,y)=0,
    \end{cases}
     \]
     \item $d'$ be the dimension of $(P,\CC,\ff)$.
    \item $s:P\to[0,\infty)$ be a sensitivity bound of $((P,w),Y,\cost)$, and $t= \sum_{p\in P}s(p)$ be its total sensitivity.
    \item $\eps,\delta\in(0,1)$,
    \item $c>0$ be a universal constant that can be determined from the proof,
    \item
    \[
    m\geq \frac{c(t+1)}{\eps^2}\left(d'\log (t+1)+\log\left(\frac{1}{\delta}\right)\right), and
    \]
    \item $(C,u)$ be the output weighted set of a call to~$\impalg(P,w,s,m)$; see Algorithm~\ref{Alg_Coreset}.
    \end{itemize}

    Then $(i)$--$(v)$ hold as follows.
    \begin{enumerate}
    \renewcommand{\labelenumi}{\theenumi}
    \renewcommand{\theenumi}{(\roman{enumi})}
    \item With probability at least $1-\delta$, $C$ is an $\eps$-coreset of $((P,w),Y,\cost,\norm{\cdot}_1)$.
    \item $|C|=m$.
    \item $(C,u)$ can be computed in $O(n)$ time, given $(P,w,s,m)$.
    \item $u(p)\in [w(p), \sum_{q\in P}w(q)/m]$ for every $p\in C$.
    \item $\sum_{p\in P}w(p)=\sum_{q\in C}u(q)$.
    \end{enumerate}
    \end{theorem}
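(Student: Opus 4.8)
The plan is to establish the structural claims (ii)--(v) directly from the construction in $\impalg$ (Algorithm~\ref{Alg_Coreset}), and to reserve the real work for the approximation guarantee (i), which is an instance of the Feldman--Langberg importance-sampling framework. The algorithm draws $m$ points i.i.d.\ from $P$, each point $p$ chosen with probability $\pr(p)$ proportional to its sensitivity bound $s(p)$, and assigns the chosen point a weight $u(p)$ inversely proportional to $\pr(p)$ so that $\sum_{p\in C}u(p)\cost(p,y)$ is an unbiased estimator of $\ff_{\norm{\cdot}_1}(P',y)=\sum_{q\in P}w(q)\cost(q,y)$ for every fixed $y$. Reading off the weights yields $|C|=m$, the $O(n)$ running time (linear-time preprocessing of the sampling distribution followed by the draws), the weight interval in (iv), and the total-weight identity (v); these are bookkeeping facts about the sampling-with-reweighting step.

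For (i), fix a query $y\in\CC$ with $\sum_{q\in P}w(q)\cost(q,y)>0$ (the zero case is trivial, since then both $\ff_{\norm{\cdot}_1}(P',y)$ and $\ff_{\norm{\cdot}_1}(C',y)$ vanish). Dividing through by this positive total, the desired relative error $\ff_{\norm{\cdot}_1}(P',y)\in(1\pm\eps)\ff_{\norm{\cdot}_1}(C',y)$ becomes equivalent to estimating
\[
\sum_{p\in P}\ff(p,y)=\sum_{p\in P}\frac{w(p)\cost(p,y)}{\sum_{q\in P}w(q)\cost(q,y)}=1
\]
to within additive $\eps$ by the reweighted sample. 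The sensitivity bound of Definition~\ref{sensitivity} guarantees $\ff(p,y)\le s^*(p)\le s(p)$ for every $p$, so each normalized summand is dominated by the sampling distribution; consequently the importance-sampling estimator of the sum has bounded per-sample contribution of order $t=\sum_{p\in P}s(p)$.

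The core step is to upgrade per-query concentration to a uniform guarantee over the (possibly infinite) query set $\CC$. For a single $y$, a Bernstein/Hoeffding bound on the $m$ i.i.d.\ bounded terms gives deviation exceeding $\eps$ with probability exponentially small in $m\eps^2/(t+1)$. To obtain the estimate simultaneously for all $y$, one invokes the range-space machinery: the family $\br{\ff(\cdot,y)\mid y\in\CC}$ induces exactly the ranges $\ranges(P,\CC,\ff)$ of Definition~\ref{vdim}, whose VC-dimension is $d'$. The uniform-convergence theorem for bounded range spaces (the $\eps$-approximation bound underlying~\cite{braverman2016new}, strengthening the quadratic bound of~\cite{feldman2011unified}) then shows that
\[
m\ge\frac{c(t+1)}{\eps^2}\left(d'\log(t+1)+\log\frac1\delta\right)
\]
samples make the estimate accurate for every $y$ with probability at least $1-\delta$, which is precisely the stated threshold.

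The main obstacle is exactly this passage from a single query to the uniform statement over $\CC$: it requires controlling the complexity of the induced range space through $d'$ and handling, in the concentration step, the two regimes of points whose normalized cost is large versus small (the large-cost points being few since their total is bounded by $t$, the small-cost points being controlled only in aggregate). The appearance of $t+1$ rather than $t$, and of $\log(t+1)$, traces back to absorbing the degenerate near-zero normalization and to the additive slack needed so the bound stays meaningful when $t$ is small. Verifying the per-sample bound $\ff(p,y)\le s(p)$ via Definition~\ref{sensitivity} is what makes the whole scheme valid and is the one place the sensitivity hypothesis is genuinely used.
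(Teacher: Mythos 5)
The paper does not actually prove Theorem~\ref{supsampe}: it is imported verbatim from~\cite{braverman2016new}, and the text only states it and points to Algorithm~\ref{Alg_Coreset}. So there is no in-paper proof to compare against; what can be said is that your outline is a faithful reconstruction of the sensitivity-sampling argument that the cited framework uses, and its two pillars are correctly identified --- the pointwise domination $\ff(p,y)\le s(p)$ that makes each reweighted summand bounded by roughly $(t+1)/m$, and the passage from single-query Bernstein concentration to a uniform statement over $\CC$ via the $\eps$-approximation theorem for the range space of Definition~\ref{vdim} with dimension $d'$. That is exactly where the factor $d'\log(t+1)$ in the sample bound comes from, so as a blind sketch of the cited proof it is essentially right.

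One caveat: your description of the algorithm as a pure i.i.d.\ draw of $m$ points with inverse-probability weights does not match Algorithm~\ref{Alg_Coreset} as written, and this matters for claims (ii), (iv) and (v). The algorithm first smooths the sensitivities, $s'(p)=s(p)+w(p)/\sum_{q\in P}w(q)$, then \emph{deterministically} adds every point with $s'(p)/\sum_q s'(q)\ge 1/m$ to $C$ with weight $w(p)$, samples the remaining $m$ points only from $Q=P\setminus C$, and finally rescales all weights by $\sum_{p'\in P}w(p')/\sum_{q\in C}u'(q)$. Consequently $|C|$ can be up to $2m$ rather than exactly $m$ (the paper itself retreats to $|C|\in O(m)$ in Corollary~\ref{55}), property (v) holds \emph{exactly} because of the final renormalization loop --- not, as your sketch suggests, as a consequence of unbiasedness, which would give it only in expectation --- and property (iv) rests on the smoothing term and the deterministic inclusion of heavy points, which cap the sampled weights $w(q)/(m\Pr(q))$ by $\sum_{q\in P}w(q)/m$ and floor the deterministic ones at $w(p)$. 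None of this undermines your treatment of claim (i), but the "bookkeeping" claims are not bookkeeping for the i.i.d.\ scheme you describe; they are consequences of these three algorithmic modifications, and a complete write-up would also have to check that the deterministic inclusions and the final rescaling do not spoil the concentration argument for (i).
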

    The last two properties would be required to support streaming in
    Section~\ref{sec Coresets for Streaming}, and keep the overall weight of the coreset,
    that depends on sum and maximum over input weight.

    \begin{algorithm}
    \caption{$\impalg(P,w,s,m)$}
    \label{Alg_Coreset}
    {\begin{tabbing}
    \textbf{Input: }\quad\=A finite set $P\subseteq\REAL^d$, where $\sum_{p\in P}w(p)>0$\\\>
       $s:P\to[0,\infty)$, and an integer $m\geq 1$.\\
    \textbf{Output:} \>A weighted set $(C,u)$ that satisfies Theorem~\ref{supsampe}.
    \end{tabbing}\vspace{-0.3cm}}
    \tcp{Add small importance to each point to avoid huge coreset weights}
    $s'(p):=s(p)+\cfrac{w(p)}{\sum_{q\in P}w(q)}$\\
    \tcp{Add very important points to the coreset to avoid huge weights}
    $C:=\br{p\in P\mid \frac{s'(p)}{\sum_{q\in P} s'(q)} \geq \frac{1}{m}}$\\
    \For{ every $p\in C$}{
    $u'(p):=w(p)$
    }
    $Q:=P\setminus C$\\
    \For{ $m$ iterations }{
     Sample a point $q$ from $Q$, where $q=p$ with probability
     $\Pr(p):=\cfrac{s'(p)}{\sum_{p'\in Q}s'(p')}$\\
     $C:=C\cup \br{q}$\\
     $u'(q):=\frac{w(q)}{m\cdot \pr(q)}$\\
     }
     \For{ every $p\in C$}{
     \tcp{sum of weights in the coreset and input set should be the same}
      $u(p):=u'(p)\cdot \cfrac{\sum_{p'\in P}w(p')}{\sum_{q\in C}u'(q)}$
      }
    \Return $(C,u)$
    \end{algorithm}

    \chapter{From Sensitivity to $\ell_\infty$-Coreset} \label{sec From Sensitivity to ell infty}
    In order to bound sensitivities, we generalize the reduction that was suggested in~\cite{VX12-soda}
    to compute sensitivities via $\ell_{\infty}$ coreset,
    where instead of approximating the sum of fitting costs or distances we approximate their maximum.

    \section{Non-weighted input}
        The original reduction from sensitivity to $\ell_{\infty}$ coreset in~\cite{agarwal2005geometric}
        was for a specific problem and for non-weighted data. For simplicity and intuition,
        we first generalize it to any query space and only then reduce weight weights
        to non-weighted weights, following the ideas in~\cite{feldman2013turning}.
        To this end, we use the following definition of a coreset scheme as an algorithm that computes coresets.

        \begin{definition}[coreset scheme\label{coresc}]
        Let $(P,Y,\cost,\loss)$ be a query space such that $P$ is an (unweighted, possibly infinite) set.
        Let $\size:[0,\infty)^4\to[1,\infty)$, $\time:[0,\infty)^4\to[0,\infty)$.
        Let $\coralg$ be an algorithm that gets as input a weighted set $Q'=(Q,w)$ such that $Q\subseteq P$, an approximation error $\eps>0$ and a probability of failure $\delta\in(0,1)$.
        The tuple $(\coralg, \size,\time)$ is called an $(\eps,\delta)$-\emph{coreset scheme} for $(P,Y,\cost,\loss)$ and some $\eps,\delta>0$, if (i)-(iii) hold as follows:
        \begin{enumerate}
        \renewcommand{\labelenumi}{\theenumi}
        \renewcommand{\theenumi}{(\roman{enumi})}
        \item A call to $\coralg(Q',\eps,\delta)$ returns a weighted set $(C,u)$.
        \item With probability at least $1-\delta$, $(C,u)$ is an $\eps$-coreset of $(Q',Y,\cost,\loss)$.
        \item $(C,u)$ can be computed in $\time(|Q|,\overline{w}(Q'),\eps,\delta)$ time and its size is
        \[
        |C|\leq \size(|Q|,\overline{w}(Q'),\eps,\delta).
        \]
        \end{enumerate}
        \end{definition}

        \begin{algorithm}
            \caption{$\red(P,\eps,\delta,\linfcore)$}
            \label{algred}
            {\begin{tabbing}
            \textbf{Input: }\quad\=A finite set $P\subseteq\REAL^d$, an approximation error $\eps>0$, and probability $\delta$ of failure.\\
            \textbf{Required: }\quad\=A coreset scheme $\linfcore$ for $(P,Y,\cost,\norm{\cdot}_\infty)$.\\
            \textbf{Output:} \>A sensitivity bound $s:P\to(0,\infty)$ that satisfies Theorem~\ref{sensitivitylemmasquareddistances}.
            \end{tabbing}\vspace{-0.3cm}}
            $P_1:=P;$\quad $i:=1$\\
            \While{$|P_i|\geq 1$ \tcp{$P_i$ is not an empty set}}{
            $S_i:=\linfcore(P_i,\eps,\delta/|P_1|)$\label{ssi}\\
            \For {every $p\in S_i$}
            {
            $\displaystyle s(p):=\frac{(1+\eps)}{i }$\label{lineline}
            }
            $P_{i+1}:=P_i\setminus S_i$\label{remov}\\
            $i:=i+1$\\
            }
            \Return $s$
        \end{algorithm}

        \renewcommand{\tt}{time}
        \newcommand{\lincore}{\text{$\ell_{\infty}$-\textsc{Coreset}}}
        The following variant is a small simplification, improvement and generalization of ~\cite[Lemma 49]{feldman2013turning}
         which is in turn a variant of~\cite[Lemma 3.1]{VX12-soda}.

        The following result shows how a coreset scheme for $\ell_{\infty}$ coresets
        can be used to bound sensitivity.
        The total sensitivity depends on the size of the $\ell_{\infty}$ coreset,
        which in turn determines the size of the desired $\ell_1$ coreset.
        \begin{lemma}[generalization of \cite{VX12-soda,feldman2013turning}]\label{sensitivitylemmasquareddistances2}
        Let $P$ be a set of size $n=|P|$, and $(\lincore,\size,\time)$ be a coreset scheme for $(P,Y,\cost,\norm{\cdot}_\infty)$.
        Let $\eps,\delta\in(0,1/2)$, and let $s:P\to(0,\infty)$ be the output of a call to $\red(P,\eps,\delta,\lincore)$; See Algorithm~\ref{algred}.
        Then, with probability at least $1-\delta$, $s$ is a sensitivity bound for $(P,Y,\cost)$ whose total sensitivity is
        \begin{equation}\label{sb}
        \sum_{p\in P} s(p)\in  \size(n,n,\eps,\delta/n) (1+\eps) O(\log n).
        \end{equation}
        Moreover, the function $s$ can be computed in $O(n)\cdot \time(n,n,\eps,\delta/n)$ time.
        \end{lemma}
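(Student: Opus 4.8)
The plan is to show that the iterative peeling in Algorithm~\ref{algred} assigns to each point a value dominating its true sensitivity, and that the harmonic weighting $(1+\eps)/i$ keeps the total bounded. First I would fix the good event: since the while loop removes at least one point per iteration (a valid $\norm{\cdot}_\infty$-coreset of a non-empty set is non-empty, so $|S_i|\ge 1$ and the loop terminates), there are at most $n$ iterations, and each call $\lincore(P_i,\eps,\delta/n)$ fails with probability at most $\delta/n$. A union bound over the $\le n$ iterations gives that, with probability at least $1-\delta$, every $S_i$ is simultaneously a valid $\eps$-coreset of $(P_i,Y,\cost,\norm{\cdot}_\infty)$; I condition on this event for the rest.

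Next I would establish the sensitivity bound. Fix $p\in S_i$ and a query $y\in Y$ with $\sum_{q\in P}\cost(q,y)>0$. Because $p$ survived iterations $1,\dots,i-1$, we have $p\in P_j$ and hence $\cost(p,y)\le \max_{q\in P_j}\cost(q,y)$ for every $j\le i$. The $\norm{\cdot}_\infty$-coreset property of $S_j$ (using that $S_j$ is an unweighted subset in this section) gives $\max_{q\in S_j}\cost(q,y)\ge \frac{1}{1+\eps}\max_{q\in P_j}\cost(q,y)\ge \frac{1}{1+\eps}\cost(p,y)$, so each $S_j$ with $j<i$ contains a point of cost at least $\cost(p,y)/(1+\eps)$. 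Since $S_1,\dots,S_i$ are pairwise disjoint by construction (Line~\ref{remov}), these $i-1$ witnesses together with $p$ itself form $i$ distinct points of $P$, each of cost at least $\cost(p,y)/(1+\eps)$. Summing,
\[
\sum_{q\in P}\cost(q,y)\ge \frac{i}{1+\eps}\,\cost(p,y),
\]
so $\cost(p,y)/\sum_{q\in P}\cost(q,y)\le (1+\eps)/i = s(p)$. Taking the supremum over $y$ yields $s^*(p)\le s(p)$, i.e. $s$ is a sensitivity bound.

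For the total sensitivity I would group the sum by iteration. Assuming $\size$ is non-decreasing in its first two arguments and noting that for the unweighted input $\overline{w}(P_i)=|P_i|\le n$, each coreset obeys $|S_i|\le \size(|P_i|,|P_i|,\eps,\delta/n)\le \size(n,n,\eps,\delta/n)$. With $r\le n$ iterations,
\[
\sum_{p\in P}s(p)=\sum_{i=1}^{r}|S_i|\,\frac{1+\eps}{i}\le \size(n,n,\eps,\delta/n)\,(1+\eps)\sum_{i=1}^{r}\frac1i,
\]
and $\sum_{i=1}^{r}1/i\le \sum_{i=1}^{n}1/i=O(\log n)$ gives~\eqref{sb}. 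For the running time, iteration $i$ spends $\time(|P_i|,|P_i|,\eps,\delta/n)\le \time(n,n,\eps,\delta/n)$ on the coreset call plus $O(|P_i|)=O(n)$ on the set difference and the assignment in Line~\ref{lineline}; over the $\le n$ iterations this is $O(n)\cdot \time(n,n,\eps,\delta/n)$, since reading $P_i$ forces $\time\ge |P_i|$ so the bookkeeping is absorbed.

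The main obstacle I expect is the second paragraph, in particular making the coreset step interact correctly with weights: the clean counting argument uses that $S_j$ is an unweighted subset, so that $\max_{q\in S_j}\cost(q,y)$ (rather than a weighted maximum) lower-bounds the max over $P_j$. In the fully weighted setting the $\norm{\cdot}_\infty$-coreset only controls $\max_q u_j(q)\cost(q,y)$, and one must additionally bound the coreset weights to recover an unweighted high-cost witness; since the present section restricts to non-weighted input, this is exactly the simplification that makes the disjointness-plus-harmonic-sum argument go through, and the weighted reduction is deferred to the next section.
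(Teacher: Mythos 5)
Your proposal is correct and follows essentially the same route as the paper's proof: a union bound over the at most $n$ peeling iterations, the observation that the disjoint coresets $S_1,\dots,S_{i-1}$ each contain a witness of cost at least $\cost(p,y)/(1+\eps)$ (the paper phrases this equivalently as the minimum over the $v$ per-layer maximizers being at most their average), grouping the total sensitivity by iteration into the harmonic sum $\sum_j |S_j|/j$, and the same per-iteration time accounting. Your closing remark correctly identifies the same simplification the paper relies on — treating each $S_j$ as an unweighted subset in this section and deferring weights to the next one.
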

        \begin{proof}
        \paragraph{Probability of failure. }
        For $i\in[n]$, the event that $S_i$ is an $\eps$-coreset for $(P_i,Y,\cost,\norm{\cdot}_{\infty})$ during the execution of Line~\ref{ssi},
        occurs with probability at least $1-\delta/n$. For the rest of the proof,
        suppose that this event indeed occured for every $i\in [n]$, which happens with probability at least $1-\delta$, by the union bound.

        \paragraph{Correctness. }
        We first bound the total sensitivity and then the computation time of $s$. Algorithm~\ref{algred} implements the algorithm that is described in the following proof.\\

        Let $i=1$, and $P_1=P$.
        By its construction, $|S_i|\leq \size(n,n,\eps,\delta)$, and for every $y\in Y$
        \[
        \max_{p\in P_i}\cost(p,y)\leq (1+\eps)\max_{p\in S_i}\cost(p,y).
        \]
        Recursively define $P_{i+1}=P_i\setminus S_i$ for every non-empty set $P_i$. Hence, $|P_{i+1}| < |P_i|$, and thus $P_{\ell}=\emptyset$ for
        \begin{equation}\label{ellell}
        \ell\leq |P_1|=n.
        \end{equation}

        Let $p\in P$, and $v=v(p)\in[\ell]$ such that $p\in S_v$.
        Let $y\in Y$ such that $\cost(p,y)>0$.
        Finally, let $j\in [v]$ and $s_j\in\argmax_{s\in S_j}\cost(s,y)$ denote the "farthest point" in $S_j$ from $y$. Since $p\in P_j$,
        \begin{equation}\label{fpy}
        \cost(p,y)\leq \max_{q\in P_j}\cost(q,y)\leq (1+\eps)\max_{s\in S_j}\cost(s,y)=(1+\eps)\cost(s_j,y),
        \end{equation}
        where the second inequality holds by the definition of $S_j$. We can now bound $\cost(p,y)$ by
        \begin{align}
        \label{fff}\cost(p,y)&\leq  (1+\eps) \min_{j\in[v]}\cost(s_j,y)\\
        &\label{ff5}\leq (1+\eps)\cdot \frac{\sum_{j=1}^v \cost(s_j,y)}{v}\\
        \label{ff2}&\leq \frac{(1+\eps)}{v}\sum_{p\in P}\cost(p,y),
        \end{align}
        where~\eqref{fff} holds by~\eqref{fpy},~\eqref{ff5} holds since the minimum cannot be larger than the average,
        and~\eqref{ff2} holds since $\br{s_1,\cdots,s_v}\subseteq P$.

        For every $p\in P$, let
        \begin{equation}\label{ssp4}
        s(p):=\frac{(1+\eps)}{v(p)}.
        \end{equation}
        Then $s(p)$ is a sensitivity bound by~\eqref{fff}, as desired by the lemma and Definition~\ref{sensitivity}.
        Summing~\eqref{ssp4} over $p\in P$ bounds the total sensitivity
        \[
        \begin{split}
        \sum_{p\in P}s(p)
        &\leq (1+\eps)\sum_{p\in P}\frac{1}{v(p)  }\\
        &= (1+\eps)\sum_{j=1}^\ell \frac{|S_j|}{j}
        \in (1+\eps)\size(n,n,\eps,\delta/n) O(\log n),
        \end{split}
        \]
        where the equality holds since $|S_j|$ points were removed from $P_j$ during the $j$th iteration of the algorithm,
        and each $p\in S_j$ was labeled $v(p)=j$, for every $j\in[\ell]$.
        The last deviation holds by the definition of $\size$~\eqref{ellell}
        and the fact that $\sum_{i=1}^n 1/n=O(\log n)$ is an harmonic sequence for every integer $n\geq 1$;
        see Lemma~\ref{toh}.

        \paragraph{Running time. }
        For every $i\in[\ell]$, computing $S_i$ (whose cardinality is at most $n$) for $P_i$ takes at most $\time(n,n,\eps,\delta/n)$ time.
        By~\eqref{ellell}, computing all the $n$ sets takes
        $n\cdot \time(n,n,\eps,\delta/n)$ time.
        Removing $S_i$ from $B_i$ (e.g. using linked lists),
        as well as computing the values of $s$, takes $O(n)$ time, so the dominated time is $n\cdot \time(n,n,\eps,\delta/n)$.
        \end{proof}

    \section{Weighted Input}\label{sec Weighted Input}
        In this section we generalize the result of the previous section to non-weighted input. This is a generalization of the idea that was suggested
        in~\cite{feldman2013turning} with little better bounds.

        \begin{algorithm}
            \caption{$\redd(Q,\eps,\delta,\lincore)$}
            \label{Alg_wsensitivity}
            {\begin{tabbing}
            \textbf{Input: }\quad\quad\=A weighted set $Q=(P,w)$ of points in $\REAL^d$ \\
            \>  an approximation error $\eps>0$, probability of failure $\delta\in(0,1)$.\\
            \textbf{Required: }\>A coreset scheme $\lincore$ for $(P,Y,\cost,\norm{\cdot}_\infty)$.\\
            \textbf{Output:} \>A sensitivity bound $s:P\to(0,\infty)$ that satisfies Lemma~\ref{sensitivitylemmasquareddistances}.
            \end{tabbing}\vspace{-0.3cm}}
            $w_{\min}=\min_{p\in P}w(p)$\\
            \For {every $p\in P$}{
            $h(p):=\left\lceil \frac{w(p)}{\eps w_{\min}}\right\rceil$\\
            $s(p):=0$\\
            }
            $P_1:=P;$\quad $i:=1$\\
            \While{$|P_i|\geq 1$}
            {
            $S_i:=\lincore(P_i, \eps,\delta/|P_1|)$\label{corconst}\\
            Set $q_i\in \argmin_{p\in S_i} h(p)$\\
            \For {every $p\in S_i$}{
            $h(p):=h(p)-h(q_i)$\\
            $m:=i+h(q_i)-1$\\
            $\displaystyle s(p):=s(p)+(1+\eps)^2\left(\ln \left(\frac{m}{i-1}\right)+\frac{1}{2m}-\frac{1}{2(i-1)}+\frac{1}{(i-1)^2} \right)$\label{upup}\\\tcp{$\sim s(p)+(1+\eps)^2\sum_{j=i}^{m}\frac{1}{j}$; see Corollary~\ref{coreuler}}   
            }
            $P_{m+1}:=P_i\setminus \br{q_i}$\\
            $i:=m+1$
            }
            \Return $s$
        \end{algorithm}

        The following constant for approximating harmonic sequences can be approximated very efficiently, in exponential convergence rate. However, in the next corollary we use it only for the analysis, since we use it to compute the difference between two harmonic sequences.

        \begin{theorem}[Euler–Mascheroni Constant~\cite{mortici2010improved}\label{eulerr}]
Let $n\geq1$ be an integer. Then there is a constant $\gamma$ (independent of $n$) such that
\[
 0< \ln n+\gamma+\frac{1}{2n}-\sum_{i=1}^n \frac{1}{i}\leq \frac{1}{n^2}.
\]
\end{theorem}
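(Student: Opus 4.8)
The plan is to realize the bracketed quantity as the tail of a monotone sequence converging to $\gamma$. Writing $H_n:=\sum_{i=1}^n \frac1i$, I would set $c_n:=H_n-\ln n-\frac{1}{2n}$ and define $\gamma:=\lim_{n\to\infty}c_n$ (existence to be justified below). Then the expression in the statement is exactly
\[
\ln n+\gamma+\frac{1}{2n}-\sum_{i=1}^n\frac1i=\gamma-c_n=\sum_{k=n}^{\infty}(c_{k+1}-c_k),
\]
so it suffices to show that every increment $c_{k+1}-c_k$ is strictly positive (which forces $c_n<\gamma$, hence the strict left inequality) and that the tail sum is at most $1/n^2$.

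Next I would compute the single-step increment with $f(x)=1/x$:
\[
c_{k+1}-c_k=\frac12\left(\frac1k+\frac1{k+1}\right)-\ln\frac{k+1}{k}=\frac{f(k)+f(k+1)}{2}-\int_k^{k+1}f(x)\,dx,
\]
which is precisely the trapezoidal-rule error for $f$ on $[k,k+1]$. The key identity I would invoke, verified by integrating by parts twice, is its second-derivative representation,
\[
\frac{f(k)+f(k+1)}{2}-\int_k^{k+1}f(x)\,dx=\frac12\int_k^{k+1}(x-k)(k+1-x)\,f''(x)\,dx.
\]
Since $f''(x)=2/x^3>0$ and the weight $(x-k)(k+1-x)$ is nonnegative on $[k,k+1]$, each increment is strictly positive; together with the summability shown below this gives monotone convergence of $c_n$ to a finite $\gamma$ and the strict bound $0<\gamma-c_n$.

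For the upper bound I would estimate the same integral: on $[k,k+1]$ we have $f''(x)=2/x^3\le 2/k^3$, while $\int_k^{k+1}(x-k)(k+1-x)\,dx=\int_0^1 u(1-u)\,du=\frac16$, so $c_{k+1}-c_k\le \frac{1}{6k^3}$. Summing the tail and comparing with an integral yields
\[
\gamma-c_n=\sum_{k=n}^{\infty}(c_{k+1}-c_k)\le\frac16\sum_{k=n}^{\infty}\frac{1}{k^3}\le\frac16\left(\frac{1}{n^3}+\int_n^{\infty}\frac{dx}{x^3}\right)=\frac{1}{6n^3}+\frac{1}{12n^2}\le\frac{1}{n^2}
\]
for every integer $n\ge1$, which is the claimed right inequality.

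The step I expect to be the main obstacle is pinning down the exact sign and size of the trapezoidal error uniformly in $k$: a crude Taylor estimate of $\ln(1+1/k)$ gives the right order but can lose the sign needed for the strict lower bound, and bookkeeping the error constant by hand is error-prone. Routing everything through the second-derivative representation resolves both issues at once, since nonnegativity of the weight and of $f''$ delivers positivity, while the same weight integrates to the clean constant $\frac16$ that produces the $1/n^2$ bound.
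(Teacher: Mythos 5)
Your proof is correct, and it is worth noting that the paper itself offers no proof to compare against: Theorem~\ref{eulerr} is imported verbatim from \cite{mortici2010improved} and used as a black box (only in Corollary~\ref{coreuler}, to control differences of harmonic sums). Your argument is therefore a valid self-contained substitute. The structure is sound at every step: the identity $c_{k+1}-c_k=\frac{1}{2}\bigl(\frac1k+\frac1{k+1}\bigr)-\ln\frac{k+1}{k}$ is a correct computation; the second-derivative (Peano-kernel) representation of the trapezoidal error checks out by the double integration by parts you indicate, and since $f''(x)=2/x^3>0$ with the weight $(x-k)(k+1-x)$ positive on the interior, each increment is strictly positive, which simultaneously gives convergence of $c_n$ (increments are summable by your $1/(6k^3)$ bound) and the \emph{strict} left inequality. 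Your tail estimate yields $\gamma-c_n\le\frac{1}{6n^3}+\frac{1}{12n^2}\le\frac{1}{4n^2}$, comfortably inside the required $1/n^2$ for all $n\ge1$. Two small observations: first, your route via the kernel representation is arguably cleaner than what the cited source does — Mortici's paper is about \emph{sharper} constants (the true quantity is $\sim\frac{1}{12n^2}$), whereas the paper only needs the crude $1/n^2$ envelope, which your argument delivers with minimal machinery; second, your closing remark about why a Taylor expansion of $\ln(1+1/k)$ is riskier is apt — an alternating Taylor remainder gives the order but not automatically the sign, while the kernel $f''\ge0$ argument settles sign and size at once. No gaps.
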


    \begin{corollary}\label{coreuler}
    For every pair of integers $m\geq i\geq 2$,
    \begin{equation}\label{toh}
    -\frac{1}{(i-1)^2}<  \ln \left(\frac{m}{i-1}\right)+\frac{1}{2m}-\frac{1}{2(i-1)} -\sum_{j=i}^{m}\frac{1}{j} \leq  \frac{1}{m^2}
    \end{equation}
    \end{corollary}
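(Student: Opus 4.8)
The plan is to reduce the two-sided bound directly to Theorem~\ref{eulerr} by recognizing the middle quantity as a \emph{difference} of two copies of the Euler–Mascheroni remainder, in which the unknown constant $\gamma$ cancels. First I would introduce the shorthand $g(n):=\ln n+\gamma+\frac{1}{2n}-\sum_{k=1}^n \frac{1}{k}$, so that Theorem~\ref{eulerr} reads exactly $0<g(n)\leq 1/n^2$ for every integer $n\geq 1$.

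Next I would rewrite the partial sum as $\sum_{j=i}^m \frac{1}{j}=\sum_{k=1}^m \frac{1}{k}-\sum_{k=1}^{i-1}\frac{1}{k}$ and split $\ln(m/(i-1))=\ln m-\ln(i-1)$. Grouping the terms carrying index $m$ separately from those carrying index $i-1$, the expression inside~\eqref{toh} becomes
\[
\left(\ln m+\frac{1}{2m}-\sum_{k=1}^m \frac{1}{k}\right)-\left(\ln(i-1)+\frac{1}{2(i-1)}-\sum_{k=1}^{i-1}\frac{1}{k}\right)=g(m)-g(i-1),
\]
where the last equality holds since adding and subtracting $\gamma$ in each parenthesized block turns it into $g(m)-\gamma$ and $g(i-1)-\gamma$ respectively, and the two copies of $\gamma$ cancel. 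The hypothesis $i\geq 2$ guarantees $i-1\geq 1$, so Theorem~\ref{eulerr} applies to both $g(m)$ and $g(i-1)$.

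Finally I would conclude from $0<g(m)\leq 1/m^2$ and $0<g(i-1)\leq 1/(i-1)^2$. For the upper bound, $g(i-1)>0$ gives $g(m)-g(i-1)<g(m)\leq 1/m^2$; for the lower bound, $g(m)>0$ together with $g(i-1)\leq 1/(i-1)^2$ gives $g(m)-g(i-1)>-1/(i-1)^2$. Combining these yields precisely~\eqref{toh}.

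There is no substantial obstacle here: the only thing to get right is the bookkeeping that makes $\gamma$ cancel, namely writing the expression as the genuine difference $g(m)-g(i-1)$ rather than as a single remainder, and checking that the index condition $n\geq 1$ of Theorem~\ref{eulerr} is met, which is exactly why the statement requires $i\geq 2$.
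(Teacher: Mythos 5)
Your proposal is correct and is essentially the paper's own argument: both proofs split $\sum_{j=i}^{m}1/j$ into a difference of two full harmonic sums and apply Theorem~\ref{eulerr} once at index $m$ and once at index $i-1$, with the constant $\gamma$ cancelling. Your packaging via the remainder function $g(n)$ is merely a cleaner bookkeeping of the same two applications of the theorem, and your check that $i\geq 2$ ensures $i-1\geq 1$ matches the paper's implicit use of that hypothesis.
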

    \begin{proof}
    For the left hand side of~\eqref{toh}, we have by Theorem~\ref{eulerr}
    \[
\begin{split}
    \sum_{j=i}^{m}\frac{1}{j}
    &=\sum_{j=1}^{m}\frac{1}{j}-\sum_{j=1}^{i-1}\frac{1}{j}
    < \ln m+\gamma+\frac{1}{2m}
    -\left(\ln (i-1)+\gamma+\frac{1}{2(i-1)}-\frac{1}{(i-1)^2} \right)\\
    &=\ln \left(\frac{m}{i-1}\right)+\frac{1}{2m}-\frac{1}{2(i-1)} +\frac{1}{(i-1)^2}.
\end{split}
    \]
    Similarly,
    \[
\begin{split}
    \sum_{j=i}^{m}\frac{1}{j}
    &=\sum_{j=1}^{m}\frac{1}{j}-\sum_{j=1}^{i-1}\frac{1}{j}
    \geq  \ln m+\gamma+\frac{1}{2m} -\frac{1}{m^2}
    -\left(\ln (i-1)+\gamma+\frac{1}{2(i-1)}\right)\\
    &=\ln \left(\frac{m}{i-1}\right)+\frac{1}{2m}-\frac{1}{2(i-1)}-\frac{1}{m^2}.
\end{split}
    \]
    \end{proof}

        Recall that $\overline{w}$ was defined in~\eqref{eq801} by
        \[
        \overline{w}(C')=\frac{\sum_{p\in D}w(p)}{\min_{q}w(q)}.
        \]
        \begin{theorem}[\cite{VX12-soda,feldman2013turning}]\label{sensitivitylemmasquareddistances}
        Let $(P,w)$ be a positively weighed set of size $n=|P|$,
        and $(\lincore,\size,\time)$ be a coreset scheme for $(P,Y,\cost,\norm{\cdot}_\infty)$. Let  $\eps,\delta >0$, and
        $s:P\to(0,\infty)$ be the output of a call to $\redd((P,w),\eps,\delta,\lincore)$;
        See Algorithm~\ref{Alg_wsensitivity}.
        Then, with probability at least $1-\delta$, $s$ is a sensitivity bound for $((P,w),Y,\cost)$, its total sensitivity is
        \[
        \sum_{p\in P} s(p)\in  \size(n,n,\eps,\delta/n)O\left(\log \frac{\overline{w}(P)}{\eps}\right),
        \]
        and the function $s$ can be computed in $O(n)\cdot \time(n,n,\eps,\delta/n)$ time.
        \end{theorem}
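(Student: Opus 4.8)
The plan is to reduce this weighted statement to its unweighted counterpart, Lemma~\ref{sensitivitylemmasquareddistances2}, by reading Algorithm~\ref{Alg_wsensitivity} as a simulation of Algorithm~\ref{algred} on the multiset $\tilde{P}$ in which every $p\in P$ is replaced by $h(p)=\lceil w(p)/(\eps w_{\min})\rceil$ identical copies, $w_{\min}=\min_p w(p)$. The first ingredient is a discretization estimate: since $w(p)\ge w_{\min}$, we get $\eps w_{\min}h(p)\in[w(p),(1+\eps)w(p))$, so the integer multiplicities recover the weights up to a multiplicative $(1+\eps)$ factor after the common rescaling by $\eps w_{\min}$. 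Summing, the number of copies is $N=\sum_p h(p)\le \overline{w}(P)/\eps+n\le 2\overline{w}(P)/\eps$, using $\overline{w}(P)\ge n$ and (in the standard regime) $\eps\in(0,1)$.

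Next I would transfer sensitivities between the two settings. For any query $y$ the discretization estimate gives $\sum_q w(q)\cost(q,y)\ge \tfrac{\eps w_{\min}}{1+\eps}\sum_q h(q)\cost(q,y)$ together with $w(p)\le \eps w_{\min}h(p)$, whence
\[
s^*(p)=\sup_y\frac{w(p)\cost(p,y)}{\sum_q w(q)\cost(q,y)}\le (1+\eps)\sup_y\frac{h(p)\cost(p,y)}{\sum_q h(q)\cost(q,y)}\le (1+\eps)\sum_{l=1}^{h(p)}\tilde{s}^*(p^{(l)}),
\]
where $\tilde{s}^*(p^{(l)})$ is the unweighted sensitivity of the $l$-th copy of $p$ inside $\tilde{P}$. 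Running the analysis of Lemma~\ref{sensitivitylemmasquareddistances2} on $\tilde{P}$ — with the crucial observation that an $\norm{\cdot}_\infty$ coreset of $\tilde{P}$ depends only on its at most $n$ distinct locations, so it may be obtained by calling $\lincore$ on the distinct set $P_i$ in Line~\ref{corconst} — yields $\tilde{s}^*(p^{(l)})\le (1+\eps)/v_l$, with $v_l$ the round in which copy $l$ is deleted. Hence $s^*(p)\le (1+\eps)^2\sum_l 1/v_l$.

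It remains to verify that Algorithm~\ref{Alg_wsensitivity} computes an upper bound on this quantity while issuing only $O(n)$ coreset calls. The idea is that the iteration counter $i$ is kept equal to the round counter of the simulated unweighted run: one iteration processes a block of $h(q_i)$ consecutive rounds at once, in each of which the same distinct coreset $S_i$ loses one copy of each of its points, so the copies of a fixed $p\in S_i$ are deleted at rounds $i,i+1,\dots,m$ with $m=i+h(q_i)-1$. Summing the per-round contributions $(1+\eps)^2/j$ gives $(1+\eps)^2\sum_{j=i}^{m}1/j$, and by Corollary~\ref{coreuler} the closed form added in Line~\ref{upup} is an upper bound on $\sum_{j=i}^{m}1/j$; therefore the returned $s(p)$ dominates $s^*(p)$ and is a valid sensitivity bound per Definition~\ref{sensitivity}. (The boundary iteration $i=1$, where the formula degenerates, is handled by adding the $j=1$ term explicitly, and points whose multiplicity has dropped to zero are discarded so that each $q_i$ has $h(q_i)\ge 1$.)

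Finally I would assemble the three quantitative claims. Grouping the contributions by round gives $\sum_p s(p)\le (1+\eps)^2\sum_{j=1}^{\ell'}|S_{i(j)}|/j$ plus the convergent Euler–Mascheroni corrections, where $\ell'\le N$ is the number of rounds and each $|S_{i(j)}|\le\size(n,n,\eps,\delta/n)$; since $\sum_{j\le\ell'}1/j=O(\log\ell')=O(\log(\overline{w}(P)/\eps))$ and $(1+\eps)^2=O(1)$, this matches the stated bound. For the running time, each iteration deletes one distinct point, so there are at most $n$ of them, each costing one $\lincore$ call plus $O(n)$ bookkeeping, i.e.\ $O(n)\cdot\time(n,n,\eps,\delta/n)$ overall (absorbing $O(n^2)$ since $\time\ge n$). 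For the success probability, each of the $\le n$ calls in Line~\ref{corconst} uses failure parameter $\delta/|P_1|=\delta/n$, so a union bound yields success probability at least $1-\delta$. I expect the main obstacle to be the combined discretization-and-batching bookkeeping of the third paragraph: showing that the block update of Line~\ref{upup} faithfully reproduces the unweighted round-by-round sensitivities — including the harmonic-sum approximation and the $i=1$ and zero-multiplicity boundary cases — without under-counting (which would break the sensitivity-bound property) or over-counting beyond the claimed $(1+\eps)^2$ factor.
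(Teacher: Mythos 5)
Your plan is correct and follows essentially the same route as the paper's own proof: the rounding $h(p)=\lceil w(p)/(\eps w_{\min})\rceil$ with the $(1+\eps)$ weight-transfer estimate, the simulation of Algorithm~\ref{algred} on the duplicated multiset with blocks of $h(q_i)$ rounds batched via the closed form of Corollary~\ref{coreuler} in Line~\ref{upup}, and the same union-bound and $n$-iteration arguments for the probability and running time. Your explicit flagging of the $i=1$ boundary case and the per-copy sensitivity bookkeeping are only cosmetic refinements of what the paper does.
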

        \begin{proof}
        \paragraph{The probability }
        that the construction of the coreset in Line~\ref{corconst} would succeed during all the $n$ iterations of the algorithm is at least $1-\delta$, by the union bound.

        \paragraph{Sensitivity bound: }Let $p\in P$ and consider the values of $h$, $s$ and $w_{\min}$ from Algorithm~\ref{Alg_wsensitivity}.
        We have \[
        x\leq y\cdot \left\lceil \frac{x}{y}\right\rceil\leq y\left(\frac{x}{y}+1\right)=x+y
         \]
         for every $x,y\geq 0$. Substituting $x=w(p)$,  $y=\eps w_{\min}$ and $h(p)=\lceil x/y\rceil$, yields
        \[
        w(p)\leq \eps w_{\min}h(p)= \eps w_{\min}\lceil x/y\rceil
        \leq w(p)+\eps w_{\min}\leq (1+\eps)w(p).
        \]
        Hence, by letting $s^*_h(p)$ denote the sensitivity of $((P,h),Y,\cost)$,we obtain
        \begin{equation}\label{hha}
        \begin{split}
        s^*(p)&:=\sup_{y\in Y:\cost(p,y)>0}\frac{w(p)\cost(p,y)}{\sum_{q\in P}w(q)\cost(q,y)}\\
        &\leq \sup_{y\in Y: \cost(p,y)>0}\frac{\eps w_{\min}h(p)\cost(p,y)}{\sum_{q\in P}\eps w_{\min}h(q)\cost(q,y)/(1+\eps)}
        = (1+\eps)s^*_h(p).
        \end{split}
        \end{equation}
        It is left to bound $s^*_h(p)$.

        Let $P'$ denote the (unweighted) multi-set where each point $p\in P$ is duplicated $h(p)$ times. 
        Let $s':P'\to[0,\infty)$ denote the output of a call to $\red(P',\eps,\delta,\lincore)$.
        By Lemma~\ref{sensitivitylemmasquareddistances2}, for a single copy of a point $p$ in $P'$ we have, with probability at least $1-\delta$,
        \[
        s'(p)\geq \sup_{y\in Y, \cost(p,y)>0}\frac{\cost(p,y)}{\sum_{q\in P}h(q)\cost(q,y)}
        \]
        so for all its $h(p)$ copies we have
        \begin{equation}\label{hpsps}
        h(p)s'(p)\geq \sup_{y\in Y, \cost(p,y)>0}\frac{h(p)\cost(p,y)}{\sum_{q\in P}h(q)\cost(q,y)}=s^*_h(p).
        \end{equation}
        That is, $s^*_h(p)\leq h(p)s'(p)$. Next, we bound $h(p)s'(p)$ by $s(p)$.

        Note that the number of copies of a point $p$ in $P'$ has no effect on the computation of the coreset $S_1$ during the first iteration of the call to
        $\red(P',\eps,\delta,\lincore)$, so $S_1$ may be computed on the \emph{unweighted set} of the $n$ distinct points in $P'$.
        Moreover, this number $n$ of distinct points will remain the same after the first iteration,
        as well as the following coresets $S_2, S_3,\cdots$,
        until all the copies of some point $q_1\in S_1$ will be removed in Line~\ref{remov} of $\red$.
        This point $q_1$ is the point with the smallest number of duplicates (weights) in $S_1$.
        During these $h(q_1)$ iterations, the value $s'(p)$ of (one of the copies of) each $p\in S_1$ is increased in Line~\ref{lineline} by,
        \[
        \frac{1+\eps}{1}+\cdots+\frac{1+\eps}{h(q_1)}=(1+\eps)\sum_{j=1}^{h(q_1)}\frac{1}{j}
        \leq (1+\eps)\left(\ln \left(\frac{h(q_1)}{1}\right)+\frac{1}{2h(q_1)}\right)),
        \]
        where the last inequality is by substituting $i=1$ and $m=h(q_1)$ in the left hand side of Corollary~\ref{coreuler}.
        This is indeed the value that is added to $s(p)$ in Line~\ref{upup} of Algorithm~\ref{Alg_wsensitivity}.

        Similarly, during the $i$th iteration $q_i\in S_i$ is the point with the smallest number of remaining copies in $P'=P_i$. In Algorithm~\ref{Alg_wsensitivity}, $q_i$ is removed in its $j$th iteration for some $j\in[n]$.
       Let $i_j$ and $m_j$ respectively denote the value of $i$ and $m$ during the execution of the $j$th iteration. Hence, $i_1=1$, $i_{j+1}=i_{j}+h(q_{i_{j}})$ and $m_j=i_{j}+h(q_{i_j})-1=i_{j+1}-1$, for every $j\in[n]$. We obtain that $S_i$ is the same for every iteration $i\in [i_{j}, m_j]$ in Algorithm~\ref{algred}. During these $h(q_{i_{j-1}+1})$ iterations, until $q_{i_{j}}$ is removed from $P'=P_{i_j}$, the value $s'(p)$ of every $p\in S_i$ was increased by
        \[
        h(p)s'(p)=\frac{1+\eps}{i_{j}}+\cdots+\frac{1+\eps}{m_j}
        =(1+\eps)\sum_{k=i_{j}}^{m_j}\frac{1}{k}
        < (1+\eps)\left(\ln \left(\frac{m_j}{i_j-1}\right)+\frac{1}{2m_j}-\frac{1}{2(i_j-1)}+\frac{1}{(i_j-1)^2} \right),
        \]
        where the inequality is by substituting $i=i_j$ and $m=m_j$ in the left hand side of~\eqref{toh}.

        The right hand side of the last inequality multiplied by $(1+\eps)$ is the update of $s(p)$ in Line~\ref{upup} of the $j$th iteration in Algorithm~\ref{Alg_wsensitivity} that imitates the updates of $s'(p)$ during iterations $i_{j}$ till $i_{j+1}$ of Algorithm~\ref{algred}. Hence,
        \begin{equation}\label{lastlast}
        (1+\eps)h(p)s'(p)\leq s(p).
        \end{equation}
        This proves the desired sensitivity bound as
        \[
        s(p)\geq (1+\eps)h(p)s'(p)\geq (1+\eps)s^*_h(p)
        \geq s^*(p)
        =\sup_{y\in Y:\cost(p,y)>0}\frac{w(p)\cost(p,y)}{\sum_{q\in P}w(q)\cost(q,y)},
        \]
        where the first inequality is by~\eqref{lastlast}, the second is by~\eqref{hpsps}, and the third is by~\eqref{hha}.

        \paragraph{The total sensitivity } of $s$ is bounded using the fact that $\eps\in O(1)$, $m_j=i_{j+1}-1$ and
        \begin{align}
        \label{aay1}&\sum_{j=1}^{n}\left(
        \ln \left(\frac{i_{j+1}-1}{i_j-1}\right)
        +\frac{1}{2(i_{j+1}-1)}-\frac{1}{2(i_j-1)}
        +\frac{1}{(i_j-1)^2}
        \right)
        \leq \sum_{j=1}^{n}\frac{1}{(i_{j+1}-1)^2}+\sum_{j=1}^{n}\sum_{k=i_j}^{i_{j+1}-1}\frac{1}{k}\\
        &\leq \sum_{j=1}^{\infty}\frac{1}{j^2}+\sum_{k=1}^{i_{n+1}-1}\frac{1}{k}
         \in O(\log (i_{n+1}))=O\left(\sum_{p\in P} h(p)\right)
        \label{ssf}=O\left(\log \frac{\overline{w}(P)}{\eps}\right).
        \end{align}
        where~\eqref{aay1} is by substituting $i=i_j$ and $m=i_{j+1}-1$ in the right hand side of~\eqref{toh}, and~\eqref{ssf} holds since $2=\sum_{j=1}^{\infty}1/j^2$, and $\sum_{k=1}^m\in O(\lg m)$ by~\eqref{toh}.
        Summing the accumulated sensitivities over every $j\in[n]$ iteration, each over $|S_{i_j}|$ points yields
        \[
        \begin{split}
        \sum_{p\in P}s(p)&
        \leq  (1+\eps)^2\sum_{j=1}^n |S_{i_j}|(1+\eps)^2 \left(\ln \left(\frac{m_j}{i_j-1}\right)+\frac{1}{2m_j}-\frac{1}{2(i_j-1)}+\frac{1}{(i_j-1)^2} \right)\\
        &\in  (1+\eps)^2\size(n,n,\eps,\delta/n)O\left(\log \frac{\overline{w}(P)}{\eps}\right),
        \end{split}
        \]
        where the last derivation is by~\eqref{ssf} and the definition of $S_{i_j}$ in Line~\ref{corconst} of Algorithm~\ref{Alg_wsensitivity}.

        \paragraph{The running time }follows from the fact that in the $j$th "for" iteration, the point $q_{i_j}$ is removed from $P$, so there are $n$ iterations.
        The dominated time in each of the $n$ iterations is computing the coreset $S_i$ in $\time(n,n,\eps,\delta/n)$ time.
        \end{proof}

        By combining Theorem~\ref{sensitivitylemmasquareddistances} and Theorem~\ref{supsampe} we obtain the following corollary
        which shows how to compute coresets with $\norm{\cdot}_1$ loss
        based on coresets for $\norm{\cdot}_{\infty}$ loss on weighted data.

        \begin{corollary}\label{55}
        Let
        \begin{itemize}
        \item $(P,w,Y,\cost,\norm{\cdot}_1)$ be a query space.
        \item $f:P\times Y\to[0,\infty)$ such that for every $p\in P$ and $y\in Y$,
        \begin{equation}\label{ffff}
        \ff(p,y)
        =\begin{cases}
        \frac{\cost(p,y)}{\sum_{p\in P}w(p)\cost(p,q)} & \cost(p,q)>0\\
        0 & \cost(p,y)=0,
        \end{cases}
        \end{equation}

        \item $s,\time$ and $\size$ be defined as in Theorem~\ref{sensitivitylemmasquareddistances}.
        \item  $\eps,\delta\in (0,1)$, and $m$ be defined as in Theorem~\ref{supsampe}.
        \item $(C,u)$ be the output of a call to $\impalg(P,w,s,m)$; see Algorithm~\ref{Alg_Coreset}.
        \end{itemize}
        Then $(i)$--$(v)$ hold as follows.
        \begin{enumerate}
        \renewcommand{\labelenumi}{\theenumi}
        \renewcommand{\theenumi}{(\roman{enumi})}
         \item With probability at least $1-\delta$, $(C,u)$ is an $\eps$-coreset for $((P,w),Y,\cost,\norm{\cdot}_1)$.
        \item  $C\subseteq P$ and $|C|\in O(m)$.
         \item  $C$ can be computed in $O(n) \cdot \time(n,n,\eps,\delta/n))$ time where $n=|P|$.
        \item $u(p)\in [w(p), \sum_{q\in P}w(q)/m]$ for every $p\in C$, and
        \item $\sum_{q\in C}u(q)=\sum_{q\in P}w(q)$.  
        \end{enumerate}
        \end{corollary}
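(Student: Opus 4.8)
The plan is to compose the two main ingredients in the obvious order: first produce a sensitivity bound via Theorem~\ref{sensitivitylemmasquareddistances}, then turn it into an $\norm{\cdot}_1$ coreset via Theorem~\ref{supsampe}. Concretely, I would first call $\redd((P,w),\eps,\delta/2,\lincore)$ (Algorithm~\ref{Alg_wsensitivity}) to obtain $s$. By Theorem~\ref{sensitivitylemmasquareddistances}, with probability at least $1-\delta/2$ the function $s$ is a genuine sensitivity bound for $((P,w),Y,\cost)$ whose total sensitivity is $t=\sum_{p\in P}s(p)\in \size(n,n,\eps,\delta/n)\cdot O(\log(\overline{w}(P)/\eps))$, and $s$ is computed in $O(n)\cdot\time(n,n,\eps,\delta/n)$ time. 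Halving the failure probability only replaces $\delta/n$ by $\delta/(2n)$ inside the logarithmic $\size$ and $\time$ terms, which is immaterial up to constants.

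Conditioning on the event that $s$ is a valid sensitivity bound, I would then feed $(P,w,s,m)$ into $\impalg$ (Algorithm~\ref{Alg_Coreset}) and invoke Theorem~\ref{supsampe} with failure probability $\delta/2$, where $m$ is the stated sample size and $d'$ is the dimension of the normalized query space $(P,\CC,\ff)$ for the $\ff$ in~\eqref{ffff}. This immediately gives, conditionally with probability at least $1-\delta/2$, that $(C,u)$ is an $\eps$-coreset for $((P,w),Y,\cost,\norm{\cdot}_1)$, which establishes~(i) after a union bound over the two failure events (total failure at most $\delta$). Claims~(iv) and~(v) transfer verbatim from parts~(iv) and~(v) of Theorem~\ref{supsampe}, and~(iii) follows since the $O(n)$ construction time of $\impalg$ (part~(iii) of Theorem~\ref{supsampe}) is dominated by the $O(n)\cdot\time(n,n,\eps,\delta/n)$ cost of computing $s$.

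The one place that needs a little care is the size bound~(ii), since the idealized count $|C|=m$ in part~(ii) of Theorem~\ref{supsampe} ignores the points that $\impalg$ adds deterministically. Inspecting Algorithm~\ref{Alg_Coreset}, the returned $C$ is the union of the ``heavy'' set $\br{p\in P\mid s'(p)/\sum_{q} s'(q)\geq 1/m}$ and the $m$ sampled points, and every returned point lies in $P$, so $C\subseteq P$. Because the normalized importances $s'(p)/\sum_{q}s'(q)$ sum to $1$, at most $m$ of them can exceed $1/m$, so the heavy set has size at most $m$; adding the $m$ sampled points gives $|C|\le 2m\in O(m)$, which is~(ii).

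The main obstacle here is really just the probability accounting: unlike Theorem~\ref{supsampe}, which assumes that a valid sensitivity bound is handed to it, in this corollary $s$ is itself the output of a randomized scheme ($\redd$ repeatedly invokes the randomized $\lincore$), so its validity is only probabilistic and must be folded together with the sampling randomness of $\impalg$ through a union bound. Splitting the budget as $\delta/2+\delta/2$ inflates $m$, and hence the coreset size, by only a constant factor, which is absorbed into the $O(m)$ bound of~(ii), so none of the stated guarantees is weakened.
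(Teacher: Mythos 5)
Your proposal is correct and follows essentially the same route as the paper, which presents Corollary~\ref{55} as an immediate composition of Theorem~\ref{sensitivitylemmasquareddistances} (computing the sensitivity bound $s$ via $\redd$) with Theorem~\ref{supsampe} (sampling via $\impalg$), without writing out a separate proof. Your explicit $\delta/2+\delta/2$ union bound over the two sources of randomness, and your observation that the deterministically added ``heavy'' points number at most $m$ (since the normalized importances sum to $1$), correctly fill in details the paper leaves implicit, including why the size bound in~(ii) is $O(m)$ rather than exactly $m$.
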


\chapter{From SMM-Coreset\\ to Projective Clustering\label{sec:proj}}
    In the previous chapters we proved that in order to compute coreset
    for $L(P,\theta)$ it suffices to compute coreset for $\cost(\br{p'},y)$.
    To compute the latter coreset,
    we reduce the problem to computing coresets for the query space
    $(P',\CC,\ff,\loss)$ of projective clustering as explained in Section~\ref{sec:Introduction}.

    For a set $P\subseteq\REAL^d$ of points, and a union $S=S_1\cup\cdots\cup S_k$ of $k$ subspaces in $\REAL^d$, we define
    \begin{equation}\label{defdist}
            \dist_{\infty}(P,S)=\max_{p\in P} \dist(p,S),
    \end{equation}
    to be the distance of the farthest point in $P$ from $S$, and for every $k$-SMM $y$
    \[\cost_{\infty}(P,y)=\max_{p\in P} \cost(p,y),\]
    be the point in $P$ with the maximum $\cost$ to $y$.

    Recall that $S(y)$ and $\cost$ were defined in Section~\ref{sec:three}.
    The following lemma proves that $cost_{\infty}$ for $k$-GMM
    is an upper bound for $W\dist_{\infty}$ for the corresponding $k$-subspaces.
    It will be used in our main result later.

    \begin{lemma}[$\cost_{\infty}$ upper bounds $\dist_{\infty}$]\label{lemhelp}
        For every $k$-SMM $y$ and a finite set $P$ of points in $\REAL^d$, we have
        \[
            \cost_{\infty}(P,y)\geq W\dist_{\infty}^2(P,S(y)).
        \]
    \end{lemma}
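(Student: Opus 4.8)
The plan is to prove the inequality pointwise first, establishing that $\cost(p,y)\geq W\dist^2(p,S(y))$ for every single point $p\in P$, and then take the maximum over $P$. The key observation is that since $S(y)=\bigcup_{i=1}^k S_i$, we have $\dist(p,S(y))=\min_{i\in[k]}\dist(p,S_i)$, so the distance to the union is the smallest among the individual distances. Hence $\dist^2(p,S_i)\geq \dist^2(p,S(y))$ for every $i\in[k]$, and multiplying by $-W<0$ and exponentiating (both monotone operations, the first order-reversing) yields
\[
\exp\!\left(-W\dist^2(p,S_i)\right)\leq \exp\!\left(-W\dist^2(p,S(y))\right)\quad\text{for every }i\in[k].
\]

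Next I would use that $\omega=(\omega_1,\dots,\omega_k)$ is a distribution vector, i.e. $\sum_{i=1}^k\omega_i=1$, to take the convex combination of these bounds:
\[
\sum_{i=1}^k \omega_i\exp\!\left(-W\dist^2(p,S_i)\right)\leq \exp\!\left(-W\dist^2(p,S(y))\right)\sum_{i=1}^k\omega_i=\exp\!\left(-W\dist^2(p,S(y))\right).
\]
Applying $-\ln(\cdot)$, which is strictly decreasing and therefore flips the inequality, gives $\cost(p,y)\geq W\dist^2(p,S(y))$ for every $p\in P$, using the definition of $\cost$ from the previous chapter.

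Finally, I would take the maximum over all $p\in P$ on both sides. Since distances are non-negative, the map $t\mapsto t^2$ is monotone on $[0,\infty)$, so $\max_{p\in P}\dist^2(p,S(y))=\big(\max_{p\in P}\dist(p,S(y))\big)^2=\dist_{\infty}^2(P,S(y))$. Combining this with the pointwise bound yields
\[
\cost_{\infty}(P,y)=\max_{p\in P}\cost(p,y)\geq \max_{p\in P}W\dist^2(p,S(y))=W\dist_{\infty}^2(P,S(y)),
\]
as claimed. There is no real obstacle in this argument; the only points requiring care are tracking the direction of the inequality through the order-reversing $-\ln$ (and the sign of $-W$), and the elementary identity that the maximum of the squared distances equals the square of the maximum distance, which holds precisely because all the distances are non-negative.
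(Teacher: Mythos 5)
Your proof is correct and follows essentially the same route as the paper: both arguments rest on the inequality $\dist(p,S_i)\geq \dist(p,S(y))$ for each $i\in[k]$, use $\sum_{i=1}^k\omega_i=1$ to bound the convex combination by $\exp\left(-W\dist^2(p,S(y))\right)$, and apply the order-reversing $-\ln$ before maximizing over $p\in P$. The only difference is organizational --- you isolate the pointwise bound $\cost(p,y)\geq W\dist^2(p,S(y))$ and then take the maximum, while the paper carries the maximum through a single chain of (in)equalities --- which is a presentational distinction, not a mathematical one.
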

    \begin{proof}
        Let $y=(W,\omega_1,\cdots,\omega_k,S_1,\cdots,S_k)$ be a $k$-SMM. We then have
        \begin{align}
            \cost_{\infty}(P,y)
            \label{xa2}&=\max_{p\in P}\cost(p,y)\\
            \label{xa3}&=\max_{p\in P}-\ln \sum_{i=1}^k \omega_i \exp(-W\dist^2(p,S_i))\\
            \label{xa4}&\geq \max_{p\in P}-\ln \sum_{i=1}^k \omega_i \exp(-W\dist^2(p,S(y)))\\
            \label{xa7}&= W\dist_{\infty}^2(P,S(y)),
        \end{align}
        where~\eqref{xa2} and~\eqref{xa3} hold by definition,~\eqref{xa4} holds since $\dist(p,S_i)\geq \dist(p,S(y))$.
    \end{proof}

    The following lemma is the heart of our main technical result.
    Informally, it states that a coreset for projective clustering,
    i.e., distance to the farthest input point from a set of $k$ subspaces,
    can be used to compute a coreset for the cost function above,
    which is not a distance function at all.
    The fact that a $1/3$-coreset for $\norm{\cdot}_\infty$
    suffices to get $\eps$-coreset is crucial for getting smaller coresets
    in special cases as explained in Section~\ref{sec From Sensitivity to ell infty}.

    \begin{lemma}[$\cost_{\infty}$ to $\dist_{\infty}$ \label{hyp}] \label{lemma cost_inf to dist_inf}
        Let $P$ be a finite set of points in $\REAL^d$, $k\geq1$ be an integer, and $\ta\in(0,\sqrt{2k})$.
        Let $H_{k,d}$ be the union over every set of $k$ subspaces in $\REAL^d$, and $Y_\ta$
        be the union over every $k$-SMM  $y=(W,\omega_1,\cdots,\omega_k,S_1,\cdots,S_k)$ such that
        \begin{equation}\label{emin}
            W\dist^2_{\infty}(P,S(y))\geq \ta.
        \end{equation}
        Then a $(1/3)$-coreset for $(P,H_{k,d},\dist,\norm{\cdot}_\infty)$ is a $O(k/\ta)$-coreset for  $(P,Y_\ta,\cost,\norm{\cdot}_\infty)$.
    \end{lemma}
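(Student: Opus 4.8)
Since $C\subseteq P$, we always have $\cost_\infty(C,y)\le \cost_\infty(P,y)$, so the lower inequality $(1-\eps)\cost_\infty(C,y)\le\cost_\infty(P,y)$ of the coreset definition holds for every $\eps\ge 0$, and the whole task is to establish the upper inequality $\cost_\infty(P,y)\le (1+\eps)\cost_\infty(C,y)$ with $\eps=O(k/\ta)$. The plan is to fix a $k$-SMM $y=(W,\omega_1,\dots,\omega_k,S_1,\dots,S_k)\in Y_\ta$, write $V:=\cost_\infty(P,y)$, and exhibit a single point of $C$ whose cost is within a factor $O(k/\ta)$ of $V$.

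Two elementary pointwise estimates drive everything. Since $\dist(p,S_i)\ge\dist(p,S(y))$ for every $i$, factoring $e^{-W\dist^2(p,S(y))}$ out of the defining sum gives the lower bound $\cost(p,y)\ge W\dist^2(p,S(y))$ (the pointwise form of Lemma~\ref{lemhelp}), while keeping only the $j$-th summand gives $\cost(p,y)\le W\dist^2(p,S_j)+\ln(1/\omega_j)$ for every index $j$. First I would apply the lower bound to the farthest point of $C$ together with the coreset guarantee on the single query $S(y)\in H_{k,d}$: the coreset yields $\dist_\infty(C,S(y))\ge\tfrac34\dist_\infty(P,S(y))$, hence $\cost_\infty(C,y)\ge W\dist_\infty^2(C,S(y))\ge\tfrac{9}{16}W\dist_\infty^2(P,S(y))\ge\tfrac{9}{16}\ta$, using $y\in Y_\ta$ in the last step. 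This already settles the ``small cost'' regime: whenever $V=O(\ln k)$ we get $V\le O(\ln k/\ta)\cdot\cost_\infty(C,y)$ directly.

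The crux is the ``large cost'' regime, where I must produce a point of $C$ of cost comparable to $V$. Let $p^\ast$ attain $V$, so $\sum_i\omega_i e^{-W\dist^2(p^\ast,S_i)}=e^{-V}$; then every summand is at most $e^{-V}$, forcing $W\dist^2(p^\ast,S_i)\ge V+\ln\omega_i$ for every $i$. Calling a subspace \emph{heavy} when $\omega_i\ge e^{-V/2}$, this shows $p^\ast$ is far from every heavy subspace, namely $W\dist^2(p^\ast,S_i)\ge V/2$ for heavy $i$, and hence $W\dist_\infty^2(P,S^{\mathcal{H}})\ge V/2$ for the union $S^{\mathcal{H}}$ of the heavy subspaces, which is itself a member of $H_{k,d}$. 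Applying the coreset guarantee to the query $S^{\mathcal{H}}$ produces a point $\tilde q\in C$ with $W\dist^2(\tilde q,S_i)\ge\tfrac{9}{16}\cdot\tfrac{V}{2}$ for every heavy $i$. Splitting the sum defining $\cost(\tilde q,y)$ into heavy and light indices, the heavy part is at most $e^{-9V/32}$ and the light part is at most $k\,e^{-V/2}$ (there are at most $k$ light subspaces, each of weight below $e^{-V/2}$); once $V=\Omega(\ln k)$ the heavy part dominates and I obtain $\cost(\tilde q,y)\ge\tfrac{9}{32}V-O(1)$, so $V\le O(1)\cdot\cost_\infty(C,y)$ here.

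Combining the two regimes and converting the additive $O(1)$ and $O(\ln k)$ slacks into multiplicative ones through the floor $\cost_\infty(C,y)\ge\tfrac{9}{16}\ta$ gives $\cost_\infty(P,y)\le(1+O(\ln k/\ta))\cost_\infty(C,y)\le(1+O(k/\ta))\cost_\infty(C,y)$, as required; the hypothesis $\ta<\sqrt{2k}$ is only needed to keep these additive-to-multiplicative conversions in the stated form. The main obstacle is precisely the interaction between the mixture weights and the distances: because the $\omega_i$ can be arbitrarily small, a point near a low-weight subspace can have arbitrarily large $\cost$, so the plain projective-clustering coreset---which preserves only farthest \emph{distances}---is not obviously enough. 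The delicate part is the weight-thresholding step that selects the right query (the union of the heavy subspaces), forcing $C$ to retain a witness of cost $\Omega(V)$, together with verifying that the light subspaces contribute negligibly to that witness's cost.
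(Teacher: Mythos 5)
Your proposal is correct, but it takes a genuinely different route from the paper's proof. The paper argues by induction on $k$: it fixes the maximizer $x$ of $\cost(\cdot,y)$ over $P$, lets $S_k$ be its nearest subspace, and splits on the weight $\omega_k$ against the coreset-side threshold $e^{-zr}$ (where $r=\cost_{\infty}(C,y)$ and $z=1+k/r^2$). If $\omega_k\geq e^{-zr}$, keeping the single term $\omega_k\exp(-W\dist^2(x,S_k))$ together with the coreset guarantee for the one query $S(y)$ gives $\cost(x,y)\leq (z+g^2)r$; otherwise it deletes the light component, renormalizes by $\phi=1/(1-\omega_k)$ to obtain a $(k-1)$-SMM $y'$, proves $\cost_{\infty}(C,y')\leq(1+\eps)r$ with $\eps\leq 1/k$, and invokes the inductive hypothesis, so the factor $h(k)=(z+g^2)(1+\eps)^{k-1}\leq 2eg^2=O(k/\ta)$ accumulates one light component at a time. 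You instead argue directly and all at once: you threshold the weights at $e^{-V/2}$ on the input side ($V=\cost_{\infty}(P,y)$), note that the maximizer satisfies $W\dist^2(p^\ast,S_i)\geq V+\ln\omega_i$ and is therefore $\sqrt{V/(2W)}$-far from the union of all heavy subspaces simultaneously, and fire the projective-clustering coreset at that single auxiliary query to extract a witness $\tilde q\in C$ whose heavy terms sum to at most $e^{-9V/32}$ and whose light terms sum to at most $ke^{-V/2}$. Both arguments rest on the same floor $\cost_{\infty}(C,y)=\Omega(\ta)$, obtained from Lemma~\ref{lemhelp} plus condition~\eqref{emin}, which converts the additive slack into a multiplicative factor (and this is where, as you correctly note, $\ta<\sqrt{2k}$ keeps constants absorbable into $O(k/\ta)$). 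Your route avoids the induction and the renormalization bookkeeping of the paper's Case~(ii), and in fact yields the sharper factor $1+O((1+\ln k)/\ta)$, since your only dependence on $k$ is the $\ln k$ threshold between regimes, whereas the paper's $g^2=1+2k/\ta$ is linear in $k$. Two small points to patch: first, you should verify the heavy set is nonempty in the large-cost regime, which holds since $\max_i\omega_i\geq 1/k$ makes some subspace heavy once $V\geq 2\ln k$; second, the union of the (possibly fewer than $k$) heavy subspaces must be padded to a legal query in $H_{k,d}$, e.g.\ by duplicating one subspace --- the same move the paper makes explicitly when it passes from $k$ to $k-1$ subspaces in its Case~(ii).
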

    \begin{proof}
    Let $$y=(W,\omega_1,\ldots,\omega_k,S_1,\ldots,S_k),$$ be a $k$-SMM such that $y$ satisfies~\eqref{emin}. Let $g= \sqrt{1+2k/\ta}$
    and suppose that $C$ is a $(1/3)$-coreset of $\dist_{\infty}(P,\cdot)$ for every set of $k$ subspaces in $H_{k,d}$ y in $\REAL^d$. In particular,
    \begin{equation}\label{aa2}
        \dist_{\infty}(P,S(y))\leq (1+1/3)\dist_{\infty}(C,S(y))\leq  g\dist_{\infty}(C,S(y)),
    \end{equation}
    where the last inequality holds since $g\geq \sqrt{2}\geq 1+1/3$ by the assumption $\ta\leq \sqrt{2k}$ of the lemma.

    We will prove that for $x\in \argmax_{p\in P}\cost(p,y)$, $r=\cost_{\infty}(C)$ and for appropriate function
    $h:[k]\to[0,\infty)$ such that $h(k)\leq 2eg^2$, we have
    \begin{equation}\label{aa2246}
        \cost(x,y)\leq rh(k).
    \end{equation}
    By the definition of $x$ and $r$, this would prove the lemma as
    \[
    \cost_{\infty}(P,y)=\cost(x,y)\leq rh(k) \leq 2eg^2\cost_{\infty}(C)=O(k/\ta)\cost_{\infty}(C).
    \]
    \renewcommand{\r}{r}

    Indeed, let $\eps=-\ln(1-e^{-k/r})/ r$, $z=1+k/r^2$, and let $h:[k]\to [0,\infty)$
    such that for every $j\in[k]$ we have
    \begin{equation}\label{hh}
        h(j)=(z+g^2)(1+\eps)^{j-1}.
    \end{equation}
    We first prove that $h(k)\in O(g^2)$ as claimed above, and then prove~\eqref{aa2246}.

    \textbf{Upper bound on $h$: }
    Since $(t-1)/t\leq \ln t\leq t-1$ for every $t>0$,
    \[
        -\ln(1-e^{-k/r})\leq \frac{e^{-k/r}}{1-e^{-k/r}}=\frac{1}{e^{k/r}-1}\leq \frac{r}{k},
    \]
    for substituting $t=1-e^{-k/r}$ in the first inequality, and $t=e^{k/r}$ in the last inequality.
    Hence,
    \begin{equation}\label{epss}
        \eps=\frac{-\ln(1-e^{-k/r})}{ r}\leq \frac{1}{k}.
    \end{equation}
    Plugging~\eqref{epss}, $z=1+k/r^2$ and $g=\sqrt{1+2k/\ta}$ in~\eqref{hh} yields
    \begin{equation}\label{hhk}
                h(k)\leq (z+g^2)(1+\eps)^{k}\leq (z+g^2) \left(1+\frac{1}{k}\right)^{k}\leq e(z+g^2).
    \end{equation}

    It is left to prove that $z\leq g^2$, i.e., $1+k/r^2\geq 1+2k/\ta$. Indeed,
            \begin{equation}\label{rr}
        r=\cost_{\infty}(C,y)\geq W\dist_{\infty}^2(C,S(y))
        \geq (1+1/3)\big(W\dist_{\infty}^2(P,S(y))\big)\geq \frac{\ta}{2},
    \end{equation}
    where the first inequality is by substituting $P=C$ in Lemma~\ref{lemhelp},
    the second inequality is by~\eqref{aa2}, the third is by~\eqref{emin}. Plugging~\eqref{rr} in~\eqref{hhk} yields $h(k)\leq 2eg^2$ as desired.

     The proof of~\eqref{aa2246} is by induction on $k$.
     \\
    \medskip
    \noindent\textbf{Proof of~\eqref{aa2246} for the base case $k=1$.}
    Let $a^*\in\arg\max_{a\in P}\cost(a,y)$.
    In this case, $y=(W,1,s)\in Y_1$, and~\eqref{aa2246} follows as
    \begin{equation}\label{kkt}
        \begin{split}
            \cost_{\infty}(x,y)
            &=-\ln \exp(-W\dist^2(x,S(y)))
            =W\dist_{\infty}^2(P,S(y))\\
            &\leq W\cdot (g\dist_{\infty}(C, S(y)))^2
            \leq g^2r\leq (z+g^2)r= h(1)\cost_{\infty}(C),
        \end{split}
    \end{equation}
    where the first inequality follows from~\eqref{aa2}, and the second inequality by the second inequality of~\eqref{rr}.
    This proves \eqref{aa2246}, and in turn the lemmas (as explained above) for the case $k=1$.
    \medskip
    \textbf{Proof for the case $k\geq 2$.}
    Without loss of generality, assume that $S_k$ is a closest subspace to $x$ in $S(y)$, i.e.,
    \begin{equation}\label{wlog3}
        \dist(x,S_k)=\dist(x,S(y)).
    \end{equation}
    Inductively assume that the lemma holds for $k'\in [k-1]$. More precisely,
    if $C'$ is a $(1/3)$-coreset of $(P',H_{k-1,d},\dist,\norm{\cdot}_\infty)$
    then it is an $(h(k')-1)$-coreset of $(P,Y_\ta,\cost,\norm{\cdot}_\infty)$
    for every $k'$-GMM $y'$ that satisfies~\eqref{emin}.
    The base case $k'=1$ follows from~\eqref{kkt}.

    The proof is by case analysis: \textbf{(i)} $\omega_k\in [e^{-z\r},1]$, and \textbf{(ii)} $\omega_k\in (0,e^{-z\r})$.

    \textbf{Case \textbf{(i)}: $\omega_k\in [e^{-z\r},1]$.} We have
    \begin{equation}\label{aa5}
        \begin{split}
            \r
            &=\max_{a\in C}\bigg{(} -\ln \sum_{i=1}^k \omega_i \exp(-W\dist^2(a,S_i)) \bigg{)}\\
            &\geq \max_{a\in C}\bigg{(} -\ln \sum_{i=1}^k \omega_i \exp(-W\dist^2(a,S(y))) \bigg{)}\\
            &= W\max_{a\in C} \dist^2(a,S(y))=W\dist_{\infty}^2(C,S(y)).
        \end{split}
    \end{equation}
    Hence,
    \begin{equation}\label{aa7}
        \begin{split}
                &\dist(x,S_k)=\dist(x,S(y))\leq \dist_{\infty}(P,S(y))\leq g\dist_{\infty}(C,S(y))
                \leq g\sqrt{\r/W},
        \end{split}
    \end{equation}
    where the derivations are, respectively, by~\eqref{wlog3},~\eqref{defdist},~\eqref{aa2}, and~\eqref{aa5}.
    Therefore,
    \begin{align}
        \cost(x,y)     &=-\ln \bigg{(}\sum_{i=1}^k \omega_i \exp(-W\dist^2(x,S_i))\bigg{)}\nonumber\\
        &\leq-\ln\bigg{(} \omega_k \exp(-W\dist^2(x,S_k))\bigg{)}\label{z1}\\
        &\leq-\ln \left(\omega_k \exp(-g^2\r)\right)\label{z2}\\
        &\leq-\ln \left(e^{-z r} \exp(-g^2\r)\right)\label{z3}\\
        &=(z+g^2)\r\leq rh(1)\leq r h(k),\label{z4}
    \end{align}
    where~\eqref{z1} holds since it is a single item from the previous sum, and~\eqref{z2} holds by~\eqref{aa7},~\eqref{z3}
    holds by the assumption of Case (i), and~\eqref{z4} holds by~\eqref{hh} and the fact that $h(\cdot)$ is a monotonic function.
    This proves~\eqref{aa2246} for Case (i).

    \textbf{Case \textbf{(ii)}: $\omega_k\in (0,e^{-z\r})$.}
    We prove that in this case $\omega_k$ can be replaced by $\omega_k=0$ so that $y$ can be replaced by $y'\in Y_{\xi}$. 
    \renewcommand{\c}{c}
    \renewcommand{\t}{t}

    We have $\omega_k< e^{-z\r} \leq 1$, where the first inequality is by the assumption of Case (ii), and the second is since $z,r>0$.
    Hence, $\phi=1/(1-\omega_k)$ is well defined. Since $\sum_{i=1}^k\omega_i=1$,
    \[
        \sum_{i=1}^{k-1} \phi \omega_i=\frac{1}{(1-\omega_k)}\sum_{i=1}^{k-1} \omega_i=1,
    \]
    so the tuple $y'=(W,\phi\omega_1,\ldots,\phi\omega_{k-1},S_1,\ldots,S_{k-1})$ is a $(k-1)$-SMM.

    Let $a'\in \argmax_{a\in C}\cost(a,y')$. Hence,
    \[
        \begin{split}
            e^{- r}&\leq e^{-\cost(a',y)}= \sum_{i=1}^k \omega_i \exp(-W\dist^2(a',S_i))\\
            &\leq \omega_k+\sum_{i=1}^{k-1} \omega_i \exp(-W\dist^2(a',S_i)),
        \end{split}
    \]
    where the first inequality is by the definition of $r=\cost_{\infty}(C,y)$, and the second inequality is since $W\dist^2(a',S_k)\geq 0$.
    Subtracting $\omega_k$ from both sides and multiplying by $\phi$, yields
    \[
        \begin{split}
            \phi(e^{-r}-\omega_k)
            &\leq \sum_{i=1}^{k-1} \phi\omega_i \exp(-W\dist^2(a',S_i))\\
            &=e^{-\cost(a',y')}=e^{-\cost_{\infty}(C,y')}.
        \end{split}
    \]
    Taking the $\ln$ of both sides and multiplying by $(-1)$ yields %
    \begin{equation}\label{lastlast2}
        \begin{split} 
                 \cost_{\infty}(C,y')
                 &\leq -\ln(\phi(e^{-r}-\omega_k))\leq -\ln(e^{-r}-e^{-zr})\\
                 &\leq -\ln(e^{-r}-e^{-r+\ln (1-\exp\br{-\eps r})})
                 = -\ln(e^{-r}- e^{-r} (1-\exp\br{-\eps r}))\\
                 &=-\ln(\exp(-r-\eps r))
                 =r(1+\eps),
        \end{split}
    \end{equation}
    where the second inequality holds by the assumption of Case (ii), and $\phi>1$, and the third one since
    \[
    z=1+\frac{k}{r^2}=1-\frac{\ln (1-e^{-\ln(1/(1-e^{-k/r}))})}{r}
    =1-\frac{\ln (1-e^{-\eps r})}{r},
    \]
    where the last equality is by the definition of $\eps$ in~\eqref{epss}.
    Since $C$ is a $(1/3)$-coreset of $(P,H_{k,d},\dist,\norm{\cdot}_\infty)$,
    it is also such a coreset for $k-1$ subspaces (e.g. by duplicating one of the $k$ subspaces in the query).
    By this and the inductive assumption that the lemma holds for $k'=k-1$, $C$ is also an $(h(k-1)-1)$-coreset of $P$ for every $k-1$ SMM $y''$ that satisfies~\eqref{emin}.
    In particular, for $y''=y'$
    \begin{equation}\label{XY2}
            \cost_{\infty}(P,y')\leq h(k-1)\cost_{\infty}(C,y').
    \end{equation}
    Therefore,
    \begin{equation}\label{wa}
        \begin{split}
            -\ln \sum_{i=1}^{k-1} \phi\omega_i \exp(-W\dist^2(x,S_i))
            &=\cost(x,y')
            \leq \cost_{\infty}(P,y')\\
            &\leq h(k-1) \cost_{\infty}(C,y')\leq h(k-1)(1+\eps)r,
        \end{split}
    \end{equation}
    where the first inequality is since $x\in P$ and second inequality is by~\eqref{XY2}.

    For every distribution vector $(\omega_1,\cdots,\omega_k)$, and every vector $z=(z_1,\cdots,z_k)\in[0,\infty)^k$ whose maximum is $\norm{z}_\infty=z_k$, we have
    \begin{align}
            \sum_{i=1}^k\omega_i z_i
            &= \sum_{i=1}^{k-1}\phi \omega_i z_i(1-\omega_k)+\omega_k z_k
            \nonumber=\sum_{i=1}^{k-1}\phi \omega_i z_i+\omega_k(z_k-\sum_{i=1}^{k-1}\phi \omega_i z_i)\\
            \label{zz4}&\geq \sum_{i=1}^{k-1}\phi \omega_i z_i+\omega_k(z_k-\norm{z}_\infty)\\
            \label{zz5}&= \sum_{i=1}^{k-1}\phi \omega_i z_i,
    \end{align}
    where the first equality holds since $\phi=1/(1-\omega_k)$,~\eqref{zz4} holds since
    $\phi(\omega_1,\cdots,\omega_{k-1})$ is a distribution vector, and~\eqref{zz5} is by the definition of $z_k$. Taking the $-\ln$ of both sides in~\eqref{zz5} yields
    \begin{equation}\label{zz2}
        -\ln \sum_{i=1}^k\omega_i z_i\leq -\ln \sum_{i=1}^{k-1}\phi \omega_i z_i.
    \end{equation}
    By substituting $z_i=\exp(-W\dist^2(x,S_i))$ for every $i\in[k]$ in~\eqref{zz2}, we obtain
    \begin{equation}
        \begin{split}
            \cost(x,y)&=-\ln \sum_{i=1}^{k}\omega_i \exp(-W\dist^2(x,S_i))\\
            &\leq -\ln \sum_{i=1}^{k-1}\phi\omega_i \exp(-W\dist^2(x,S_i))
            \leq h(k-1)(1+\eps)r= rh(k),
        \end{split}
    \end{equation}
    where the last inequality is by~\eqref{wa}. This proves~\eqref{aa2246} for Case (ii).
    \end{proof}

    Lemma~\ref{lemhelp}. yields a coreset for $k$-SMMs in $Y_{\xi}$.
    The following Theorem generalize it to $k$-GMMs,
    which aim to minimize maximum (instead of sum) of likelihoods over the input points.

    \begin{theorem}[$\ell_{\infty}$ coreset for log-likelihood\label{hyp2}]
    Let $D$ be a finite set of points in $\REAL^d$, and $k\geq1$ be an integer.
    Let $H_{2d+1,k}$ denote the union over every set of $k$ subspaces in $\REAL^{2d+1}$.
    Let $P=\br{(p\mid 0,\cdots,0)\in\REAL^{2d+1}\mid p\in D}$, and $C$ be a $(1/3)$-coreset for $(P,H_{k,2d+1},\dist,\norm{\cdot}_{\infty})$.

    Then, for every $\ta>0$, $S=\br{p\in D\mid (p\mid 0,\cdots,0)\in C}$ is an $O(k/\ta^2)$-coreset for $(D,\vartheta_k(0),\phi_{\ta},\norm{\cdot}_{\infty})$, and for $(D,\vartheta_k(e^{\ta}/(2\pi)),L,\norm{\cdot}_{\infty})$.
    \end{theorem}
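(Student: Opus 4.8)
The plan is to read Theorem~\ref{hyp2} as a chain of translations between the three cost functions that have already been linked earlier: from the projective-clustering coreset $C$ in $\REAL^{2d+1}$, to a $\cost_\infty$-coreset for $k$-SMMs via Lemma~\ref{lemma cost_inf to dist_inf}, to a $\phi_\ta$-coreset for $k$-GMMs via the pointwise identity of Lemma~\ref{lemma2}, and finally to an $L$-coreset via Observation~\ref{obss}. Since $S$ is exactly the un-padded copy of $C$ and $C\subseteq P$, the lower-bound direction of each coreset inequality is automatic (a maximum over a subset can only be smaller), so throughout I only need to establish the upper bound, i.e. that the farthest-point cost on the full set exceeds the one on the coreset by at most the claimed factor.

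First I would fix an arbitrary $k$-GMM $\theta\in\vartheta_k(0)$ and apply Lemma~\ref{lemma2} to produce a single $k$-SMM $y=(W,\omega_1,\ldots,\omega_k,S_1,\ldots,S_k)$ in $\REAL^{2d+1}$. The crucial observation is that the construction of $y$ there depends only on $\theta$ (through the $\mu_i,\Sigma_i$ and the common extra-axis offset $\sqrt{\ta/W}$) and not on the query point, so the two conclusions $\phi_\ta(\br{p},\theta)=\cost(\br{p'},y)$ and $W\dist^2(p',S(y))\ge\ta$ hold \emph{simultaneously} for every $p\in D$, where $p'=(p\mid 0,\ldots,0)$. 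Taking the maximum over the padded points thus gives $W\dist_\infty^2(P,S(y))\ge\ta$, placing $y$ in the family $Y_\ta$ of Lemma~\ref{lemma cost_inf to dist_inf}, together with the two identities
\[
\max_{p\in D}\phi_\ta(\br{p},\theta)=\cost_\infty(P,y)
\qquad\text{and}\qquad
\max_{p\in S}\phi_\ta(\br{p},\theta)=\cost_\infty(C,y),
\]
the second holding because $S$ is precisely the un-padded image of $C$.

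Next I would invoke Lemma~\ref{lemma cost_inf to dist_inf} in ambient dimension $2d+1$. Restricting to the interesting regime $\ta\le 1$ (harmless, and it ensures $\ta\in(0,\sqrt{2k})$ since $k\ge1$), the hypothesis that $C$ is a $(1/3)$-coreset for $(P,H_{k,2d+1},\dist,\norm{\cdot}_\infty)$ yields that $C$ is an $O(k/\ta)$-coreset for $(P,Y_\ta,\cost,\norm{\cdot}_\infty)$; in particular $\cost_\infty(P,y)\le O(k/\ta)\,\cost_\infty(C,y)$ for the specific $y$ above. Substituting the two identities of the previous step turns this into $\max_{p\in D}\phi_\ta(\br{p},\theta)\le O(k/\ta)\max_{p\in S}\phi_\ta(\br{p},\theta)$, and since $\ta\le1$ gives $k/\ta\le k/\ta^2$, we conclude that $S$ is an $O(k/\ta^2)$-coreset for $(D,\vartheta_k(0),\phi_\ta,\norm{\cdot}_\infty)$ as $\theta$ was arbitrary. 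The statement for the negative log-likelihood then follows from Observation~\ref{obss} with $z=\infty$ and $\ta'=e^\ta/(2\pi)$: writing $L(\br{p},\theta)=-\ln Z(\theta)+\phi_\ta(\br{p},\theta)$ with $-\ln Z(\theta)\ge0$ on $\vartheta_k(\ta')$, the maximum over points separates additively and a multiplicative factor $\ge1$ is preserved, upgrading the $\phi_\ta$-coreset to an $O(k/\ta^2)$-coreset for $(D,\vartheta_k(e^\ta/(2\pi)),L,\norm{\cdot}_\infty)$.

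The step I expect to be delicate is the first one: making rigorous that a \emph{single} $k$-SMM $y$ realizes both the pointwise cost identity and the distance lower bound for all points at once, since Lemma~\ref{lemma2} is phrased one point at a time even though its construction is point-independent. Getting this right is exactly what lets the $\max$ commute with the identity and what places $y$ in $Y_\ta$, and everything downstream rests on it. The only other care-point is the bookkeeping of the approximation factor — reconciling the $O(k/\ta)$ delivered by Lemma~\ref{lemma cost_inf to dist_inf} with the $O(k/\ta^2)$ claimed here (handled by the regime $\ta\le1$) and confirming the range $\ta\in(0,\sqrt{2k})$ required by that lemma.
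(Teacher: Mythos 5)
Your proof follows exactly the same route as the paper's: fix a $k$-GMM $\theta$, apply Lemma~\ref{lemma2} to produce a single $k$-SMM $y$ satisfying both the pointwise identity $\phi_\ta(\br{p},\theta)=\cost(\br{p'},y)$ and the lower bound $W\dist^2_\infty(P,S(y))\geq\ta$, translate the maxima over $D$ and $S$ into $\cost_\infty(P,y)$ and $\cost_\infty(C,y)$ via the padded/un-padded correspondence, invoke Lemma~\ref{lemma cost_inf to dist_inf}, and conclude for $L$ via Observation~\ref{obss}. If anything you are more careful than the paper, which silently cites Lemma~\ref{lemma cost_inf to dist_inf} as yielding the factor $O(k/\ta^2)$ when that lemma actually states $O(k/\ta)$, and which asserts without comment that one $y$ works simultaneously for all points --- both gaps you explicitly identify and close (the factor by restricting to $\ta\leq 1$, the simultaneity by noting the construction in Lemma~\ref{lemma2} depends only on $\theta$).
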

    \begin{proof}
    Let $\theta$ be a $k$-GMM in $\REAL^d$ and $\ta\geq 0$ be a constant.
    By Lemma~\ref{lemma2}, there is a $k$-SMM $y$ in $\REAL^d$ such that
                \begin{equation}\label{emin3}
                W\dist^2_{\infty}(P,S(y))\geq \ta,
            \end{equation}
            and
    \begin{equation}\label{phii}
    \phi(\br{p}, \theta)=\cost(x,y),
    \end{equation}
    for every $p\in D$ and its corresponding point $x=(p^T \mid 0,\cdots,0)^T\in P$.
    By summing~\eqref{phii} over every $p\in D$ and $p\in S$ respectively, we obtain
    \begin{equation}\label{ppi1}
    \phi_{\ta}(D, \theta)=\cost_{\infty}(P,y) \text{ and } \phi_{\ta}(S, \theta)=\cost_{\infty}(C,y).
    \end{equation}

    By Lemma~\ref{hyp}, $C$ is an $O(k/\ta^2)$-coreset of $\cost_{\infty}(P,\cdot)$ for every $k$-SMM $y$ that satisfies~\eqref{emin3}. Hence,
    \[
    \cost_{\infty}(P,y)\in O(k/\ta^2)\cost_{\infty}(C,y).
    \]
    Combining this with~\eqref{ppi1} yields
    \[
    \phi_{\ta}(D, \theta)=
    \cost_{\infty}(P,y)
    \in O(k/\ta^2)\cost_{\infty}(C,y)
    =
    O(k/\ta^2)\phi_{\ta}(S, \theta).
    \]
    Since the last equality holds for every $k$-GMM $\theta$, we conclude that $S$ is the desired coreset for $D$.

    It is also a coreset for $(D,\vartheta_k(e^\ta/(2\pi)),L,\norm{\cdot}_\infty)$ by Observation~\ref{obss}.
    \end{proof}

\chapter{VC-Dimension Bound} \label{sec Bound on the VC-Dimension}
    To compute a coreset using Theorem~\ref{supsampe}
    we need to bound both the sensitivity $s$ and the corresponding dimension of the query space.
    In this section we bound the dimension. This is based on the following general result that bounds the VC-dimension on a set via the time it takes to answer a query.

    \begin{definition}[operations\label{op}]
    Let $P,Y$ be two sets, and $f:P\times Y\to \REAL$ be a function that can be evaluated by an algorithm that gets
    $(p,y)\in P\times Y$ and returns $f(p,y)$ after no more than $z$ of the following operations:
      \begin{enumerate}
      \item the exponential function $\alpha\mapsto e^\alpha$ on real numbers,
      \item the arithmetic operations $+, -, \times, $ and $/$ on real numbers,
      \item jumps conditioned on $>,\geq, <, \leq, =,$ and $\neq$ comparisons of real numbers.
      \end{enumerate}
      If the $z$ operations include no more than $k$ in which the exponential function is evaluated,
      then we say that the function $f$ can be evaluated using \emph{$z$ operations that include $k$ exponential operations.}
    \end{definition}

    The following is a variant of~\cite[Theorem 8.14]{anthony2009neural} for our version of VC-dimension's definition.
    \begin{theorem}[\label{vcl}Variant of~\cite{anthony2009neural}\label{thm:anthony2009neural}]
      Let $h:\REAL^d\times\REAL^m\to\br{0,1}$ be a binary function that can be evaluated using $O(z)$ operations that include $O(k)$ exponential operations;
      see Definition~\ref{op}.
      Then the dimension of $(\REAL^d,\REAL^m,h)$ is $O(m^2k^2 + mkz)$.
    \end{theorem}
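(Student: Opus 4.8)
The plan is to reduce the statement to the Anthony--Bartlett machinery \cite{anthony2009neural} and then transport their VC-dimension bound across the (minor) change of definition. First I would record that, because $h$ takes values in $\br{0,1}$, the ranges of Definition~\ref{vdim} collapse: as $r$ ranges over $\REAL$ the set $\range_{\REAL^d,h}(y,r)=\br{p\in\REAL^d\mid h(p,y)\le r}$ equals $\emptyset$ (for $r<0$), the concept $\br{p\mid h(p,y)=0}$ (for $0\le r<1$), or all of $\REAL^d$ (for $r\ge 1$). Hence the set system $\ranges(\REAL^d,\REAL^m,h)$ agrees, up to the two trivial sets and complementation, with the classical concept class $\mathcal H=\br{\br{p\mid h(p,y)=1}\mid y\in\REAL^m}$, and the dimension of $(\REAL^d,\REAL^m,h)$ equals the ordinary VC-dimension of $\mathcal H$. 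When this theorem is later applied to a non-binary cost $f$, one first binarises by folding the threshold $r$ into the query, replacing $h$ by $(p,(y,r))\mapsto \mathbf 1[f(p,y)\le r]$; this only raises the number of real parameters from $m$ to $m+1$, which is absorbed into the $O(\cdot)$.

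Next I would set up the standard shattering and sign-counting argument for $\mathcal H$. Suppose $p_1,\dots,p_N\in\REAL^d$ are shattered. Then, as the parameter $y$ ranges over $\REAL^m$, the map $y\mapsto (h(p_1,y),\dots,h(p_N,y))$ realises all $2^N$ bit-vectors, so the number of distinct sign patterns of the $N$ decision functions is at least $2^N$. By Definition~\ref{op}, for each fixed $p_i$ the map $y\mapsto h(p_i,y)$ is computed by a fixed program of at most $z$ operations, of which at most $k$ are exponentials and the rest are arithmetic operations and sign tests. Introducing one auxiliary variable per exponential evaluated along the computation turns every branch test into the sign of a function that is polynomial in $y$ together with these auxiliary variables, the latter being constrained by the exponential (Pfaffian) relations; over the $N$ points this yields an arrangement of $O(Nz)$ atomic predicates in $m$ variables built from a Pfaffian chain whose per-computation exponential complexity is $k$.

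I would then invoke the Khovanskii-type bound for systems of exponential (Pfaffian) functions, in the sharpened form underlying \cite[Theorem 8.14]{anthony2009neural}, to bound the number of realisable sign patterns by a quantity $B$ with $\log_2 B = O(m^2k^2+mkz)+O(m\log N)$. The essential point, which is exactly the content of that machinery, is that the expensive $k^2$-type terms are driven by the \emph{per-point} exponential count $k$ and acquire only a polynomial (hence, after taking logs, logarithmic) dependence on the number $N$ of shattered points. Combining $2^N\le B$ and solving the resulting transcendental inequality for $N$ then gives $N=O(m^2k^2+mkz)$, the claimed dimension.

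The step I expect to be the true obstacle is the component/sign-pattern count in the presence of the exponential function: a naive application of Khovanskii's bound through a single Pfaffian chain of length $Nk$ would contribute a useless $2^{\Theta((Nk)^2)}$ factor and produce no finite bound at all. Forcing the complexity to enter as $k$ rather than $Nk$, so that only a polynomial-in-$N$ factor survives, is precisely the delicate part supplied by the Karpinski--Macintyre/Anthony--Bartlett analysis, which I would cite as a black box; the remainder is the definitional bookkeeping above together with the routine balancing of $2^N\le B$.
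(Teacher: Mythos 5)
Your proposal is correct and takes essentially the same route as the paper, which offers no internal proof of this theorem: it imports \cite[Theorem 8.14]{anthony2009neural} (the Karpinski--Macintyre sign-pattern machinery you describe) as a black box and treats the statement as that result transported across its definition of dimension. Your additional bookkeeping --- collapsing the ranges of Definition~\ref{vdim} for binary $h$ to the classical concept class, and folding the threshold $r$ into the query with $m\mapsto m+1$ --- is exactly the translation the paper itself performs when applying the theorem in Corollary~\ref{pdi}.
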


    \begin{corollary}\label{pdi}
    Let $P\subseteq\REAL^d$, $Y'\subseteq \REAL^m$ and $f:P\times Y'\to[0,\infty)$.
    If, for every $p\in P$ and $y'\in Y'$ the value $f(p,y')$ can be computed in $O(z)$ arithmetic operations that include $O(k)$ exponential operations,
    then the dimension of $(P,Y',f)$ is $O(m^2k^2 + mkz)$.
    \end{corollary}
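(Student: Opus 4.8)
The plan is to reduce the corollary to Theorem~\ref{vcl} through two standard moves: first, fold the real threshold $r$ from Definition~\ref{vdim} into the query, turning the real-valued kernel $f$ into a $\{0,1\}$-valued function to which the theorem applies; second, use monotonicity of the VC-dimension to pass from the ambient spaces $\REAL^d,\REAL^m$ of the theorem down to the subsets $P\subseteq\REAL^d$ and $Y'\subseteq\REAL^m$.

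First I would recall that, by Definition~\ref{vdim}, the dimension of $(P,Y',f)$ is the VC-dimension of the range space generated by the sets $\range_{P,f}(y',r)=\br{p\in P\mid f(p,y')\le r}$ over all $y'\in Y'$ and $r\in\REAL$. The threshold $r$ therefore behaves as one additional real query coordinate. Accordingly, I would set $m'=m+1$, regard each pair $(y',r)$ as a point of $\REAL^{m+1}$, and define the binary function $h:\REAL^d\times\REAL^{m+1}\to\br{0,1}$ by $h(p,(y',r))=1$ if $f(p,y')\le r$ and $h(p,(y',r))=0$ otherwise. By construction $\br{p\mid h(p,(y',r))=1}=\range_{P,f}(y',r)$, so the range space induced by $h$ agrees, up to complementation (which preserves VC-dimension), with the one induced by $f$ together with its thresholds.

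Next I would account for the evaluation cost of $h$. On input $(p,(y',r))$ the evaluating algorithm computes $f(p,y')$ using the assumed $O(z)$ arithmetic operations of which $O(k)$ are exponentials, and then performs a single comparison of the resulting real value against $r$; this is one extra operation of the type allowed in Definition~\ref{op}. Hence $h$ is evaluated in $O(z)$ operations that include $O(k)$ exponential operations, and Theorem~\ref{vcl} applied to $h$ with query dimension $m+1$ bounds the dimension of $(\REAL^d,\REAL^{m+1},h)$ by $O((m+1)^2k^2+(m+1)kz)=O(m^2k^2+mkz)$.

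Finally I would transfer this bound from $\REAL^d,\REAL^{m+1}$ down to $P$ and $Y'$. Restricting the ground set from $\REAL^d$ to $P$ and the queries from $\REAL^m$ to $Y'$ can only shrink the family of subsets induced on any test set, so the dimension of $(P,Y',f)$ is at most the dimension of $(\REAL^d,\REAL^{m+1},h)$, giving the claimed $O(m^2k^2+mkz)$. The one place needing care—and the only step that is more than bookkeeping—is the identification that the threshold-augmented range space of $f$ is genuinely the same object (modulo complement) as the range space that Definition~\ref{vdim} attaches to the binary $h$ of Theorem~\ref{vcl}; once this identification is in place, the operation count and the monotonicity step are routine.
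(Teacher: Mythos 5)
Your proposal is correct and takes essentially the same route as the paper's proof: both fold the threshold $r$ into the query vector to get a binary function $h$ on $\REAL^d\times\REAL^{m+1}$, apply Theorem~\ref{vcl} with query dimension $m+1$ (the single extra comparison keeping the count at $O(z)$ operations with $O(k)$ exponentials), and invoke monotonicity of the dimension to pass down to $P$ and $Y'$. The only cosmetic difference is your sign convention ($h=1$ rather than the paper's $h=0$ when $f(p,y')\le r$), which you correctly dispose of by noting that complementation preserves shattering.
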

    \begin{proof}
    It suffices to prove the desired bound on the dimension of $(\REAL^d,\REAL^m,f)$,
    i.e. assume that $Y'=\REAL^m$, since the VC-dimension,
    as the dimension of $(P,Y,f)$, is monotonic in the cardinality of the set $Y'$ of queries and set $P$.
    Suppose that $S$ is the largest subset $S\subseteq \REAL^d$ such that
    \begin{equation}\label{ee}
    |\br{\range_{S,f}(\c,r) \mid r\in\REAL, \c\in \REAL^m}|= 2^{|S|};
    \end{equation}
    see Definition~\ref{vdim}.
    We need to upper bound the size of $S$.

    Define $h:\R^{m+1}\times \R^{d} \to \br{0,1}$ such that $h(x,p) = 0$ if and only if there is
    $y'\in \REAL^m$ and $r\in\REAL$ such that $x^T=((y')^T \mid r^T)$ and $f(p,y')\leq r$. We then have for every $x=((y')^T \mid r^T)$ in $\REAL^{m+1}$,
    \[
    \begin{split}
    \range_{S,h}(x,0)
    &=\range_{S,h}(((y')^T \mid r^T),0)
    =\br{p\in S \mid h(p,((y')^T \mid r^T))\leq 0 }\\
    &=\br{p\in S \mid h(p,((y')^T \mid r^T))=0 }
    =\br{p\in S \mid f(p,y')\leq r }
    =\range_{S,f}(y',r).
    \end{split}
    \]

    Hence,
    \begin{equation}\label{ss123}
    \begin{split}
    |\br{\range_{S,h}(x,r') \mid r'\in\REAL, x\in \REAL^{m+1}}|
    &\leq |\br{\range_{S,h}(x,0)\mid x\in\REAL^{m+1}}|\\
    &=|\br{\range_{S,f}(y',r) \mid r\in\REAL, y'\in \REAL^m}|
    =2^{|S|},
    \end{split}
    \end{equation}
    where the last equality is by~\eqref{ee}.
    Since $\range_{S,h}(x,r')$ is a subset of $S$, and there are at most $2^{|S|}$ such subsets,~\eqref{ss123} implies
    \begin{equation}\label{before}
    |\br{\range_{S,h}(x,r') \mid r'\in\REAL, x\in \REAL^{m+1}}|=2^{|S|}.
    \end{equation}

    By Theorem~\ref{vcl}, dimension of $(\REAL^d,\REAL^{m+1},h)$ is $d'\in O(m^2k^2 + mkz)$.
    Hence, the size of the largest subset $S\subseteq \REAL^d$ that satisfies~\eqref{before} is $|S|\leq d'$.
    \end{proof}

    The following corollary generalizes Theorem~12 in \cite{lucic2017training} from $\cost$ to $f$, and from semi-spherical Gaussians instead of any Gaussian, using similar approach.
    \begin{corollary}[Generalization of~[Theorem 12]\cite{lucic2017training}\label{thm:pdimgmm}]
    Let $k\geq1$ be an integer and $(P,w)$ be a weighted set such that $P\subseteq\REAL^d$ is finite.
    Let $\ta\geq 0$ and $f:\REAL^d\times \vartheta_k(0)\to[0,\infty)$ such that
    \begin{equation}\label{ffdef}
    f(p,\theta)=\frac{\phi_{\ta}(\bar{p},\theta)}{\phi_{\ta}((P,w),\theta)},
    \end{equation}
    if the denominator is positive, and $f(p,\theta)=0$ otherwise.
    Then the dimension of $(P,\vartheta_k(0),f)$ is $O(d^4k^4)$.
    \end{corollary}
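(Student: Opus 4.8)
The plan is to reduce the statement to Corollary~\ref{pdi} by exhibiting a parametrization of $\vartheta_k(0)$ inside some $\REAL^m$ together with a function that induces the same range space as $f$ and can be evaluated with few arithmetic operations and few exponentials. The first step is to discard the denominator in~\eqref{ffdef}. For a fixed $\theta$ the value $c(\theta):=\phi_{\ta}((P,w),\theta)$ is a positive constant independent of $p$, so whenever $c(\theta)>0$ we have $\range_{P,f}(\theta,r)=\br{p\in P\mid \phi_{\ta}(\br{p},\theta)\leq r\,c(\theta)}$, and as $r$ ranges over $\REAL$ the threshold $r\,c(\theta)$ also ranges over all of $\REAL$. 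Hence the family of ranges produced by $f$ coincides with the family produced by $\phi_{\ta}(\br{\cdot},\theta)$ itself (the case $c(\theta)=0$ gives $f\equiv0$ and only the trivial ranges $\emptyset,P$, which cannot increase the dimension). Thus it suffices to bound the dimension of $(P,\vartheta_k(0),\phi_{\ta})$.

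The second step removes the logarithm and all normalizing factors, and is the crux of the argument, since $\ln$, determinants, and square roots are not among the permitted operations of Definition~\ref{op}. Writing out~\eqref{phi} with~\eqref{zdef} and~\eqref{ot}, the condition $\phi_{\ta}(\br{p},\theta)\leq r$ is equivalent, by monotonicity of $\exp$, to $\sum_{i=1}^k \omega'_i\exp(-\tfrac12(p-\mu_i)^T\Sigma_i^{-1}(p-\mu_i)-\ta)\geq e^{-r}$, and after multiplying by the positive constant $Z(\theta)e^{\ta}$ this reads
\[
\tilde g(p,\theta):=\sum_{i=1}^k \frac{\omega_i}{\sqrt{\det(2\pi\Sigma_i)}}\exp\!\left(-\tfrac12(p-\mu_i)^T\Sigma_i^{-1}(p-\mu_i)\right)\ \geq\ t,
\]
where $t:=Z(\theta)e^{\ta-r}$ ranges over $(0,\infty)$ as $r$ ranges over $\REAL$. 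Therefore the ranges of $\phi_{\ta}$ are exactly the complements within $P$ of the strict sublevel sets $\br{p\mid \tilde g(p,\theta)<t}$. Since passing to complements preserves shattered sets, and on a finite set every strict sublevel set is also a non-strict one for a suitable threshold, it is enough to bound the dimension of $(P,\vartheta_k(0),\tilde g)$ with the $\leq$ convention; note $\tilde g\geq0$ as required by Corollary~\ref{pdi}.

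The third step is the parametrization and the operation count. I would parametrize each component by $(\alpha_i,\mu_i,M_i)$ with $\alpha_i:=\omega_i/\sqrt{\det(2\pi\Sigma_i)}\in(0,\infty)$ and $M_i:=\Sigma_i^{-1}$ symmetric, so that $\tilde g(p,\theta)=\sum_{i=1}^k \alpha_i\exp(-\tfrac12(p-\mu_i)^TM_i(p-\mu_i))$; relaxing $\alpha_i$ to $\REAL$ and $M_i$ to an arbitrary symmetric matrix only enlarges the query class, hence the dimension. This embeds the query set into $\REAL^m$ with $m=k\big(1+d+\tfrac{d(d+1)}{2}\big)=O(kd^2)$, and, crucially, eliminates every inversion, determinant, and square root: for fixed $p$ one forms $v=p-\mu_i$, then $M_iv$, then $v^T(M_iv)$ in $O(d^2)$ arithmetic operations, applies one exponential, scales by $\alpha_i$, and sums over $i$. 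Thus $\tilde g$ is computable with $z=O(kd^2)$ arithmetic operations of which only $O(k)$ are exponentials.

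Finally, Corollary~\ref{pdi} gives dimension $O(m^2k^2+mkz)$; substituting $m=O(kd^2)$, $z=O(kd^2)$, and $O(k)$ exponentials yields $m^2k^2=O(k^4d^4)$ and $mkz=O(k^3d^4)$, so the dimension of $(P,\vartheta_k(0),\tilde g)$, and therefore of $(P,\vartheta_k(0),f)$, is $O(d^4k^4)$, as claimed. The main obstacle is exactly the second step: a naive evaluation of $f$ needs a logarithm, a determinant, and a square root per component, as well as a sum over all of $P$ in the denominator, none of which fit Definition~\ref{op} and the last of which would make the count depend on $n$. The monotonicity-of-$\exp$ reduction, together with the observation that the denominator and all normalizers are positive constants for fixed $\theta$ and may be folded into the free threshold $t$ or into the weights $\alpha_i$, is what keeps the operation count free of forbidden operations and independent of $n$.
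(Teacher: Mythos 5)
Your proposal is correct, and it takes a genuinely different route from the paper. The paper's proof funnels everything through its $k$-SMM machinery: it pads each point of $P$ with zeros into $\REAL^{2d+1}$, invokes Lemma~\ref{lemma2} to replace every $k$-GMM $\theta$ by a $k$-SMM $y$ with $\phi_{\ta}(\br{p},\theta)=\cost\big((p^T\mid \mathbf{0}),y\big)$, shows via an explicit range-to-range correspondence that $\dim(P,\vartheta_k(0),f)\leq\dim(P',Y,g)$, and only then drops the denominator and the logarithm (via $\cost'=e^{\cost}$) and counts operations on the subspace parametrization, with $m=O(d^2k)$ and $O(m)$ operations including $k+1$ exponentials, before applying Corollary~\ref{pdi}. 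You bypass the lifting and the SMM reduction entirely: you fold the normalizers $Z(\theta)$, $e^{\ta}$, and $\omega_i/\sqrt{\det(2\pi\Sigma_i)}$ into relaxed parameters $\alpha_i$ and set $M_i=\Sigma_i^{-1}$, parametrizing the (enlarged) query class directly in $\REAL^m$ with $m=O(kd^2)$, which yields the same counts and the same $O(m^2k^2+mkz)=O(d^4k^4)$ bound. Your version is more elementary and self-contained, and it is actually \emph{more} careful than the paper's on the step where the logarithm is removed: you handle the inequality-direction flip and the strict-versus-nonstrict threshold conversion on a finite set explicitly, whereas the paper asserts $\dim(P,Y,\cost)=\dim(P,Y,\cost')$ without addressing the complementation (and its displayed identity $\cost'(p,y)=e^{\cost(p,y)}=-\sum_i\omega_i\exp(-W\dist^2(p,S_i))$ carries a sign slip, since it is $e^{-\cost}$ that equals the sum). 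What the paper's detour buys is reuse: the bound is established for the SMM query space that the rest of the framework (sensitivities via projective clustering) already lives in, so a single dimension bound serves both formulations. Two cosmetic repairs to yours: relax $\alpha_i$ to $[0,\infty)$ rather than all of $\REAL$ so that $\tilde g\geq 0$ matches the stated codomain of Corollary~\ref{pdi} (nonnegativity is not actually used in its proof, but the statement demands it), and your threshold should be $t=Z(\theta)e^{-r}$ since the $e^{\ta}$ factors cancel when $\omega'_i e^{-\ta}$ is expanded --- immaterial, as $t$ is a free positive parameter either way.
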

    \begin{proof}
    Let $P=\br{(p\mid 0,\cdots,0)\in\REAL^{2d+1}\mid p^T\in P}$.
    For every $k$-SMM $y=(W,\omega_1,\cdots,\omega_k,S_1,\cdots,S_k)$ in $Y$, and $p\in P$ define
            \[
                \cost(p,y)=-\ln \sum_{i=1}^k \omega_i \exp(-W\dist^2(p,S_i)),
            \]
    and
    \[
    g(p,y)=\frac{\cost(p,y)}{\sum_{q\in P}w(q)\cost(q,y)},
    \]
    if the denominator is positive, and $g(p,y)=0$ otherwise. We first prove that
    \begin{equation}\label{dimm}
    \dim(D,\vartheta_k(0),f)\leq \dim(P,Y,g).
    \end{equation}

    Indeed, let $S$ be the largest subset of $P$, such that
    \begin{equation}\label{ssvc}
    |\br{S\cap \range \mid \range \in \ranges(P,\vartheta_k(0),f)}|=2^{|S|}.
    \end{equation} 
    Hence,
    \begin{equation}\label{2ss}
    \begin{split}
    2^{|S|}
    &=|\br{S\cap \range \mid \range \in \ranges(D,\vartheta_k(0),f)}|\\
    &=|\br{\range_{S,f}(\theta,r)\mid r\geq 0, \theta\in\vartheta_k(0) }|.
    \end{split}
    \end{equation}

    Let $S'=\br{(p^T \mid \mathbf{0})\mid p\in D}$, where $\mathbf{0}:=(0,\cdots,0)\in\REAL^{d+1}$.
    Let $\theta\in \vartheta_k(0)$ be a $k$-GMM. By Lemma~\ref{lemma2}, there is a $k$-SMM $y\in Y$ such that for every
    $p\in \REAL^d$ we have $\phi_{\ta}(\br{p},\theta)=\cost((p^T\mid \mathbf{0}),y)$.
    Hence, for every $p\in S$, there is a corresponding point $(p^T\mid \mathbf{0})\in S'$ such that
    \[
    f(p,\theta)
    =\frac{\phi_{\ta}(\br{p},\theta)}{\sum_{q\in P}w(q)\phi(q,\theta)}
    =\frac{\cost((p^T\mid \mathbf{0}),y)}{\sum_{q\in P}w(q)\cost((q^T\mid 0),y)}
    =g((p^T\mid \mathbf{0}),y).
    \]
    Here, we assumed $\phi_{\ta}(p,\theta)>0$, otherwise $f(p,\theta)=0=g((p^T\mid \mathbf{0}),y)$.

    In particular, for every $r\geq 0$ the set
    \[
    \range_{S,f}(\theta,r)=\br{p\in S\mid f(p,\theta)\leq r }
    \]
     has a corresponding distinct set
    \[
    \range_{S',g}(y,r)=\br{(p^T \mid \mathbf{0})\in S'\mid g((p^T \mid \mathbf{0}),y)\leq r }.\]

    Therefore,
    \begin{equation}\label{2sst}
     2^{|S|}
     = |\br{\range_{S,f}(\theta,r)\mid r\geq 0, \theta\in\vartheta_k(0) }|
    \leq |\br{ \range_{S',g}(y,r)\mid r\geq 0, y\in Y }|,
     \end{equation}
     where the first equality is by~\eqref{2ss}. 
    Since the last expression is a set of subsets from $S'$, its size is upper bounded by $2^{|S'|}=2^{|S|}$. Together with~\eqref{2sst} we obtain,
    \[
    2^{|S'|}= |\br{ \range_{S',g}(y,r)\mid r\geq 0, y\in Y }|,
    \]
    so $|S'|\leq \dim(P,Y,f)$ by the definition of $\dim(P,Y,f)$. The last inequality proves~\eqref{dimm} as
    \[
    2^{\dim(D,\vartheta_k(0),g)}=2^{|S|}= 2^{|S'|}\leq  2^{\dim(P,Y,f)},
    \]
    where the first equality is by the definition of $S$ and $\dim(P,\vartheta_k(0),g)$.

    Next, we bound $\dim(P,Y,g)$. Indeed, the range $\br{p\in S\mid g(p,y)\leq r}$ of $(P,Y,g)$ is the same as the range
    $\br{p\in P\mid \cost(p,y)\leq r'}$ for $r'=r\sum_{q\in P}w(q)\cost(q,y)$, so it suffices to bound the dimension of $(P,Y,\cost)$
    which has the same dimension if the function $\cost(\cdot,\cdot)$ is replaced by $\cost'=e^{\cost(\cdot,\cdot)}$, i.e.
    \begin{equation}\label{bbb}
    \dim(P,Y,g)=\dim(P,Y,\cost)=\dim(P,Y,\cost').
    \end{equation}

    For every $y\in Y$ let $y'\in\REAL^m$ denote the concatenation of the parameters $W,\omega$ and the orthogonal bases of $S_1,\cdots,S_k$ into a single vector of length $m=O(d^2k)$.
    The value $e^{\cost(p,y)}$ can be evaluated for every $p\in P$ and $y'\in Y'$ using $t = O(m)$ operations.
    Let $P,Y$ be two sets, and $f:P\times Y\to \REAL$ be a function that can be evaluated by an algorithm that gets $(p,y)\in P\times Y$ and returns
    \[
    \cost'(p,y)=e^{\cost(p,y)}=-\sum_{i=1}^k \omega_i \exp(-W\dist^2(p,S_i))
    \]
    after $O(m)$ operations that include $k+1$ exponential operations; see Definition~\ref{op}.

    Applying Corollary~\ref{pdi} yields that the dimension of $(P,Y',\cost')$ is
    \[
    \dim(P,Y,\cost')\in O(d^4k^4).
    \]
    Combining this with~\eqref{bbb} and~\eqref{dimm} proves the corollary as
    \[
    \dim(D,\vartheta_k(0),f)\leq \dim(P,Y,g)=\dim(P,Y,\cost')\in O(d^4k^4).
    \]
    \end{proof}
    As stated in~\cite{feldman2011scalable,lucic2017training},
    the lower-bound of $\Omega(kd^2)$ was established by \cite{akama2011vc} for the dimension of $(P,Y,f)$ above.
    It is an open problem whether this gap can be closed further in the general setting.

\chapter{Coresets for Streaming Data} \label{sec Coresets for Streaming}
    In the previous sections we bound the sensitivity and dimension of the desired query space $(P',\CC,\ff,\loss)$.
    However, as stated later in Lemma~\ref{offline}, the construction time of the coreset is quadratic in $n$.
    This is due to the computation time of the sensitivity $s$ in Corollary~\ref{sensitivitylemmasquareddistances}.
    To obtain a near-linear time algorithm, we use the well-known streaming approach that is described in this section.
    It enables us to compute the coreset only on small subsets of the input $n$ times.
    Hence, we use it even if all the input points are given (off-line).

    The idea behind the merge-and-reduce tree that is shown in Algorithm~\ref{algsup}
    is to merge every pair of subsets and then reduce them by half.
    The relevant question is what is the smallest size of input that the given coreset can reduce by half.
    The log-Lipschitz property below is needed for approximating the cumulative error during the construction of the tree.

    In the following definition "sequence" is an ordered multi-set.
    \begin{definition}[input stream]
    Let $P$ be a (possibly infinite, unweighted) set.
    A\emph{ stream of points from $P$} is a procedure whose $i$th call returns the $i$th points $p_i$
    in a sequence $(p_1,p_2,\cdots)$ of points that are contained in $P$, for every $i\geq1$.
    \end{definition}

    \begin{definition}[halving function\label{halv}]
    Let $\eps,\delta, r>0$.
    A non-decreasing function $s:[0,\infty)\to[0,\infty)$ is an \emph{$(\eps,\delta,r)$-halving} function of a function
    $\size:[0,\infty)^4\to[0,\infty)$ if for every integer $h\geq1$, $n=s(h)$, and $w'= 2^hn$ we have
    \[
    \size(2n,w',\eps/h,\delta/4^h)\leq n,
    \]
    and $s$ is \emph{$r$-log-Lipschitz} over $[c,\infty)$ for some $c=O(1)$, i.e., for every $\Delta,h\geq c$ we have $s(\Delta h)\leq \Delta^r s(h)$.
    \end{definition}

    \begin{definition}[mergable coreset scheme\label{merge}]
    Let $(\coralg,\size,\time)$ be an $(\eps,\delta)$-coreset scheme for the query space $(P,Y,\cost,\loss)$,
    such that the total weight of the coreset and the input is the same, i.e. a call to $\coralg((Q,w),\eps,\delta)$ returns a weighted set $(C,u)$
    whose overall weight is $\sum_{p\in C}u(p)=\sum_{p\in Q}w(p)$. Let $s$ be an $(\eps,\delta,r)$-halving function for $\size$.
    Then the tuple ($\coralg, s,\time)$ is an $(\eps,\delta,r)$-\emph{mergable coreset scheme} for $(P,Y,\cost,\loss)$.
    \end{definition}

    \begin{algorithm}
        \caption{$\stralg(\stream,\eps,\delta,\coralg,s)$}
        \label{algsup}
        {\begin{tabbing}
        \textbf{Input:\quad }\quad\=An input $\stream$ of points from a set $P$, \\\>an error parameter $\eps\in(0, 1/2)$,
        probability of success $\delta \in(0, 1/2)$, and \\
        \textbf{Required:} \> An algorithm $\coralg$ and $s:[0,\infty)\to [0,\infty)$ such that $(\coralg,s,\time)$ is a \\
        \>mergable coreset scheme for $(P,Y,\cost,\loss)$.\\
        \textbf{Output:} \>A sequence $C'_1,C'_2,\cdots$ of coresets that satisfies Theorem~\ref{thmstream}.
        \end{tabbing}\vspace{-0.3cm}}
        \For {every integer $h$ from $1$ to $\infty$\label{forh1}}
        {Set $S_i:= \emptyset$ for every integer $i\geq 0$\\
        $T_{h-1}\gets S_{h-1}$ \label{forh15}\\
        \For{$2^{h-1}\cdot  s(h)$ iterations\label{forh}}
        {
        Read the next point $p$ in $\stream$ and add it to $S_0$\\
        \If {$|S_0|=s(h)$ \label{forh5}}
        {$i:=0$; $S:= \emptyset$\\
        \While {$S_i\neq  \emptyset$\label{forh7}}
        {$S:=\coralg\left(S\cup S_i,\frac{\eps}{h},\frac{\delta}{4^h}\right)$\label{forh8}\\
        $S_i:=\emptyset$\label{forh9}\\
        $i:=i+1$
        }
        $S_i:= S$\\
        }
        $C'_n:=\coralg\left(\left(\bigcup_{i=0}^{h-1}T_i\right)\cup \left(\bigcup_{i=0}^{h}S_i\right),\eps,\delta\right)$\label{forh12}\\
        \textbf{Output} $C'_n$\label{forh13}\\
        }
        }
    \end{algorithm}

    \begin{figure*}
        \centering
        \begin{subfigure}[t]{0.3\textwidth}
            \includegraphics[width=\textwidth]{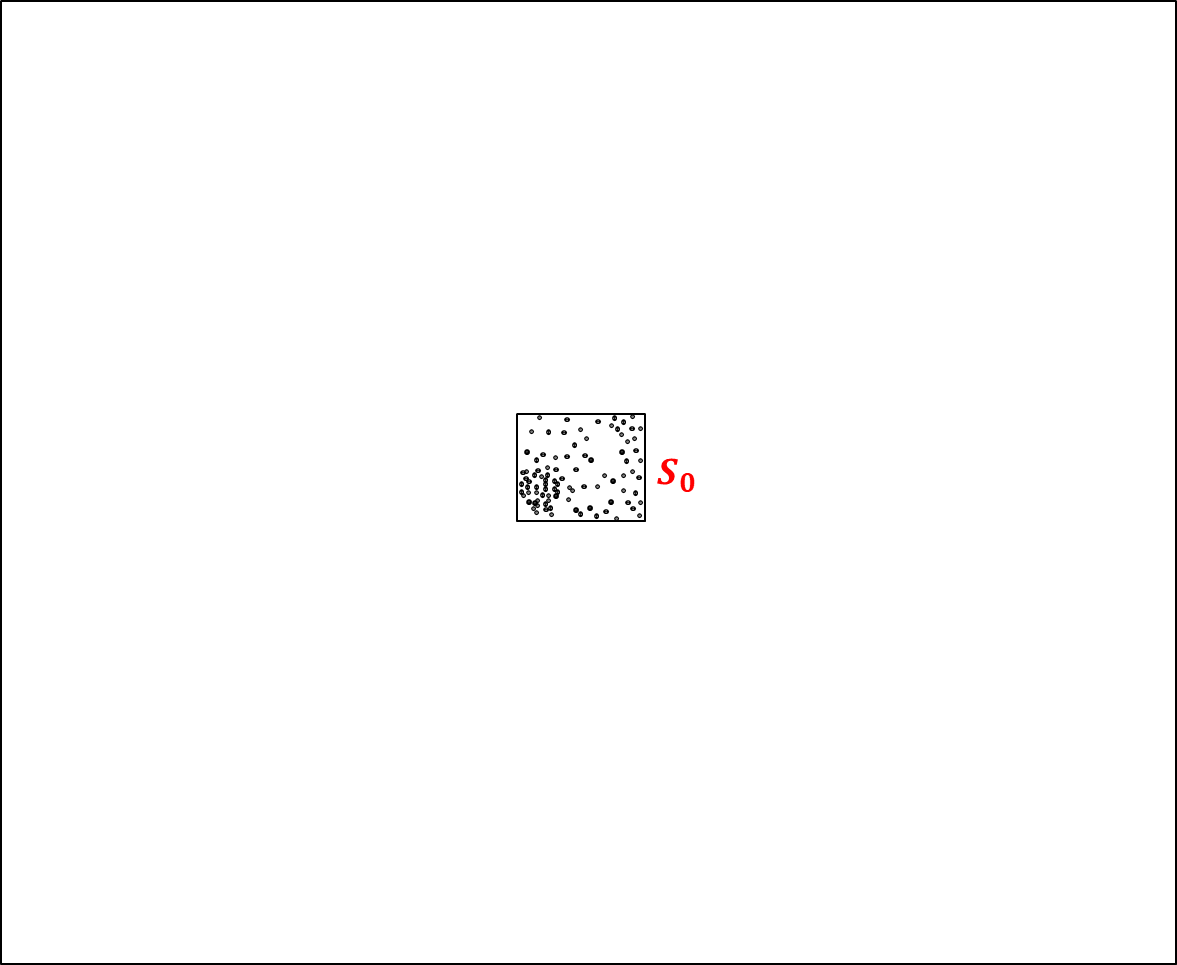}
            \caption{Construct a coreset $S_0$ of size $|S_0|=m/2$ from the first $m$ points in the stream.}
            \label{fig:streaming_1}
        \end{subfigure}
        ~ \quad
        \begin{subfigure}[t]{0.3\textwidth}
            \includegraphics[width=\textwidth]{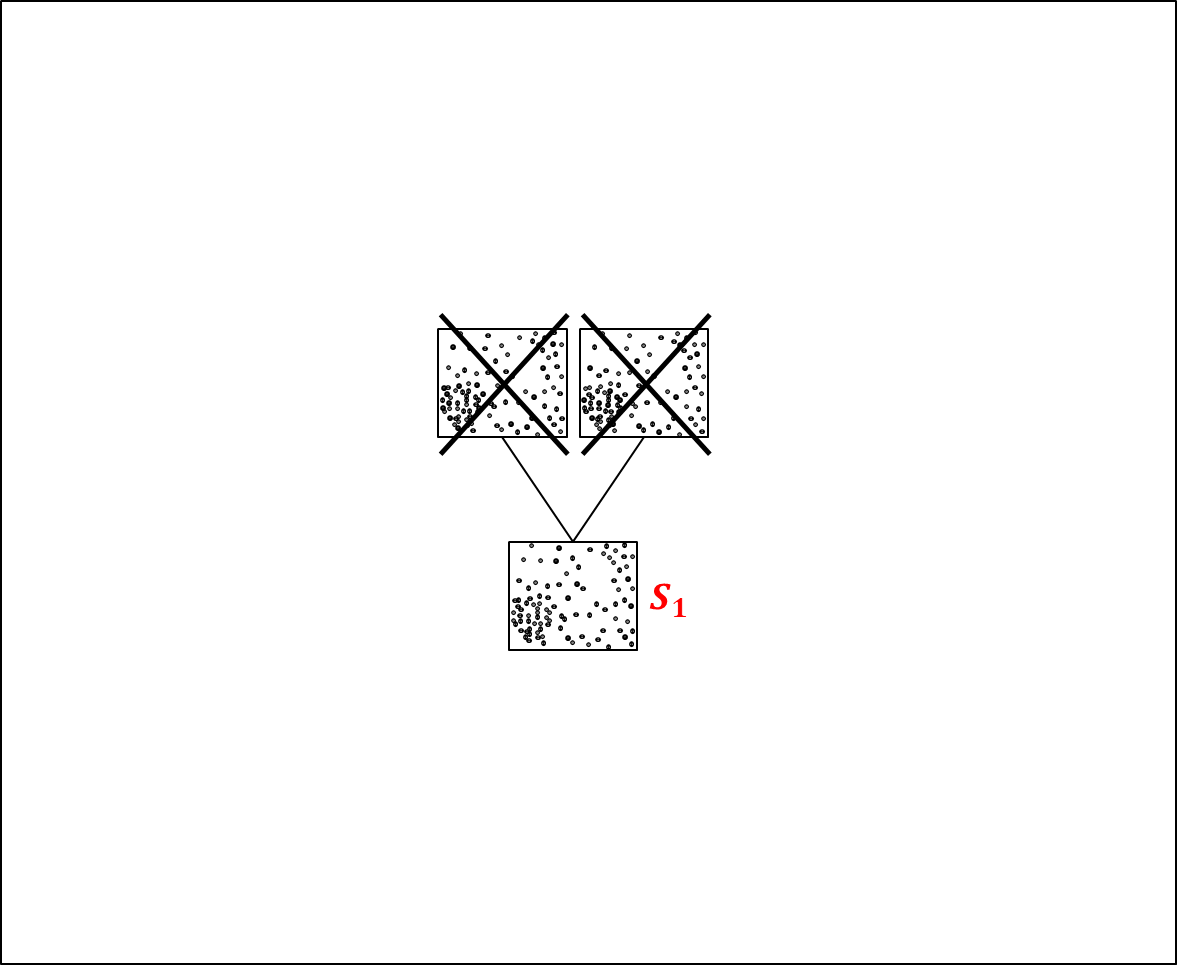}
            \caption{Read the next $m$ points, merge their coreset with $S_0$ to obtain $S_1$.}
            \label{fig:streaming_2}
        \end{subfigure}
        ~ \quad 
        \begin{subfigure}[t]{0.3\textwidth}
            \includegraphics[width=\textwidth]{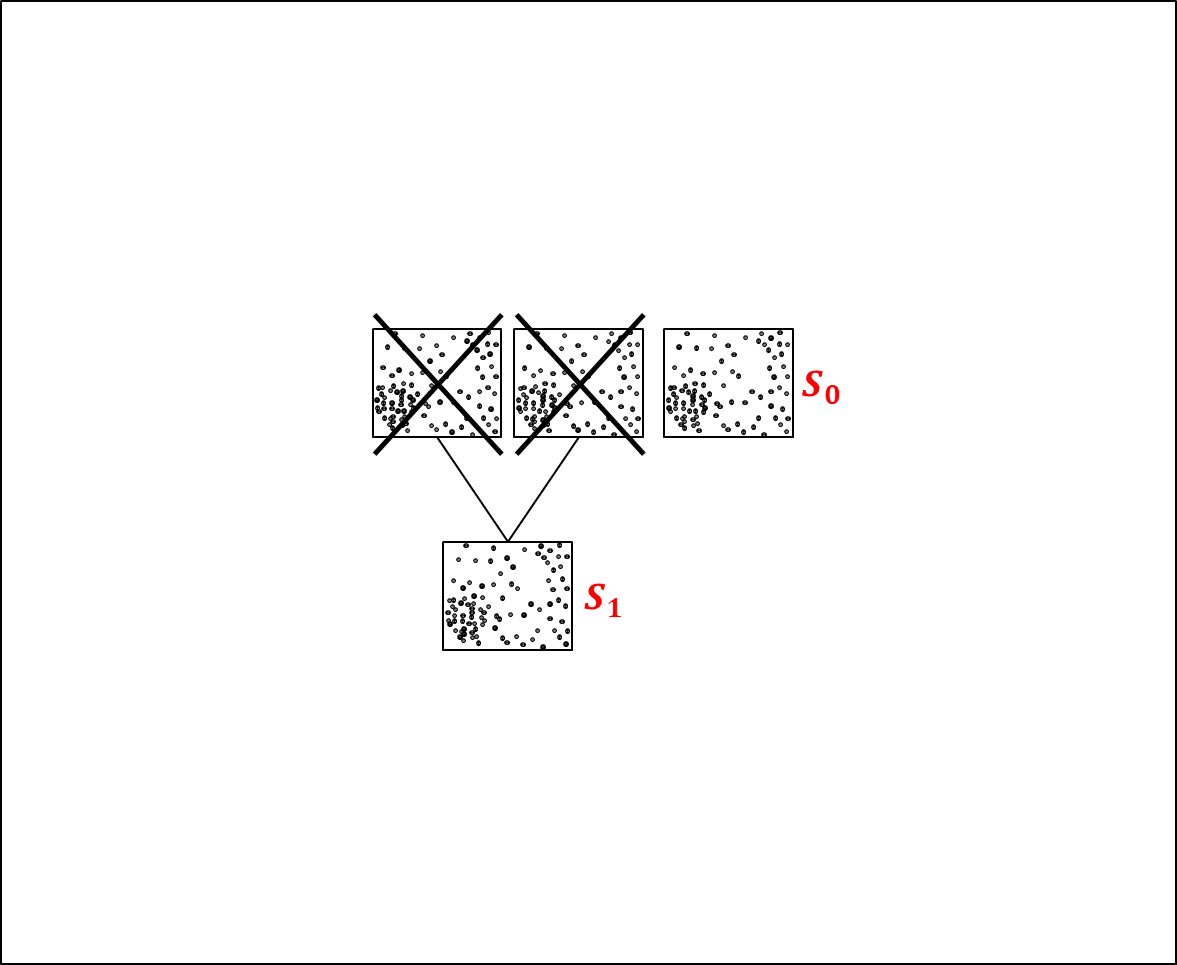}
            \caption{Read the next $m$ points, construct a coreset $S_0$ of size $|S_0|=m/2$.}
            \label{fig:streaming_3}
        \end{subfigure}
        ~

        ~

        \begin{subfigure}[t]{0.30\textwidth}
            \includegraphics[width=\textwidth]{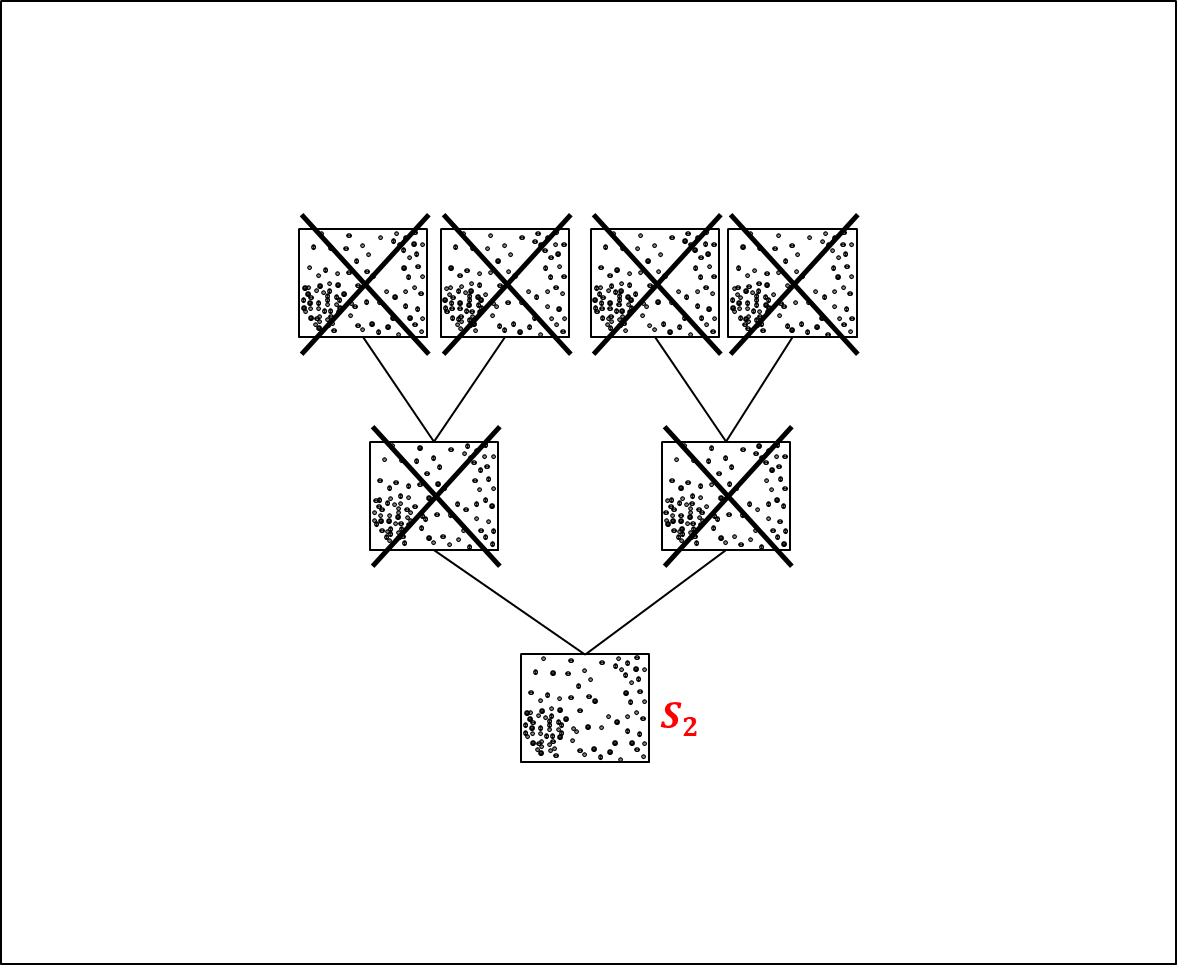}
            \caption{Read the next $m$ points, merge their coreset with $S_0$ then with $S_1$ to obtain $S_2$.}
            \label{fig:streaming_4}
        \end{subfigure}
        ~ \quad
        \begin{subfigure}[t]{0.30\textwidth}
            \includegraphics[width=\textwidth]{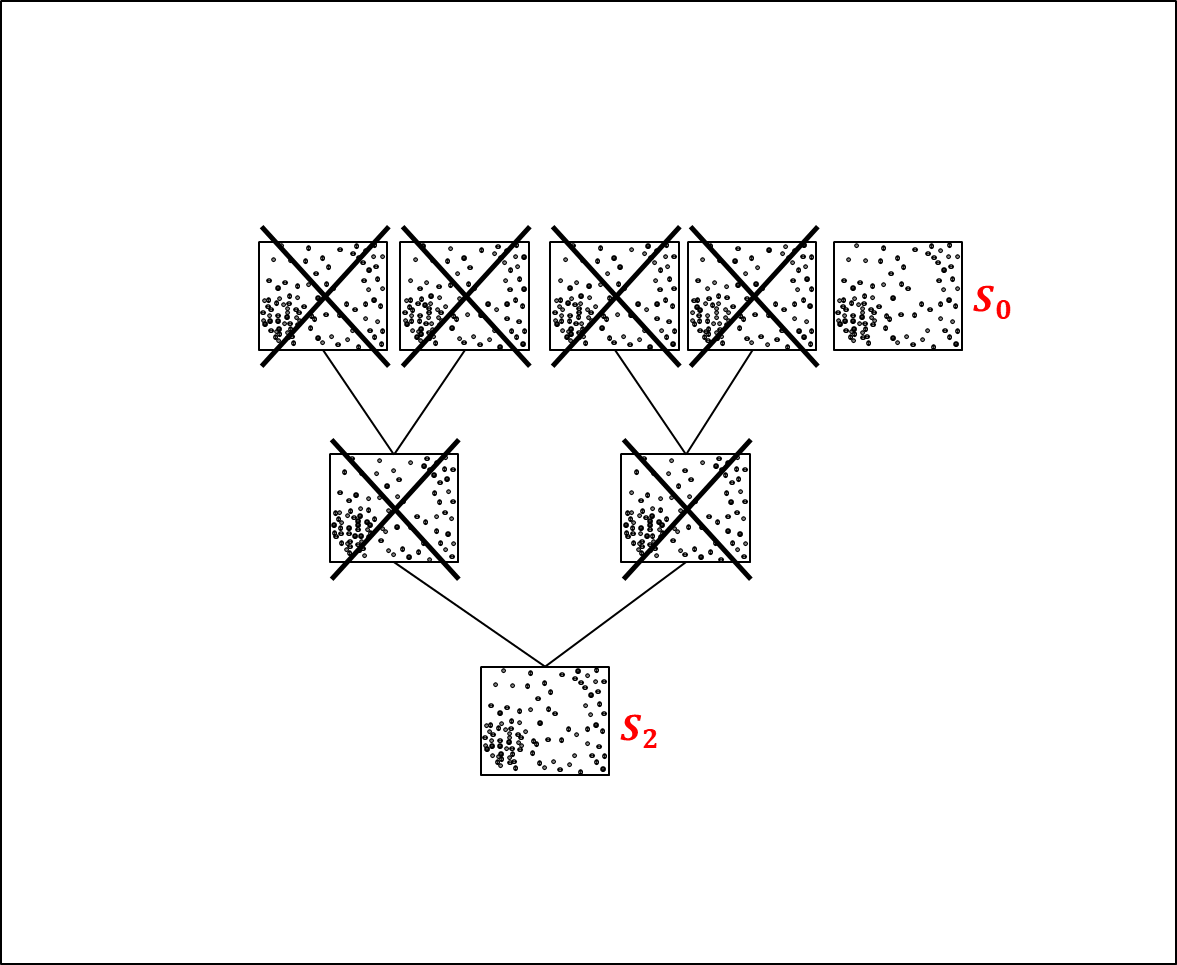}
            \caption{Read the next $m$ points, construct a coreset $S_0$ of size $|S_0|=m/2$.}
            \label{fig:streaming_5}
        \end{subfigure}
        ~ \quad 
        \begin{subfigure}[t]{0.30\textwidth}
            \includegraphics[width=\textwidth]{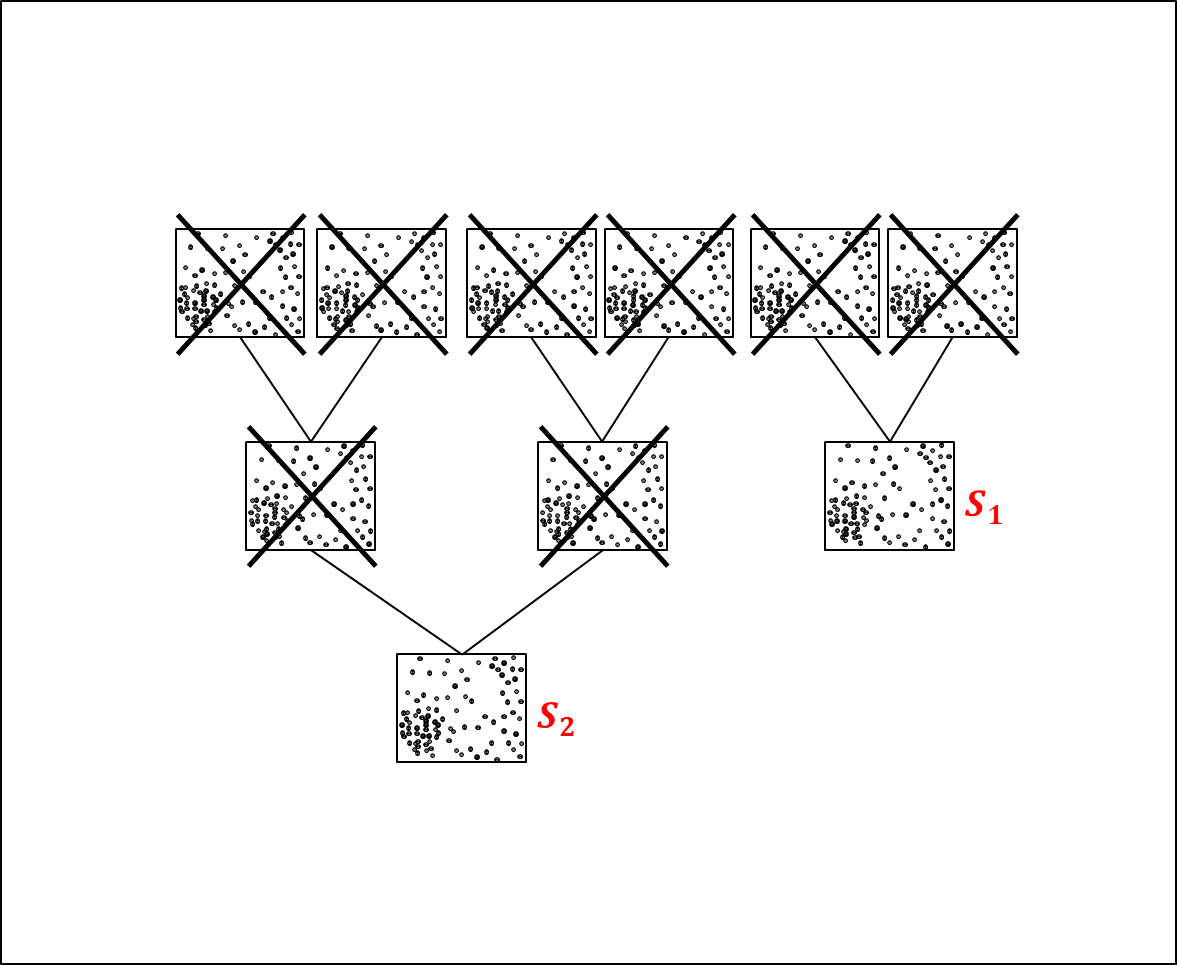}
            \caption{Read the next $m$ points, merge their coreset with $S_0$ to obtain $S_1$.}
            \label{fig:streaming_6}
        \end{subfigure}
        ~

        ~

        \begin{subfigure}[t]{0.30\textwidth}
            \includegraphics[width=\textwidth]{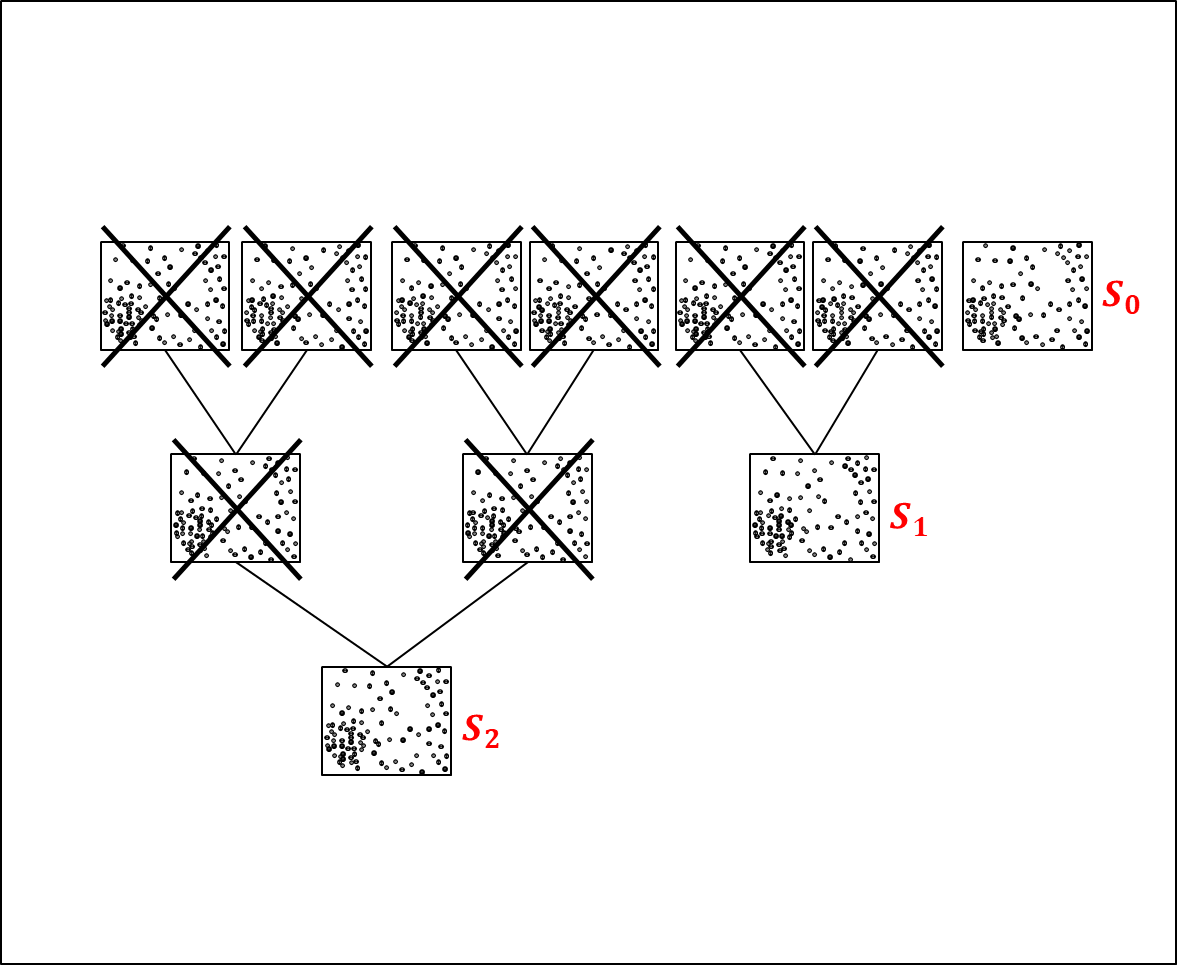}
            \caption{Read the next $m$ points, construct a coreset $S_0$ of size $|S_0|=m/2$.}
            \label{fig:streaming_7}
        \end{subfigure}
        ~ \quad
        \begin{subfigure}[t]{0.30\textwidth}
            \includegraphics[width=\textwidth]{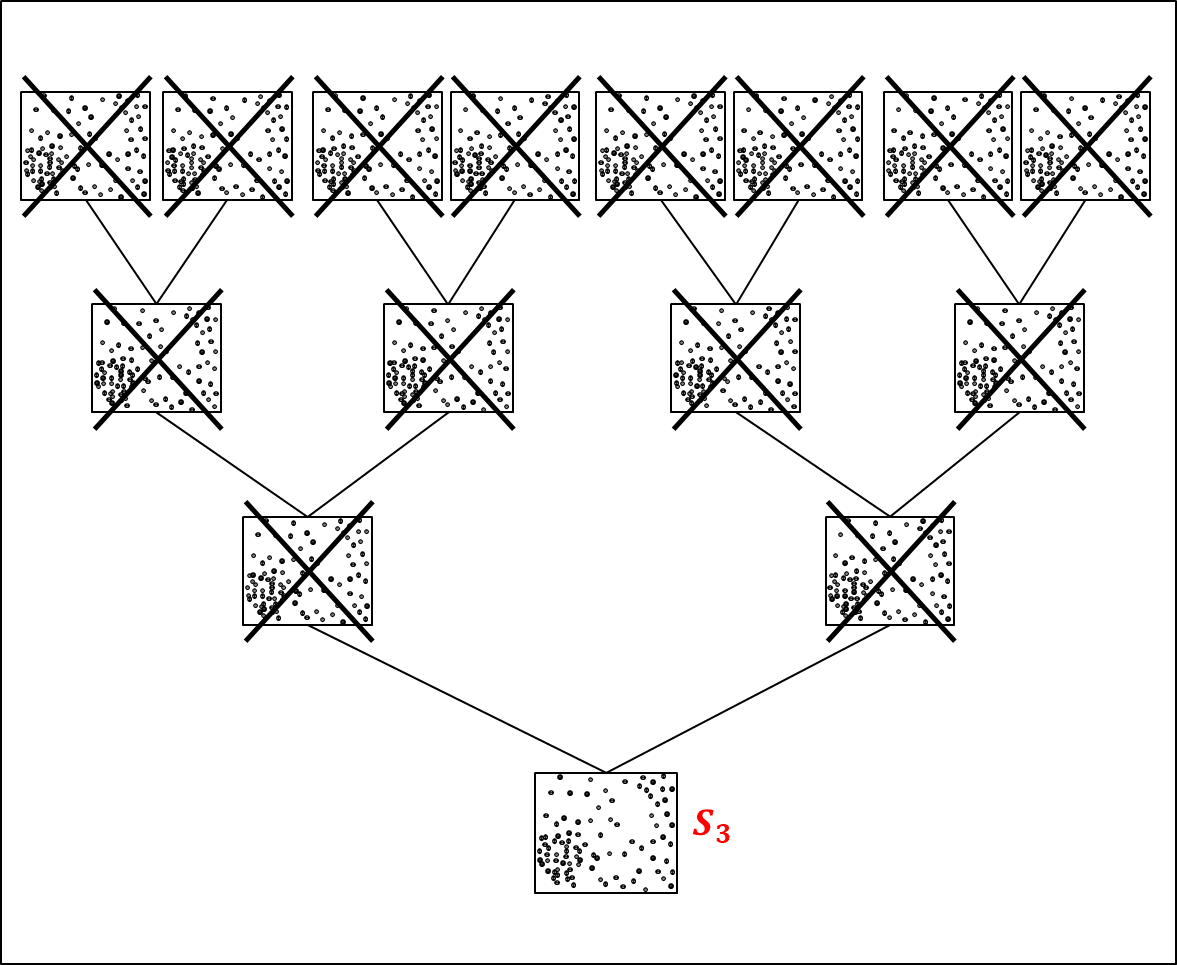}
            \caption{Read the next $m$ points, merged their coreset with $S_0$ then with $S_1$ then with $S_2$ to obtain $S_3$.}
            \label{fig:streaming_8}
        \end{subfigure}
        \caption[$\stralg$, known size]{Illustration of Algorithm~\ref{algsup}.
        Let $P$ be a set of $m$ points, error parameter $\varepsilon \in (0,\frac{1}{2})$ and probability of success $\delta \in (0,\frac{1}{2})$.
        Assume a coreset function $f(P,\varepsilon,\delta)$ returns a coreset of size $\frac{1}{2}m$ with $\varepsilon$ error parameter and $\delta$ probability of success.
        This figure shows an algorithm for $n=8m$ streaming points.
        The algorithm maintains a binary tree, where each new $n$ points are added to the tree as a leaf.
        Every two nodes with the same level are merged using the coreset function $f$ to a node in next level.
        Hence, each level has maximum of one node, and a total of $\mathcal{O}(\log \frac{n}{m})$ nodes.}
        \label{fig:streaming}
    \end{figure*}

	\begin{figure*}
    \centering
    \includegraphics[width=0.5\textwidth]{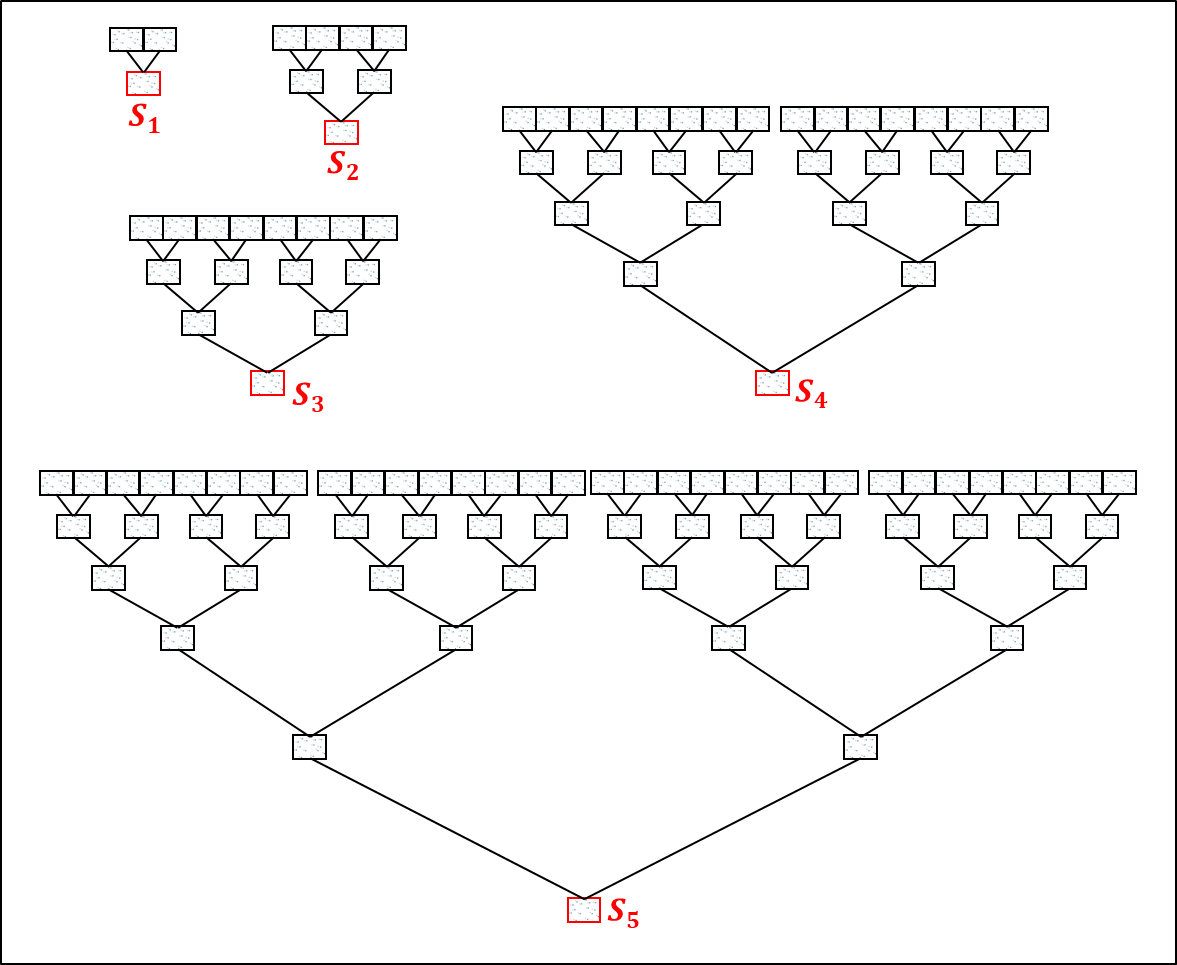}
    \caption[$\stralg$, un-known size]
    {
    Using coreset function on a corest increase the error.
    Therefore when building the tree, instead of using input $\varepsilon$, we use a scaled down  $\varepsilon$ with respect to the tree hight.
    However, when stream size is unknown (or $\infty$) the tree height is also unknown.
    So, we use several trees with height between $1$ and $\infty$.
    Here we can see five tress, where for each tree we store only the coreset at the head.
    }
    \label{fig:streaming_9}
    \end{figure*}

    The following theorem states a reduction from off-line coreset construction to a coreset that is maintained during streaming.
    The required memory and update time depends only logarithmically in the number $n$ of points seen so far in the stream.
	It also depends on the halving function that corresponds to the coreset via $s(\cdot)$.
	
	The theorem below holds for a specific $n$ with probability at least $1-\delta$.
	However, by the union bound we can replace $\delta$ by, say, $\delta/n^2$ and obtain,
    with high probability, a coreset $C'_n$ for each of the $n$ point insertions, simultaneously.
	
    \begin{theorem}[generalization of~\cite{feldman2013turning}]
    \label{thmstream}
    Let $(\coralg,s,\time)$ be an $(\eps,\delta)$-mergable coreset scheme for $(P,Y,\cost,\loss)$
    and $s$ be its \emph{$(\eps,\delta,r)$-halving} function
    of size $r\geq 1$ is constant, and $\eps,\delta\in(0,1/2)$.
    Let $\stream$ be a stream of points from $P$.
    \sloppy Let $C'_n$ be the $n$th output weighted set of a call to $\stralg(\stream,\eps/6,\delta/6,\coralg,s)$; see Algorithm~\ref{algsup}.
    Then, with probability at least $1-\delta$,
    \begin{enumerate}
    \renewcommand{\labelenumi}{\theenumi}
    \renewcommand{\theenumi}{(\roman{enumi})}
    \item \label{resss1}(Correctness) $C'_n$ is an $\eps$-coreset of $(P_n,Y,\cost,\loss)$, where $P_n$ is the first $n$ points in $\stream$.
    \item \label{resss2}(Size) $\displaystyle |C_n|\in \size(s(c), n, \eps,\delta)$ for some constant $c\geq 1$. 
    \item \label{resss3}(Memory) there are at most $b=s(c)\cdot O(\log^{r+1}n)$ points in memory during the computation of $C'_n$.  
    \item \label{resss4}(Update time) $C'_n$ is outputted in additional $t\in O(\log n)\cdot \time(b, n,\frac{\eps}{O(\log n)},\frac{\delta}{n^{O(1)}})$ time after $C'_{n-1}$.
    \item \label{resss5}(Overall time) $C'_n$ is computed in $nt$ time.
    \end{enumerate}
    \end{theorem}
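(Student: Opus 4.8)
The plan is to analyze the merge-and-reduce tree maintained by $\stralg$ (Algorithm~\ref{algsup}) and to track how the approximation error and failure probability accumulate along its $O(\log n)$ levels. First I would fix the number $n$ of points seen so far and make explicit the binary tree implicit in the variables $S_0,S_1,\ldots$: a node at level $i$ is the result of applying $\coralg$ to the merge of two level-$(i-1)$ nodes, so that it holds a weighted set representing exactly the points in its subtree. The central \emph{invariant} I would establish by induction on $i$ is that the set stored at level $i$ is an $\eps_i$-coreset of the points of its subtree, where $\eps_i$ is the error accumulated so far. Since $\coralg$ is mergable (Definition~\ref{merge}), the union of two coresets is a coreset of the union of their sources with the same (worst) error, and a subsequent reduction with parameter $\eps/h$ composes multiplicatively, turning a factor $(1+\eps_{i-1})$ into $(1+\eps_{i-1})(1+\eps/h)$.

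For correctness (part~\ref{resss1}) I would iterate this composition over the at most $h=O(\log n)$ levels. Because each reduction is invoked with error $\eps/(6h)$ (the call passes $\eps/6$ and Line~\ref{forh8} scales by $1/h$), the total multiplicative error is at most $(1+\eps/(6h))^{h}\le e^{\eps/6}\le 1+\eps$, using $\eps/6\le \ln(1+\eps)$ on $(0,1/2]$; hence the set $C'_n$ produced in Line~\ref{forh12} is an $\eps$-coreset of $P_n$. The failure probability is handled by a union bound: a node of a height-$h$ tree is built with failure probability $\delta/4^h$, and at most $O(2^{h})$ such nodes are constructed while that tree absorbs its $2^{h-1}s(h)$ points, so $2^{h}\cdot\delta/4^h=\delta\cdot2^{-h}$ summed over all heights $h\ge1$ (and over the concurrent trees of Figure~\ref{fig:streaming_9}) is $O(\delta)$, which the $\delta/6$ in the call keeps below $\delta$.

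For the resource bounds I would invoke the halving and log-Lipschitz properties of $s$ from Definition~\ref{halv}. The defining inequality of a halving function guarantees that reducing the merge of two level-$(i-1)$ nodes of size $s(h)$ yields a node of size at most $s(h)$, so \emph{every} node occupies at most $s(h)=s(O(\log n))$ points; the $r$-log-Lipschitz bound then gives $s(O(\log n))\le O(\log^{r}n)\cdot s(c)$. Since at most one node per level survives and there are $O(\log n)$ levels across the active trees, the memory in use is $s(c)\cdot O(\log^{r+1}n)=b$, proving part~\ref{resss3}; feeding these $O(\log n)$ nodes into the final $\coralg$ call of Line~\ref{forh12} gives size $\size(s(c),n,\eps,\delta)$ for part~\ref{resss2}. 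Each inserted point triggers at most $O(\log n)$ cascading reductions plus one output reduction, each a $\coralg$ call on $O(b)$ points with parameters $(\eps/O(\log n),\delta/n^{O(1)})$, which yields the update time $t$ of part~\ref{resss4}; multiplying by the $n$ insertions gives part~\ref{resss5}.

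The step I expect to be the main obstacle is the simultaneous, uniform bookkeeping across the family of overlapping trees of distinct target heights that $\stralg$ runs when the stream length is unknown (Figure~\ref{fig:streaming_9}): I must ensure that the error scaling $\eps/h$ and probability scaling $\delta/4^h$ are indexed by the \emph{correct} height for each partially built tree, that the log-Lipschitz bound is applied with the right argument as $h$ grows, and that the geometric and harmonic-type sums over all trees still telescope to $O(\eps)$ and $O(\delta)$ rather than inflating by the number of trees. Getting the constants to close so that the $\eps/6,\delta/6$ in the call suffice is the delicate part; the remaining estimates are routine induction and geometric-series bounds.
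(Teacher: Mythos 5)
Your proposal matches the paper's proof essentially step for step: the same merge-and-reduce tree analysis with per-level error $\eps/h$ composing to $(1+\eps/h)^h\le e^{\eps}$, the same union bound over the $2^h-1$ nodes of each height-$h$ tree built with failure probability $\delta/4^h$ summing geometrically over all trees, the same use of the halving property to keep every node at size $s(h)$ and of $r$-log-Lipschitzness to get $s(h)\le h^{r}s(c)$ with $h=O(\log n)$ for memory, and the same update-time accounting. The only detail to fold into your correctness bound is the extra multiplicative factor from the final reduction in Line~\ref{forh12} (the paper composes it explicitly as $(1+2\eps)(1+\eps)\le 1+4\eps$), which the constant slack you already budget via the $\eps/6$, $\delta/6$ call parameters easily absorbs.
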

    \begin{proof}
    \sloppy We prove that for a call to $\stralg(\stream,\eps,\delta,\coralg)$, the theorem holds if we replace $\eps$ with $6\eps$ in properties $(i)$ to $(v)$, and $\delta$ with $6\delta$.
    This would prove the theorem for a call to $\stralg(\stream,\eps/6,\delta/6,\coralg)$.

    Let $h\geq1$, and let $P_h$ denote the set of points that are read from $\stream$ during the $h$th "for" iteration in Line~\ref{forh1}.
    We consider the values of $h$, $S_i$, and $T_h$ during the time that $C'_n$ was outputted, after reading the first $n$ points from the stream.
    We define $s(h)$ as in Definition~\ref{halv}.

    The set $P_h$ can be partitioned into equal consecutive $2^{h-1}=|P_h|/s(h)$ subsets $P_{h,1},P_{h,2},\cdots,P_{h,m}$, each of size $s(h)$ by Line~\ref{forh}.
    We now recursively define a binary complete and full tree that corresponds to $P_h$ whose height is $h$ levels,
    where each of its nodes corresponds to an $(\eps/h)$-coreset $S$ that is computed, with probability at least $1-\delta/4^h$, in Line~\ref{forh8};
	 see Fig.~\ref{fig:streaming}.
    The $j$th leftmost leaf of the tree for $P_h$, for every $j\in[2^{h-1}]$, is the $(\eps/h)$-coreset of $P_{h,j}$.
    An inner node in the $i$th level corresponds to the $(\eps/h)$-coreset of the union $S_i$ of coresets that correspond to its pair of children and their corresponding input points.
    Hence, the root of this tree corresponds to the coreset $S_h = T_h$ of $P_h$; see Line~\ref{forh15}.

    The input to each coreset construction call is therefore the union $C_1\cup C_2$ of a pair of weighted sets.
    In the leaves, each coreset has size of at most $s(h)/2$ points, due to the definition of the halving function,
	so the input to the second level is of size $|C_1\cup C_2|\leq s(h)$ points.
    The sum of weights in $C_1\cup C_2$ equals to the number of input points they represent,
	by Definition~\ref{merge} of a mergeable coreset, and it is at most $W=|P_h|= 2^{h-1}s(h)$.
    The output coreset has therefore size at most $s(h)-1$ or $s(h)/2$, if $|C_1\cup C_2|\leq s(h)-1$ or $|C_1\cup C_2|= s(h)$, respectively.
    Similarly, in the higher levels, the input is a union of coresets, each of size at most $s(h)-1$, which is also an upper bound on the size of the output coreset.

    \paragraph{The probability}
    that a coreset call fails during the construction of a coreset for the tree of $T_h$ in Line~\ref{forh8} is $\delta/4^{h}$,
    and the number of such calls is the number $2^{h}-1$ of nodes in this tree.
    Using the union bound, one of these constructions will fail with probability at most
    \[
    \frac{\delta}{4^h}\cdot (2^{h}-1) < \frac{\delta}{2^{h}}\leq \frac{2\delta}{h^2}.
     \]
    The probability that one of the coreset during the construction of all the trees in the stream would fail is thus by the union bound,
    \begin{equation}\label{deldel}
    \sum_{h=1}^{\infty} \frac{2\delta}{h^2} =2\delta \sum_{h=1}^{\infty} \frac{1}{h^2}\leq 4\delta.
    \end{equation}

    Suppose that all the coreset constructions in Line~\ref{forh8} indeed succeed
    (which happens with probability at least $1-4\delta$). In particular, the input to $\coralg$ in Line~\ref{forh12} is a union of coresets.
    The construction of $C'_n$ in Line~ref{forh12} would fail with probability at most $\delta$, and thus $C'_n$ is an $\eps$-coreset with probability at least $1-5\delta\geq 1-6\delta$.

    \paragraph{The required memory }for storing the $O(h)$ coresets $S_0,\cdots,S_h$ during the construction of $T_h$ and the previous trees $T_1,\cdots, T_{h-1}$, each of size at most $s(h)$ is
    \begin{equation}\label{hbbound}
    O(h)\cdot s(h)\in O(h^{r+1})s(c)
    \end{equation}
    since $s(h)=s((h/c)c)\leq (h/c)^r s(c)\leq h^r s(c)$ is $r$-log-Lipschitz for a sufficiently large constant $c\geq1$ and $h\geq c^2$, by Definition~\ref{halv}. We now prove $h=O(\log n)$.

    We have
    \begin{equation}\label{ss}
    s(h)\leq 2^{r-1}(s(|h-1|)+s(|1-0|))\leq 2^{r}s(h-1),
    \end{equation}
    where the first inequality follows from the fact that $s$ is $r$-log-Lipschitz (see~\cite[Lemma 6.3]{braverman2016new}),
    and the second inequality holds since such a function is non-decreasing by definition, so $s(1)\leq s(h-1)$. Hence,
    \begin{equation}\label{PP}
    |P_h|= 2^{h-1}s(h)
    \leq 2^{h-1}\cdot 2^{r}\cdot s(h-1)
    =2^{r+1}\cdot 2^{h-2}s(h-1)
    =2^{r+1}|P_{h-1}|,
    \end{equation}
    where the first equality is by Line~\ref{forh}, and the inequality is by~\eqref{ss}. The value of $h$ is then bounded by
    \begin{equation}\label{hbound}
    \begin{split}
    h&=\log_2\left(\frac{|P_h|}{s(h)}\right)+1
    \leq \log_2\left(\frac{2^{r+1}|P_{h-1}|}{s(h)}\right)+1\\
    &\leq \log_2\left(\frac{2^{r+1}n}{s(h)}\right)+1
    \leq \log_2\left(2^{r+1}n\right)+1
    \leq  (r+1)+\log_2n+1\in O(\log n),
    \end{split}
    \end{equation}
    where the first equality is since $|P_h|=2^{h-1}s(h)$, and the first inequality is by~\eqref{PP}. 

    Plugging~\eqref{hbound} in~\eqref{hbbound} yields an overall memory as in Claim~\ref{resss3}
    \[
    O(h)\cdot s(h)\in O(h^{r+1})s(c)\subseteq O(\log n)^{r+1}s(c)=O(\log n)s(c).
    \]

    \paragraph{The multiplicative approximation error }in the coreset for the nodes of the tree $T_h$ increases by a multiplicative factor of
    $(1+\eps/h)$ in each level of the tree $T_h$, by Line~\ref{forh8}.
    We have
    \begin{equation}\label{epsln}
     \eps \leq \frac{2\eps}{1+2\eps}  \leq \ln(1+2\eps)
    \end{equation}
    where the first inequality holds since $\frac{x-1}{x} \leq \ln x$ for $x>0$, and the last inequality holds by the assumption $\eps<1/2$. Hence,
    \[
    \left(1+\frac{\eps}{h}\right)^{h}
    =\left(\left(1+\frac{\eps}{h}\right)^{h/\eps}\right)^{\eps}
    \leq e^{\eps}\leq 1+2\eps,
    \]
    where the last inequality is by~\eqref{epsln}.
    so the coreset $T_h=S_h$ that corresponds to the root is a $(2\eps)$-coreset for $P_h$.
    Hence, $\bigcup_{i=0}^{h-1}T_i$ is a $(2\eps)$-coreset for $\bigcup_{i=0}^{h-1}P_i$.
    Similarly, $\bigcup_{i=0}^{h}S_{i}$ is a $(2\eps)$-coreset for the points that were read from $P_h$.
    Hence, in Line~\ref{forh12}, $C'_n$ is an $\eps$-coreset of a union of $(2\eps)$-coresets,
    which implies that $C'_n$ is a $(4\eps)$-coreset, as $(1+2\eps)(1+\eps)\leq (1+4\eps)$ where in the last inequality we use the assumption $\eps\leq 1/2$. This proves Claim~(i).

    \paragraph{Update time.  }In the worst case, the "while" loop in Line~\ref{forh7} is executed for all the $h$ levels of $T_h$.
    In this case, we construct $O(h)$ coresets, each for input of at most $s(h)$ points whose overall weight is at most $n$,
    in overall $O(h)\cdot \time(s(h),n,\frac{\eps}{h},\frac{\delta}{4^h})$ time.
    In Line~\ref{forh12} we compute a coreset for the union of $O(h)$ coresets, each of size at most $s(h)$,
    which represents $n$ input points, so their overall size is $O(h)s(h)$ and their construction time is $\time\left(O(h)s(h), n,\eps,\delta\right)$.
    Substituting in the last expression, $h\in O(\log n)$ and $O(h)=O(h^{r+1})s(c)$ by~\eqref{hbound} and~\eqref{hbbound}, respectively,
    yields the update time in Clam~\ref{resss4}, as
    \[
    \begin{split}
    &O(h)\time\left(s(h), n,\frac{\eps}{h},\frac{\delta}{4^h}\right)+\left(O(h)s(h), \eps,\delta\right)
    \subseteq O(h)\time\left(O(h)s(h), n,\frac{\eps}{h},\frac{\delta/4^h}{4^hs(h)}\right)\\
    &\subseteq O(\log n)\time\left(s(c)\log^{r+1}n, n,\frac{\eps}{O(\log(n))},\frac{\delta}{n^{O(1)}}\right).
    \end{split}
    \]

    \paragraph{The overall running time }
    for computing $C'_n$ is obtained by multiplying the update time per point in the previous paragraph by $n$ updates, which proves Claim~\ref{resss5}.
    \end{proof}

    \section{Halving Calculus}
        If we have a coreset of size $m(\eps,\delta)$ that is independent of the total weight of the or number of input points,
        for every input set $P$, then $\size(n,\cdot,\eps/h,\delta/h)\leq n$ for $n=s(h)$ if $s(h)=m(\eps/h,\delta/h)$. Otherwise the analysis is a bit more involved.
        In this case we need to solve equations such as $\log(n)= n/2$
        for computing the halving function. There is no close solution for such equations
        but the solution can be represented by the following function
        which can be computed in a very high precision
        (a bit for every iteration) using e.g. Newton-Raphson method~\cite{chapeau2002numerical}.

        \begin{lemma}[Lambert $W$ function~\cite{barry2000analytical}\label{lam}]
        There is a unique monotonic decreasing function $W_{-1}:[-1/e,0]\to[-1,-\infty)$ that satisfies
        \[
        x=W_{-1}(x)e^{W_{-1}(x)},
        \]
        for every $x\in [-1/e,0]$. It is called the \emph{lower branch of the Lambert $W$ function}.
        \end{lemma}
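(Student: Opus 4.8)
The plan is to invert the real map $f:\REAL\to\REAL$ given by $f(w)=we^w$ on a suitably chosen branch. First I would compute $f'(w)=(1+w)e^w$, which is strictly negative precisely for $w<-1$; hence $f$ is strictly decreasing on the ray $(-\infty,-1]$. Evaluating the endpoint gives $f(-1)=-1/e$, while $\lim_{w\to-\infty}we^w=0^{-}$, since the exponential decay dominates the linear factor. Together with continuity of $f$ and the Intermediate Value Theorem, this shows that $f$ restricted to $(-\infty,-1]$ is a continuous strictly decreasing bijection onto $[-1/e,0)$. By the standard theorem on inverses of strictly monotone continuous functions, the inverse $W_{-1}:[-1/e,0)\to(-\infty,-1]$ exists, is continuous, and is itself strictly decreasing; writing its range as $[-1,-\infty)$ simply records that $W_{-1}(-1/e)=-1$ and $W_{-1}(x)\to-\infty$ as $x\to 0^{-}$.

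By construction $f\big(W_{-1}(x)\big)=x$ for every $x$ in the domain, which is exactly the defining relation $x=W_{-1}(x)e^{W_{-1}(x)}$ of the statement. For uniqueness I would argue that any function $g$ satisfying this same relation with $g(x)\le -1$ throughout must coincide with $W_{-1}$: the constraint $g(x)\le -1$ places $g(x)$ in the domain $(-\infty,-1]$ on which $f$ is injective, so $f(g(x))=x=f(W_{-1}(x))$ forces $g(x)=W_{-1}(x)$.

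The only delicate point is isolating the \emph{lower} branch. For each $x\in(-1/e,0)$ the equation $f(w)=x$ has two roots, one in $(-\infty,-1]$ and one in $[-1,0)$ (the principal branch $W_0$); the requirement that the range lie in $[-1,-\infty)$, i.e.\ that we take the root with $w\le -1$, is precisely what selects a single-valued $W_{-1}$ and is essential for both existence and uniqueness. I expect the main care to go into the limit computation $\lim_{w\to-\infty}we^w=0$, which pins down the open endpoint at $x=0$ and justifies the unbounded range, and into checking that the sign of $f'$ is negative on the \emph{entire} ray rather than only near $-1$; everything else is a direct application of the monotone-inverse theorem.
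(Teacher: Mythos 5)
Your proof is correct, but there is nothing in the paper to compare it against: the paper states this lemma as an imported fact, citing~\cite{barry2000analytical}, and gives no proof of its own. Your argument is the standard self-contained one — compute $f'(w)=(1+w)e^w$, observe $f(w)=we^w$ is strictly decreasing on $(-\infty,-1]$ with $f(-1)=-1/e$ and $f(w)\to 0^-$ as $w\to-\infty$, and invoke the monotone-inverse theorem; the uniqueness step via injectivity of $f$ on that ray is sound and in fact gives slightly more than the lemma asks (uniqueness among \emph{all} functions with values $\leq -1$, not merely monotone decreasing ones). One point worth making explicit, since it is a discrepancy between your proof and the lemma as stated: the paper writes the domain as the closed interval $[-1/e,0]$, but $we^w=0$ forces $w=0$, which lies outside the stated codomain, so $W_{-1}$ cannot actually be defined at $x=0$; your restriction to $[-1/e,0)$ with $W_{-1}(x)\to-\infty$ as $x\to 0^-$ is the correct reading, and it costs the paper nothing, since the only place the lemma is used (the halving calculus of Lemma~\ref{firstlem}, where $n=e^{-cW_{-1}(-\eps)}$ for $\eps\in(0,1/e^c]$) evaluates $W_{-1}$ only at strictly negative arguments.
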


        The following lemma would be useful to compute the halving function
        of coresets whose size depend on $n$, as in this thesis.
        \begin{lemma}\label{firstlem}
        For every $c\geq1$ and $\eps\in(0,1/e^{c}]$, if
        \begin{equation}\label{nn}
        n\geq \left(\frac{4}{\eps}\ln\frac{4}{\eps}\right)^c,
        \end{equation}
        then
        \begin{equation}\label{2n}
        n\geq \left(\frac{\ln n}{c\eps}\right)^c.
        \end{equation}
        Equality holds for $n=e^{-cW_{-1}(-\eps)}$.
        \end{lemma}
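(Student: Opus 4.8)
The plan is to remove the exponent $c$ by taking $c$-th roots, thereby collapsing both the hypothesis~\eqref{nn} and the conclusion~\eqref{2n} into a single one-variable inequality, and then to settle that inequality by a monotonicity argument plus a boundary check. Throughout I use that $c\geq 1$ forces $\eps\in(0,1/e^c]\subseteq(0,1/e]$.

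First I would substitute $m=n^{1/c}$, so that $n=m^c$ and $\ln n=c\ln m$. Since every quantity in sight is positive, taking $c$-th roots shows that~\eqref{nn} is equivalent to $m\geq m_0:=(4/\eps)\ln(4/\eps)$, while~\eqref{2n}, after the same root extraction and the identity $\ln n=c\ln m$, is equivalent to
\[
\eps m\geq \ln m.
\]
Hence it suffices to prove the following: for $\eps\in(0,1/e]$ and every $m\geq m_0$ one has $\eps m\geq \ln m$.

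Next I would prove this by monotonicity. Set $g(m)=\eps m-\ln m$, so $g'(m)=\eps-1/m\geq 0$ exactly when $m\geq 1/\eps$. Because $\eps\leq 1/e$ gives $\ln(4/\eps)\geq \ln(4e)>1$, the threshold obeys $m_0=(4/\eps)\ln(4/\eps)>4/\eps>1/\eps$, so $g$ is nondecreasing on $[m_0,\infty)$ and it is enough to verify $g(m_0)\geq 0$, i.e. $\eps m_0\geq \ln m_0$. Writing $u=\ln(4/\eps)>1$, the left-hand side equals $4u$ while the right-hand side equals $\ln\!\big((4/\eps)\ln(4/\eps)\big)=u+\ln u$, so the inequality reduces to $3u\geq \ln u$, which holds for all $u>0$ since $\ln u\leq u-1<3u$. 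This establishes the implication~\eqref{nn}$\Rightarrow$\eqref{2n}.

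Finally, for the equality claim I would invoke the Lambert $W$ function from Lemma~\ref{lam}. Equality in~\eqref{2n} means $\eps m=\ln m$ with $m=n^{1/c}$; putting $m=e^{-v}$ turns this into $ve^{v}=-\eps$, and since $-\eps\in[-1/e,0)$ and the relevant solution is the large root $m\geq 1/\eps$ (equivalently $v=-\ln m\leq -1$), the correct branch is the lower branch $v=W_{-1}(-\eps)$. Therefore $m=e^{-W_{-1}(-\eps)}$ and $n=m^c=e^{-cW_{-1}(-\eps)}$, as stated. The only points requiring care are confirming that the boundary value $m_0$ lies in the increasing region of $g$ (so that the single endpoint check is valid) and selecting the lower rather than the principal branch of $W$ for the equality case; the remaining estimate $3u\geq\ln u$ is entirely routine.
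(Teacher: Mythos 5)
Your proof is correct, but it takes a genuinely different route from the paper's. The paper works directly with $n$: it rewrites \eqref{2n} as $(c\eps)^c\geq \ln^c(n)/n$, observes via a derivative computation that $\ln^c(n)/n$ is non-increasing for large $n$, identifies the exact equality point $x=e^{-cW_{-1}(-\eps)}$, and then spends most of the proof on Lambert-function manipulations (substituting $a=\eps/2$, $b=\ln\big(\tfrac{1}{a}\ln\tfrac{1}{a^2}\big)$, and using the monotonicity of $W_{-1}$) to show that the hypothesis threshold \eqref{nn} dominates $x$. You instead take $c$-th roots via $m=n^{1/c}$, which collapses the whole problem to the one-variable inequality $\eps m\geq\ln m$ on $[m_0,\infty)$ with $m_0=(4/\eps)\ln(4/\eps)$, and settle it by noting $g(m)=\eps m-\ln m$ is nondecreasing past $1/\eps<m_0$ and checking the single endpoint, where the substitution $u=\ln(4/\eps)$ reduces everything to the trivial $3u\geq\ln u$. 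Your route is more elementary: Lambert $W$ appears only where it must, in the equality claim, and your branch selection there ($v=-\ln m\leq -1$, hence the lower branch of Lemma~\ref{lam}) is handled correctly. What the paper's route buys in exchange is that the equality statement falls out automatically, since the comparison point $x$ \emph{is} the equality point; it also incidentally exposes that the threshold \eqref{nn} sits above the exact root $e^{-cW_{-1}(-\eps)}$, which your endpoint check establishes only implicitly (from $g(m_0)\geq 0$ and $m_0>1/\eps$). As a side remark, your argument sidesteps a notational glitch in the paper's proof of its claim (ii), where $x$ is defined as $e^{-cW_{-1}(-\eps)}$ but the final chain uses $e^{-cW_{-1}(-a)}$ with $a=\eps/2$; bridging these requires one more appeal to the monotonicity of $W_{-1}$, whereas your endpoint check needs no such repair.
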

        \begin{proof}
        We denote the Lambert W function by $w$ instead of $W_{-1}$ for simplicity.
        Multiplying~\eqref{2n} by $(c\eps)^c/n$ yields that we need to prove $(c\eps)^c \geq (\ln^cn)/n$.
        The right hand side is a non-increasing monotonic function in the range
        $n\geq 2e^c\ln(2e^c)\geq e^c$, as the enumerator of its derivation is
        \[
        (\ln^c n)' n-\ln^c(n)= (c\ln^{c-1} n)\cdot (\ln n)' n    -\ln^c n =\ln^{c-1} n (c-\ln n)\leq0.
        \]
        By letting $x=e^{-cw(-\eps)}$, it thus suffices to prove that (i) $(c\eps)^c=\ln^c(x)/x$, and (ii) $n\geq x$.

        \textbf{Proof of (i): } Taking $-(1/c)\ln$ of both sides in $x=e^{-cw(-\eps)}$ yields $-\ln(x)/c=w(-\eps)$. Hence,
        \[
        \begin{split}
        c\eps=-c(-\eps)=-c\cdot w(-\eps)e^{w(-\eps)}
        =\ln(x)e^{-\ln (x)/c}=\frac{\ln(x)}{x^{1/c}},
        \end{split}
        \]
        where the first equality is by the definition of $w(-\eps)$. Taking the power of $c$
        proves \textbf{(i)}.

        \textbf{Proof of (ii): }
        By~\eqref{nn}, to prove $n\geq x$ it suffices to prove
        \begin{equation}\label{xx}
        \left( \frac{4}{\eps}\ln\frac{4}{\eps}\right)^c\geq x.
        \end{equation}
        Let $a=\eps/2$ and
        \[
        b=\ln\left(\frac{1}{a}\ln\frac{1}{a^2}\right).
        \]
        By the assumption $\eps\leq 1/e^c\leq 1/e$, we have
        \begin{equation}\label{bb3}
        b=\ln\left(\frac{2}{\eps}\ln\frac{4}{\eps^2}\right) \geq \ln(2\ln(4))\geq\ln(2e)\geq 1.
        \end{equation}
        Hence,
        \begin{equation}\label{www}
        -a
        \leq -a\cdot \frac{\ln\left(\frac{1}{a}\ln\frac{1}{a^2}\right)}{\ln\frac{1}{a^2}}
        = -\ln\left(\frac{1}{a}\ln\frac{1}{a^2}\right)\cdot \frac{a}{\ln\frac{1}{a^2}}
        =(-b)\cdot e^{-b}=w^{-1}(-b),
        \end{equation}
        where the inequality holds since $1/a\geq \ln(1/a^2)$ for $1/a=2/\eps\geq 1$,
        and the last equality holds by letting $y=w^{-1}(-b)$
        so that $w(y)=-b$ and $y=w(y)e^{w(y)}$ by the definition of $w$.
        Since $w$ is monotonic decreasing we have by~\eqref{www} that
        \begin{equation}\label{wwa}
        w(-a)\geq w(w^{-1}(-b))=-b.
        \end{equation}
        This proves~\eqref{xx} as
        \[
        x=e^{-cw(-a)}\leq e^{cb}
        =\left(\frac{2}{a}\ln\frac{1}{a}\right)^c
        \leq\left(\frac{4}{\eps}\ln\frac{4}{\eps}\right)^c,
        \]
        where the first inequality is by~\eqref{wwa}.
        \end{proof}

        \begin{corollary}\label{corhal}
        Let $\eps,\delta>0$, and $u:[0,\infty)\to(1,\infty)$ such that $u(\cdot)$ is $r$-log Lipschitz function for some $r\geq 1$.
        Let $c\geq1$ and $\size:[0,\infty)^4\to[0,\infty)$ be a function such that
        \begin{equation}\label{sizea}
        \size(2n,\overline{w},\eps/h,\delta/4^h) \leq \left(\frac{u(h)\ln(\overline{w})}{hc}\right)^c
        \end{equation}
        for every $h,n,\overline{w}\geq 1$.
        Let overloading of $s:[0,\infty)\to[0,\infty)$ be a function such that
         \begin{equation}\label{ssize}
            s(h)\geq \big(4u(h)\ln (4u(h))\big)^c
         \end{equation}
        for every $h\geq1$. Then $s$ is an $(\eps,\delta,2cr)$-halving function of the function $\size$; see Definition~\ref{halv}.
        \end{corollary}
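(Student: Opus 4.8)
The plan is to check the two clauses of Definition~\ref{halv} for $s$ separately: the \emph{size clause} $\size(2n,w',\eps/h,\delta/4^h)\le n$ at $n=s(h)$, $w'=2^h n$, and the \emph{$2cr$-log-Lipschitz clause}.

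For the size clause I would substitute $\overline{w}=w'=2^h n$ into the hypothesis~\eqref{sizea}, which immediately reduces the goal to the scalar inequality
\[
\left(\frac{u(h)\ln(2^h n)}{hc}\right)^c\le n,\qquad\text{equivalently}\qquad \frac{u(h)\ln(2^h n)}{hc}\le n^{1/c}.
\]
This has exactly the self-referential shape handled by Lemma~\ref{firstlem} (the left side grows like $\ln n$ while the right grows like $n^{1/c}$), so the strategy is to verify it at the threshold $n_0=(4u(h)\ln(4u(h)))^c$ guaranteed by~\eqref{ssize} and then extend to every $n\ge n_0$ by monotonicity. Writing $L=\ln(4u(h))>1$, one has $n_0^{1/c}=4u(h)L$ and $\ln n_0=c(L+\ln L)$, so at $n=n_0$ the inequality becomes $\tfrac{\ln 2}{c}+\tfrac{L+\ln L}{h}\le 4L$; since each summand is at most $L$ or $2L$ (using $\ln 2<1\le L$, $\ln L\le L$, and $h\ge1$), the left side stays below $3L$, and the estimate holds \emph{for every} $h\ge1$, including the delicate case $h=1$. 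To pass from $n_0$ to a general $n=s(h)\ge n_0$, I would note that $\psi(n):=n^{1/c}-\tfrac{u(h)\ln(2^h n)}{hc}$ has derivative of the sign of $n^{1/c}-u(h)/h$, which is positive once $n\ge n_0$ because $n_0^{1/c}=4u(h)L>u(h)/h$; hence $\psi$ is increasing and $\psi(n_0)\ge 0$ gives $\psi(s(h))\ge 0$, which is the size clause via~\eqref{sizea}.

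For the log-Lipschitz clause I would work with the canonical choice $s(h)=(4u(h)\ln(4u(h)))^c$ (the minimal function allowed by~\eqref{ssize}; larger choices preserve the size clause by the same monotonicity). Using that $u$ is $r$-log-Lipschitz, $u(\Delta h)\le\Delta^r u(h)$, gives $4u(\Delta h)\le\Delta^r\cdot 4u(h)$, and taking logs, $\ln(4u(\Delta h))\le r\ln\Delta+\ln(4u(h))\le\Delta^r\ln(4u(h))$, where the last step uses $e^{x}-1\ge x$ with $x=r\ln\Delta$ together with $\ln(4u(h))\ge 1$. Multiplying the two bounds yields $4u(\Delta h)\ln(4u(\Delta h))\le\Delta^{2r}\,4u(h)\ln(4u(h))$, and raising to the $c$-th power produces exactly $s(\Delta h)\le\Delta^{2cr}s(h)$: one factor $\Delta^{r}$ from $u$ itself, a second from its logarithm, then the outer power $c$ accounts for the final $2cr$.

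The main obstacle is the size clause, and specifically the weight argument $\overline{w}=2^h n$ sitting inside the logarithm: it makes the target inequality depend implicitly on the very quantity $n=s(h)$ being bounded, so the argument cannot be a one-line substitution and instead needs the threshold-plus-monotonicity (Lambert-$W$/Lemma~\ref{firstlem}) reasoning, with the small-$h$ endpoint $h=1$ the most error-prone point to get right. Once the two elementary log-estimates above are established, the log-Lipschitz clause is routine.
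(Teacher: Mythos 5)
Your proof is correct, and on the size clause it takes a genuinely different (and in one respect more careful) route than the paper. The paper's proof chains \eqref{sizea} through the intermediate bound $\bigl(\tfrac{u(h)\ln(2^hn)}{hc}\bigr)^c\le\bigl(\tfrac{u(h)\ln n}{c}\bigr)^c$ (its step \eqref{sized}, i.e.\ $\ln(2^hn)\le h\ln n$) and then closes with Lemma~\ref{firstlem} applied with $\eps$ replaced by $1/u(h)$; you instead keep the $2^h$ inside the logarithm and verify the inequality directly at the threshold $n_0=(4u(h)\ln(4u(h)))^c$, extending to all $n\ge n_0$ by showing $\psi(n)=n^{1/c}-\tfrac{u(h)\ln(2^hn)}{hc}$ is increasing there. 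This buys you two things: the paper's step \eqref{sized} is literally false at $h=1$ (it asserts $\ln(2n)\le\ln n$), whereas your threshold computation $\tfrac{\ln 2}{c}+\tfrac{L+\ln L}{h}\le 3L\le 4L$ with $L=\ln(4u(h))>1$ is valid uniformly in $h\ge1$ --- you correctly flag $h=1$ as the delicate endpoint; and Lemma~\ref{firstlem} carries the side condition $\eps\le 1/e^c$, i.e.\ $u(h)\ge e^c$, which the paper never verifies (its hypothesis only gives $u(h)>1$), while your self-contained argument needs no such condition. On the log-Lipschitz clause the two proofs are essentially the same factorization of $\Delta^{2cr}$ into one $\Delta^{r}$ for $u$ and one for its logarithm: the paper proves $g(bx)\le b^{2c}g(x)$ for $g(x)=(x\ln x)^c$ using $a+y\le ay$ for $y\ge a\ge 2$ and substitutes $b=\Delta^r$, while you use $e^{x}\ge 1+x$ with $\ln(4u(h))\ge 1$, which in fact works for all $\Delta\ge1$ rather than only from a constant onward. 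Both you and the paper silently resolve the same imprecision in the statement by working with the canonical choice $s(h)=(4u(h)\ln(4u(h)))^c$ for the Lipschitz clause (the paper writes ``$s(h)=g(u(h))$''), since the bare inequality \eqref{ssize} alone cannot force $s$ to be log-Lipschitz; it is good that you made this reading explicit.
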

        \begin{proof}
        Let $h\geq1$, $n=s(h)$, and $\overline{w}= 2^{h}n$.
        Then $s$ is a halving function of $\size$ as
        \begin{align}
        \size(2n,\overline{w},\eps/h,\delta/4^h)
        \label{sizeb}&\leq \left(\frac{u(h) \ln(\overline{w})}{hc}\right)^c\\
        \label{sizebb}&= \left(\frac{u(h) \ln(2^{h}n)}{hc}\right)^c\\
        \label{sized}&\leq \left(\frac{u(h)\ln(n)}{c}\right)^c
        \leq n,
        \end{align}
        where~\eqref{sizeb} is by~\eqref{sizea},~\eqref{sizebb} is by the definition of $\overline{w}$,~\eqref{sized} is since $2\leq 4u\leq s(h)=n$
        by the definition of $s$, and the last inequality holds by replacing $\eps$ with $1/u(h)$ in Lemma~\ref{firstlem}.

        If $g(x)=(x\ln x)^c$, then for every $x,b\geq e>2$
        \begin{align}
       \nonumber g(bx)
        &=(b x)^c(\ln(b)+\ln x)^c
        \leq (b x)^c(\ln(b)\ln x)^c
        = (x\ln x)^c(b\ln b)^c\\
\label{label2}&\leq b^{2c}(x\ln x)^c =b^{2c}g(x),
        \end{align}
        where the first inequality holds since $(a+y)\leq 2y\leq ay$ for every $y\geq a\geq 2$, and~\eqref{label2} holds since $\ln b\leq b$. 
        Since $s(h)=g(u(h))$, for every $\Delta\geq e$ we have
        \[
        s(\Delta h)=g(u(\Delta h))\leq g(\Delta^{r}u( h))\leq \Delta^{2cr}g(u(h))= \Delta^{2cr}s(h),
        \]
        where the first inequality holds since $u$ is $r$-log-Lipschitz and the second holds by substituting $b=\Delta^r$ in~\eqref{label2}. Jointly with~\eqref{sized} we obtain that $s$ is $(\eps,\delta,2cr)$-halving function of $\size$.
        \end{proof}

        \begin{algorithm}
            \caption{$\coralgfinal(P',k,\varepsilon,\delta)$}
            \label{Alg_kgmmcoreset}
            {\begin{tabbing}
            \textbf{Input:\quad }\quad\=
                A weighted set $P'=(P,w)$ of points in $\REAL^d$,
                $k\in \mathbb{N} \cap [1,\inf]$ number clusters,
                an approximation error $\varepsilon>0$
                and a probability $\delta$ of failure
            \\ \textbf{Output:} \> $\eps$-coreset $(C,u)$ for $k$-GMMs of $P$,
                with probability at least $1-\delta$; see Theorem~\ref{thm33}.
            \end{tabbing}\vspace{-0.3cm}}
                $\ell_{\infty}:= \linfcore(P,k,\varepsilon)$ \tcp{See Algorithm~\ref{Alg_linfcore}}
                $s:= \redd(P',\varepsilon,\delta,\ell_{\infty})$  \tcp{See Algorithm~\ref{Alg_wsensitivity}}
                $t=:=\sum_{p\in P}s(p)$ \\
                $d'=k^4 d^3$ \\
                $m:=f(\varepsilon,\delta,d',t)$ \\
                $(C,u):= \impalg(P,w,s,m)$ \tcp{See Algorithm~\ref{Alg_Coreset}}
                \textbf{Return} $(C,u)$ \\
        \end{algorithm}

\chapter{Wrapping All Together}
In this section we use the previous chapter to give an example coreset for any $k$-GMM. First we suggest an inefficient construction (quadratic running time in $n$). Then we use it in the streaming setting to obtain time that is near-linear in $n$.
    \newcommand{\evcore}{\textsc{$\ell_{\infty}$-Projective-Clustering}}
            \begin{algorithm}
            \caption{$\projcore_{\subcore}(P,k,\varepsilon)$}
            \label{Alg_linfcore}
            {\begin{tabbing}
            \textbf{Input:\quad }\quad\=
                A finite set $P\subseteq \br{-M,-M+1,\cdots,M}^d$ for some integer $M\geq1$,\\
                \>an integer $k\geq1$, and an approximation error $\eps\in(0,1)$.
            \\ \textbf{Required:} \>
                An algorithm $\subcore(P,\eps)$ that returns an $\eps$-coreset for $(P,H_{d,k},\dist,\norm{\cdot}_{\infty})$.
            \\ \textbf{Output:} \> An $\eps$-coreset $C\subseteq P$ for $(P,H_{d,k},\dist,\norm{\cdot}_{\infty})$; see Theorem~\ref{evmaintheorem}.
            \end{tabbing}\vspace{-0.3cm}}
                \If {$k=1$}
                {
                  \Return $\textsc{Subspace-Coreset}(P,\varepsilon)$ \\
                }
                $C:= \projcore(P,k-1,\varepsilon)$ \\
                \For {every $v_0\in C$}
                {
                  $P[v_0]:= P$ \\
                  $C[v_0]:=\textsc{Recursive}(P[v_0],k,\varepsilon,\br{v_0})$\\
                  $C:= C \cup C[v_0]$
                }
                \Return $C$
        \end{algorithm}

{\setstretch{1.0}
        \begin{algorithm}
            \caption{$\linfrec(P,k,\varepsilon,V)$}
            \label{Alg_linfrec}
            {\begin{tabbing}
            \textbf{Input:\quad }
            \quad\=A finite set $P\subseteq \br{-M,-M+1,\cdots,M}^d$ for some integer $M\geq1$,\\
            \> an integer $k\geq1$, an approximation error $\eps\in(0,1]$, and a set $V=\br{v_0,\cdots,v_t}\subseteq P$.\\
            \textbf{Output: }A set $C\subseteq P$ that is returned to the caller, Algorithm~\ref{Alg_linfcore}.
            \end{tabbing}\vspace{-0.3cm}}
                $C:=\emptyset$; $A[\br{v_0}]=\br{v_0}$\\
                \If {$t\geq 1$}{
                \For {$i:=1$ to $t$ }
                {
                  \newcommand{\aff}{\mathrm{aff}}
                  $A[\br{v_0,\cdots,v_{i}}]:=\br{\sum_{h=0}^{i}\alpha_h v_h\mid \sum_{b=0}^{i}\alpha_b=1}$\\
                  \tcp{The affine $i$-subspace that passes through $v_0,\cdots, v_i$.}

                  Set $\pi(v_{i},A[\br{v_0,\cdots,v_{i-1}}])\in \argmin_{x\in A[\br{v_0,\cdots,v_{i-1}}]}\norm{v_i-x}_2$\\
                  \tcp{The projection (closest point) of $v_i$ onto $A[\br{v_0,\cdots,v_{i-1}}]$}

                  $u_{i}:= v_{i} - \pi(v_{i},A[\br{v_0,\cdots,v_{i-1}}])$\\
                  \tcp{The vector from $v_{i}$ to its projection on $A[\br{v_0,\cdots,v_{i-1}}]$}
                }
                $R[V]:= \br{v_0 + a_1 u_1 + \cdots + a_t u_t \mid a_i \in[-1,1], i\in [t]} $
                \tcp{A $t$-dimensional rectangle centered at $v_0$ whose $i$th side length is $2\norm{u_i}$.}   %
                $\displaystyle r[V]:= \br{a_1 u_1 + \cdots + a_t u_t\mid a_i\in [-\eps,\eps], i\in[t]}$\\
                \tcp{A $t$-dimensional rectangle whose $i$th side length is $2\eps\norm{u_i}$}
                $\displaystyle \mathcal{R}[V]\gets$ A partition of $R[V]$ into $\frac{1}{\eps^t}$ translated copies of $r[V]$\\
                \For{each rectangle $R\in \mathcal{R}[V]$}
                {
                  $C_R[V]:=\projcore(P \cap R,k-1,\varepsilon)$\\\tcp{see Algorithm~\ref{Alg_linfcore}.}
                  $C:= C \cup C_R[V]$
                }
                }
                \If{$t \leq d-1$}
                {
                      $B_0[V]:=P\cap A[V]$ \\
                      $c:=1/d^{3(d+1)/2}$\\
                  \For {$j:=1$ to $8d\log_2M+\log_2(1/c)$}
                  {
                    $\displaystyle B_j[V]:=\br{p\in P \mid 2^{j-1}c/M^{d+1} \leq  \dist(p,A[V]) <  2^jc/M^{d+1}}$ \\ \tcp{$\dist(p,A[V])$ is the distance from $p$ to $A[V]$.}
                    $K_j[V]:= \projcore(B_j[V],k-1,\varepsilon)$\\
                    $C := C \cup K_j[V]$ \\
                    \For {every $v_{t+1} \in K_j[V]$}
                    {
                      $V':=V\cup \br{v_{t+1}}\quad$\tcp{$V'=\br{v_0,\cdots,v_{t+1}}$}
                    $P[V']:= \bigcup_{i=0}^j B_i[V]$ \\
                    $C[V']:=\linfrec(P[V'],k-1,\varepsilon,V')$\\
                      $C := C \cup C[V']$
                    }
                  }
                }
                \Return $C$
        \end{algorithm}
}
    \section{Inefficient off-line construction}

        \begin{figure*}
            \centering

            \begin{subfigure}[t]{0.3\textwidth}
                \includegraphics[width=\textwidth]{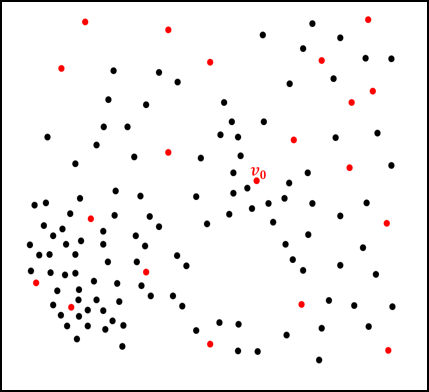}
                \caption{Compte a coreset (in red) for $(k-1)$ hyperplane queries recursively.}
                \label{fig:Alg_1}
            \end{subfigure}
            ~ \quad
            \begin{subfigure}[t]{0.3\textwidth}
                \includegraphics[width=\textwidth]{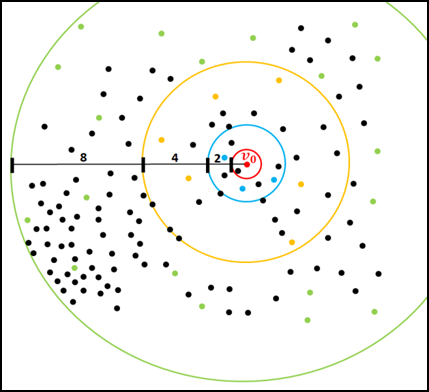}
                \caption{For each coreset point $v_0$, partition the input into exponential increasing balls around $v_0$. Compute a coreset for $k-1$ hyperplanes in each ring.}
                \label{fig:Alg_2}
            \end{subfigure}
            ~ \quad 
            \begin{subfigure}[t]{0.3\textwidth}
                \includegraphics[width=\textwidth]{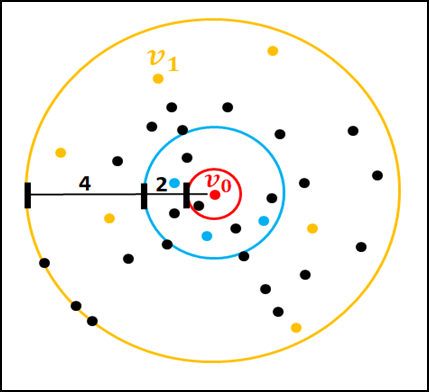}
                \caption{For each point $v_1$ in the last coreset, run the remaining steps while ignoring points in balls that enclose $v_1$.}
                \label{fig:Alg_3}
            \end{subfigure}
            ~

            ~

            \begin{subfigure}[t]{0.30\textwidth}
                \includegraphics[width=\textwidth]{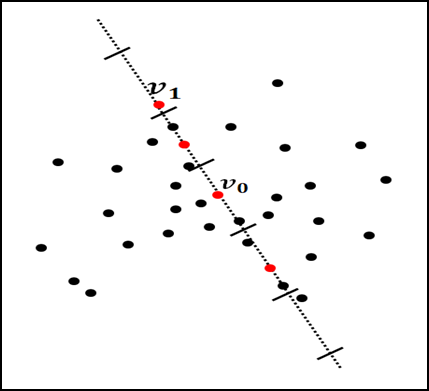}
                \caption{Partition the segment between $v_0$ and $v_1$ into $O(1/\eps)$ intervals. Compute a $k-1$ coreset for each interval.}
                \label{fig:Alg_4}
            \end{subfigure}
            ~ \quad
            \begin{subfigure}[t]{0.30\textwidth}
                \includegraphics[width=\textwidth]{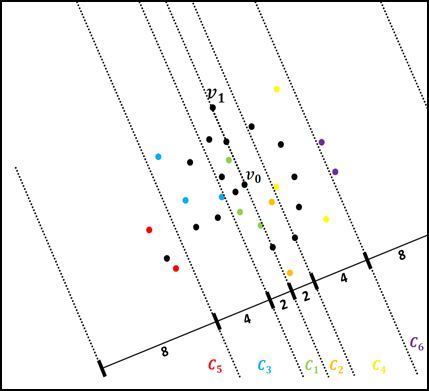}
                \caption{Split the data into strips using exponential growing parallel strips around the line between $v_0$ and $v_1$. Compute $k-1$ coreset for each such strip.}
                \label{fig:Alg_5}
            \end{subfigure}
            ~ \quad 
            \begin{subfigure}[t]{0.30\textwidth}
                \includegraphics[width=\textwidth]{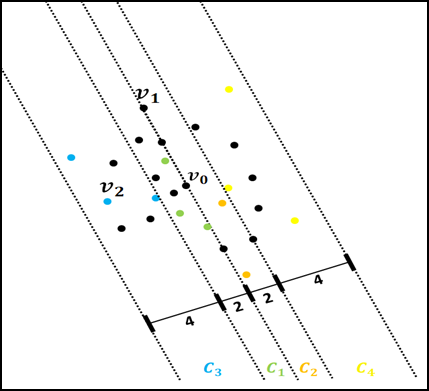}
                \caption{For each point $v_2$ in the third coreset, consider only points in the strips up to the one that contains $v_2$.}
                \label{fig:Alg_6}
            \end{subfigure}
            ~

            ~

            \begin{subfigure}[t]{0.30\textwidth}
                \includegraphics[width=\textwidth]{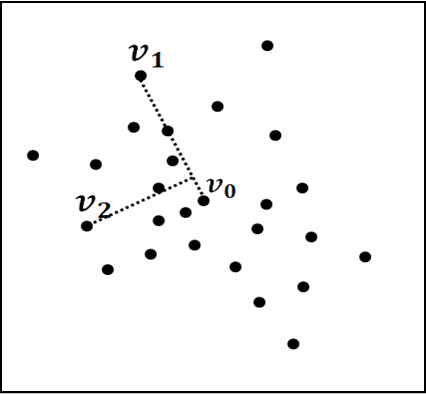}
                \caption{Compute the projection of $v_2$ onto the segment between $v_0$ and $v_1$.}
                \label{fig:Alg_7}
            \end{subfigure}
            ~ \quad
            \begin{subfigure}[t]{0.30\textwidth}
                \includegraphics[width=\textwidth]{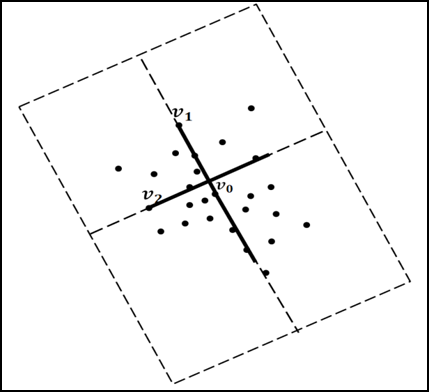}
                \caption{The segment and the projection of $v_2$ on it form a rectangle that contains all the points (after scaling by a factor of $2$).}
                \label{fig:Alg_8}
            \end{subfigure}
            ~ \quad 
            \begin{subfigure}[t]{0.30\textwidth}
                \includegraphics[width=\textwidth]{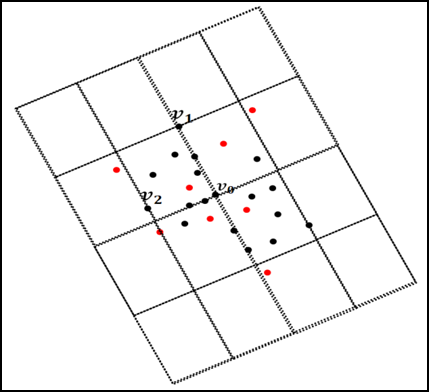}
                \caption{Split the rectangle into small scaled $O(1/\eps^d)$ rectangles of the same shape. Compute a $(k-1)$-coreset for each rectangle.}
                \label{fig:Alg_9}
            \end{subfigure}

            \caption[$\linfcore$]{Step by step visualization of Algorithm~\ref{Alg_linfcore}. We inductively assume that coreset for approximating $k-1$ hyperplanes is given. The resulting coreset for $k$ hyperplanes is the union of $k-1$-coresets that are constructed here.} \label{fig:linfcore}
        \end{figure*}

        Michael Edwards and Kasturi R. Varadaraja~\cite{EV05} suggested a coreset
        for the projective clustering problem  where the fitting cost is the maximum distance over every input point
        to its closest subspace in the query. It was also proven in~\cite{EV05,har2004no} that no such coreset of size sub-linear in $n$ exists, unless we assume that the input can be scaled to be on a grid of integers.
        The suggested coreset size then depends poly-logarithmically on the size of this grid.
        This assumption is usually reasonable in practice, as, unlike the theoretical RAM model,
        every coordinate is stored in memory using a small number of bits (e.g. 16 or 32 bits).
        The original result is for any $\eps'\in(0,1)$ but due to its usage in Theorem~\ref{sensitivitylemmasquareddistances},
        $\eps'=1/3$ or any other constant -- suffices. The impact on the total sensitivity and thus the overall coreset size would be a factor of $(1+\eps')$, but the multiplicative approximation error would still be $\eps\in(0,1)$.

        \begin{theorem}[Projective Clustering\cite{EV05} \label{evmaintheorem}]
        Let $M\geq2$ and $k\geq1$ be a pair of integers.
        Let $g(d,k)$ be a number that depends only on $d$ and $k$, and can be computed from the proof.
        Let $P\subseteq \REAL$, and $C\subseteq P$ be the output of a call to $\projcore$;
        see Algorithm~\ref{Alg_linfcore}.
        Then $C$ is a $(1/3)$-coreset for $(P,H_{k,d},\dist,\norm{\cdot}_\infty)$
        of size $(\log M)^{g(d,k)}$.
        Moreover $C$ can be computed in $n\cdot (\log M)^{O(1)\cdot g(d,k)}$ time.
        \end{theorem}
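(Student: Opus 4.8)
The plan is to prove all three assertions (coreset guarantee, size, and construction time) simultaneously by induction on the number $k$ of hyperplanes, mirroring the recursion of Algorithm~\ref{Alg_linfcore} ($\projcore$) and its subroutine Algorithm~\ref{Alg_linfrec} ($\linfrec$); the joint induction is forced by the fact that every recursive call reduces $k$ to $k-1$ and invokes the inductive hypothesis on a subset of the input. Throughout, $\eps$ denotes the fixed constant error passed to $\projcore$, tuned to the target $1/3$ at the end. Since $C\subseteq P$ always gives $\dist_{\infty}(C,S)\le \dist_{\infty}(P,S)$ for every $S\in H_{k,d}$, the only nontrivial half of the coreset guarantee is the upper bound
\[
\dist_{\infty}(P,S)\le (1+1/3)\,\dist_{\infty}(C,S)\qquad\text{for every }S\in H_{k,d}.
\]
For the base case $k=1$ the query is a single hyperplane and the returned set $\subcore(P,\eps)$ is by assumption an $\eps$-coreset for $(P,H_{1,d},\dist,\norm{\cdot}_{\infty})$, which settles both the guarantee and the grid-dependent size for one subspace.

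For the inductive step, fix $S=S_1\cup\cdots\cup S_k\in H_{k,d}$, let $x\in\argmax_{p\in P}\dist(p,S)$ and $w=\dist(x,S)=\dist_{\infty}(P,S)$, and let $h\in S$ realise $\dist(x,h)=w$. The idea is to ``guess'' $h$ well enough that, after conditioning, $x$ is served essentially only by $h$ while the remaining points form a $(k-1)$-hyperplane instance handled by a recursive call. First, dropping $h$ can only increase distances, so $\dist_{\infty}(P,S\setminus\br{h})\ge w$; by the inductive hypothesis the $(k-1)$-coreset $C_0=\projcore(P,k-1,\eps)\subseteq C$ contains a point $v_0$ with $\dist(v_0,S\setminus\br{h})\ge w/(1+\eps)$. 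Such a $v_0$ is far from the other hyperplanes but may lie near $h$, and the call $\linfrec(P,k,\eps,\br{v_0})$ is designed exactly for this situation. In $\linfrec$ the chosen points $v_0,v_1,\dots$ build an affine flat $A[V]$ approximating $h$ one dimension at a time: the \emph{exponential shells} $B_j[V]=\br{p\in P\mid 2^{j-1}c/M^{d+1}\le\dist(p,A[V])<2^{j}c/M^{d+1}}$ isolate the correct \emph{normal} scale relative to the current flat, and each shell's $(k-1)$-coreset $K_j[V]$ supplies the next spanning point $v_{t+1}$ through the recursive call $\linfrec(P[V'],k-1,\eps,V')$; complementarily, the subdivision $\mathcal R[V]$ of the rectangle $R[V]$ into $\eps^{-t}$ fine copies of $r[V]$ pins down the \emph{tangential} position within the flat, each cell contributing a $(k-1)$-coreset $\projcore(P\cap R,k-1,\eps)$.

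The correctness claim I would isolate as the key lemma is a \emph{covering property}: for the fixed $S$ there is an admissible sequence $v_0,v_1,\dots$ (each lying in the $(k-1)$-coreset of the relevant shell) producing a flat $A[V]$ parallel to $h$ and within distance $O(\eps w)$ of it, such that $x$ lands in one constructed shell-or-cell whose recursive $(k-1)$-coreset contains a point at distance at least $w/(1+1/3)$ from $S$. The two mechanisms are complementary precisely so that the induced error is a \emph{single} $(1+\eps)$ factor rather than a product over the flat-building levels: once $A[V]$ is parallel to $h$, the farthest point (or a point essentially as far) is captured directly inside the matching shell and cell, so the only loss is the shell ratio and cell fineness, tuned to $\le 1+1/3$ by the constant $\eps$. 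Establishing this non-compounding of the error — in particular that shell-coreset points lie close enough to $h$ to orient $A[V]$ correctly, and that $x$ never \emph{escapes} tangentially between cells — is the main obstacle and is the technical core inherited from~\cite{EV05}.

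Finally, for the size and time bounds I would unroll the recurrence. The grid $P\subseteq\br{-M,\dots,M}^d$ forces every nonzero point-to-flat distance into the range $[\,c/M^{d+1},\,\sqrt{d}\,M\,]$, so the number of relevant shells at each flat is $O(d\log M)$ (the range of $j$ in Algorithm~\ref{Alg_linfrec}); together with the $\eps^{-t}\le\eps^{-d}$ rectangle cells and the branching over the $\le N(d,k-1)$ points of each intermediate $(k-1)$-coreset, this yields a recurrence of the form $N(d,k)\le \mathrm{poly}(d,\eps^{-1})\,(\log M)^{O(1)}\,N(d,k-1)^{O(1)}$ over a recursion whose depth is bounded by $O(d+k)$. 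Solving it gives $|C|\le (\log M)^{g(d,k)}$ for a $g(d,k)$ extractable from the unrolling, and since every node of the recursion tree does $O(\mathrm{poly}(d)\cdot n)$ work to compute projections, assign points to shells and cells, and recurse, the total running time is $n\cdot(\log M)^{O(1)g(d,k)}$, with $\eps=1/3$ absorbed into the constants.
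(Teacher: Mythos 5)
A framing point first: the paper itself contains no proof of Theorem~\ref{evmaintheorem} — it is imported verbatim from \cite{EV05} as a black box, and the phrase ``can be computed from the proof'' refers to the proof in that external reference. So your attempt has to stand on its own rather than be matched against an in-paper argument. Structurally, your sketch does reconstruct the right skeleton of the \cite{EV05} argument as embodied in Algorithms~\ref{Alg_linfcore} and~\ref{Alg_linfrec}: induction on $k$ with base case $\subcore$; fixing the query $S$, the farthest point $x$ and the hyperplane $h$ serving it; extracting $v_0$ from the $(k-1)$-coreset of $P$ with $\dist(v_0,S\setminus\br{h})\geq \dist_{\infty}(P,S)/(1+\eps)$; and the two complementary mechanisms in $\linfrec$ (exponential shells normal to the growing flat $A[V]$, and the $\eps^{-t}$ rectangle cells tangentially). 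Your size/time recurrence with branching over shell count, cell count and intermediate coreset points, and recursion depth $O(d+k)$, is also the correct bookkeeping for the $(\log M)^{g(d,k)}$ bound.

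However, as a proof there are two genuine gaps. First, you assert without argument that the grid forces every nonzero point-to-flat distance into $[c/M^{d+1},\sqrt{d}\,M]$. This is exactly what legitimizes the shell range $j\leq 8d\log_2 M+\log_2(1/c)$ and the specific constant $c=1/d^{3(d+1)/2}$ in Algorithm~\ref{Alg_linfrec}: one needs a quantitative separation lemma stating that the distance from a point of $\br{-M,\cdots,M}^d$ to an affine flat spanned by other grid points is either exactly $0$ or at least roughly $c/M^{d+1}$ (proved via determinant/Cramer's-rule bounds on integer matrices). Without it, the $O(d\log M)$ shell count — and hence the entire size bound — is unsupported, and the treatment of points below the minimal shell (why $x$ must then lie in $B_0[V]=P\cap A[V]$) collapses. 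Second, your ``covering property'' is precisely the technical core, and you explicitly defer it to \cite{EV05} rather than prove it; moreover, your claim that the loss is a \emph{single} $(1+\eps)$ factor rather than a product over the flat-building levels is too strong as stated. Each of the up-to-$d$ levels of building $A[V]$ (choosing $v_{t+1}$ from a shell coreset, re-slicing, re-gridding) incurs its own approximation loss, so the honest bound is of the form $(1+O(\eps))^{O(d)}$, absorbed into the target constant $1/3$ only by choosing the working $\eps$ as a sufficiently small function of $d$; correspondingly the induction must be a double one, on $k$ and on the flat dimension $t$, with an explicit per-level error parameter. As it stands your proposal is a faithful outline of the \cite{EV05} strategy in which the two load-bearing lemmas are named rather than established.
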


        The following lemma states our main application
        which is an $\eps$-coreset that approximates
        the sum $\norm{\cdot}_1$ of fitting error $\phi_{\xi}$
        for \emph{any} $k$-GMM and an arbitrary small constant $\gamma>0$.
        Using Observation~\ref{obss}, it is also a corest for the likelihood
        $L(P,\theta)$ of any $k$-GMM whose smallest eigenvalues is at least $0.160754+\xi$, say, $0.161$.
        The running time is quadratic in $n$ but would be reduced to linear in Section~\ref{sec Coresets for Streaming}
        by applying it only on small weighted subsets of $D$.
        This is also the reason why the lemma is stated for weighted input.

        \begin{lemma}\label{offline}
        Let $M\geq2$ and $D'=(D,w)$ be a weighted set such that $D\subseteq \br{-M,-M+1,\cdots,M}^d$.
        Let $g(d,k)$ be a number that depends only on $d$ and $k$, as defined in Theorem~\ref{evmaintheorem}.
        Let $\eps,\delta\in(0,1/10)$, $\ta>0$ be an arbitrarily small constant,
        and $(C,u)$ be the output of a call to $\textsc{$K$-GMM-Coreset}(D',k,\eps,\delta)$; see Algorithm~\ref{Alg_kgmmcoreset}.

        Then, with probability at least $1-\delta$, $(C,u)$ is an $\eps$-coreset for
        $\left(D', \vartheta_k\left(\frac{e^{\ta}}{2\pi}\right), L,\norm{\cdot}_1\right)$ and for $(D', \vartheta_k(0), \phi_{\ta},\norm{\cdot}_1)$,
        where
        \[
        |C|\leq (\log M)^{g(d,k)}\cdot \frac{\log^2 \overline{w}(D')}{\eps^2}\log\left(\frac{1}{\delta}\right),\]
        its computation time is
        \[
        O(n^2)\cdot (\log M)^{g(d,k)},
        \]
        and
        $\sum_{p\in C}u(p)=\sum_{p\in D}w(p)$.
        \end{lemma}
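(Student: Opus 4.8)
The plan is to recognize Algorithm~\ref{Alg_kgmmcoreset} as the concrete instantiation of the generic pipeline built in the previous chapters and then to push the generic size and time estimates through the arithmetic. First I would pin down the $\norm{\cdot}_\infty$ coreset scheme that drives the whole construction. Padding every point of $D$ with $d+1$ zeros embeds it as $P\subseteq\REAL^{2d+1}$, and Theorem~\ref{evmaintheorem} (realized by $\projcore$) returns a $(1/3)$-coreset for $(P,H_{k,2d+1},\dist,\norm{\cdot}_\infty)$ of size $(\log M)^{g(d,k)}$, where I absorb the ambient dimension $2d+1$ into the function $g$, computable in $n\cdot(\log M)^{O(1)g(d,k)}$ time. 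Theorem~\ref{hyp2} then says that this very set is an $O(k/\ta^2)$-coreset for $(D,\vartheta_k(0),\phi_\ta,\norm{\cdot}_\infty)$. Hence $\linfcore$ is a coreset scheme for the query space $(D,\vartheta_k(0),\phi_\ta,\norm{\cdot}_\infty)$ whose approximation factor is the \emph{constant} $O(k/\ta^2)$ and whose $\size$ and $\time$ are those of Theorem~\ref{evmaintheorem}.

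Next I would feed this scheme into Corollary~\ref{55}, which is precisely the composition of $\redd$ (Theorem~\ref{sensitivitylemmasquareddistances}) with $\impalg$ (Theorem~\ref{supsampe}) carried out by the remaining lines of Algorithm~\ref{Alg_kgmmcoreset}. As already noted after Theorem~\ref{evmaintheorem}, the constant approximation factor of the $\norm{\cdot}_\infty$ scheme only multiplies the total sensitivity by a factor depending on $k$ and $\ta$; since these are fixed, Theorem~\ref{sensitivitylemmasquareddistances} produces a sensitivity bound $s$ whose total sensitivity is
\[
t=\sum_{p\in D}s(p)\in (\log M)^{g(d,k)}\cdot O\!\left(\log\frac{\overline{w}(D')}{\eps}\right),
\]
computed in $O(n)\cdot\time(n,n,\eps,\delta/n)=O(n^2)\cdot(\log M)^{O(1)g(d,k)}$ time, which dominates the running time. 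Bounding the query-space dimension by $d'\in O(d^4k^4)$ via Corollary~\ref{thm:pdimgmm}, Corollary~\ref{55} certifies that $(C,u)=\impalg(D,w,s,m)$ with $m=f(\eps,\delta,d',t)$ is, with probability at least $1-\delta$, an $\eps$-coreset for $(D',\vartheta_k(0),\phi_\ta,\norm{\cdot}_1)$ with $|C|\in O(m)$ and $\sum_{q\in C}u(q)=\sum_{p\in D}w(p)$, which is the weight-preservation claim.

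The final step is a routine size computation. Substituting the above $t$ and $d'=O(d^4k^4)$ into the sample size $m\geq \frac{c(t+1)}{\eps^2}\!\left(d'\log(t+1)+\log(1/\delta)\right)$ of Theorem~\ref{supsampe}, treating $d,k,\ta$ as fixed constants, and absorbing the lower-order $\log\log$ terms arising from $\log(t+1)$ into the factors $(\log M)^{g(d,k)}$ and $\log\overline{w}(D')$, collapses $m$ to $(\log M)^{g(d,k)}\frac{\log^2\overline{w}(D')}{\eps^2}\log(1/\delta)$; since $|C|\in O(m)$ this is the claimed size. The coreset for the negative log-likelihood on $\vartheta_k(e^\ta/(2\pi))$ is then immediate from Observation~\ref{obss}, which upgrades any $\norm{\cdot}_1$ coreset for $\phi_\ta$ to one for $L$ once every eigenvalue of the $k$-GMM exceeds $e^\ta/(2\pi)$.

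I expect the main obstacle to be conceptual rather than computational: justifying that a mere constant-factor ($1/3$, hence $O(k/\ta^2)$ after Theorem~\ref{hyp2}) coreset in the $\norm{\cdot}_\infty$ norm suffices to build a genuine $(1\pm\eps)$ coreset in the $\norm{\cdot}_1$ norm. This leverage is exactly what the sensitivity reduction of Theorem~\ref{sensitivitylemmasquareddistances} supplies, so the real work is checking that its hypotheses hold for the $\phi_\ta$ cost on $\vartheta_k(0)$ — in particular that $\phi_\ta$ is a legitimate nonnegative kernel and that the constant sensitivity blow-up is absorbed into the $O(\cdot)$ without contaminating the $\eps$-dependence of the final sample size.
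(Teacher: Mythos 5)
Your proposal follows exactly the paper's own chain of reductions: pad $D$ into $\REAL^{2d+1}$, invoke Theorem~\ref{evmaintheorem} for the $(1/3)$-coreset of size $(\log M)^{g(d,k)}$, convert it via Theorem~\ref{hyp2} into an $O(k/\ta^2)$-coreset scheme for $(D,\vartheta_k(0),\phi_\ta,\norm{\cdot}_\infty)$, feed it through Theorem~\ref{sensitivitylemmasquareddistances} and Corollary~\ref{55} with the dimension bound $d'\in O(d^4k^4)$ from Corollary~\ref{thm:pdimgmm} to obtain the $\norm{\cdot}_1$ coreset with the stated size, time, and weight preservation, and finish with Observation~\ref{obss} for the log-likelihood claim. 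This is correct and essentially identical to the paper's proof, including the absorption of the constant $O(k/\ta^2)$ approximation factor and the $\log(t+1)$ terms into the final bound.
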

        \begin{proof}
        Let $(D,w)$ be a positively weighted set of $n$ points in $\br{-M,\ldots,M}^d$, and
        \[
        P=\br{(p^T\mid 0,\cdots,0)^T\in\REAL^{2d+1}\mid p\in D}.
        \]

        By substituting $C:=S$ in Theorem~\ref{evmaintheorem}, a $(1/3)$-coreset $S'$ for $(P,H_{2d+1,k},\dist,\norm{\cdot}_{\infty})$
        of size $|S'|\in (\log M)^{g(d,k)}$ can be computed in time $n\cdot|S'|^{O(1)}$.
        By Theorem~\ref{hyp2}, $S=\br{p\in D\mid (p\mid \mathbf{0})\in S' }$ is an $O(k/\ta^2)$-coreset for  $(D,\vartheta_k(0),\phi_\ta,\norm{\cdot}_\infty)$.
        Hence, $(\lincore,\size,\time)$ is a coreset scheme for
        $(D,\vartheta_k(0),\phi_\ta,\norm{\cdot}_\infty)$ where $\eps'=O(k/\ta^2)$, $\size(\cdot,\cdot,\eps',\cdot)=|S|=|S'|$ and $\time(n,\cdot,\eps',\cdot)\in n\cdot|S'|^{O(1)}$.

        Substituting $P:=D$, $\eps=\eps'$, $\delta=0$, $\cost:=\phi_\ta$, and $Y=\vartheta_k(0)$ in Theorem~\ref{sensitivitylemmasquareddistances}
        yields that we can compute a sensitivity bound $s:D\to [0,\infty)$ for $(D',\vartheta_k(0),\phi_{\ta})$, whose total sensitivity is
        \[
        t=\sum_{p\in D}s(p)\in \size(n,n,\eps',0)\cdot O\left(\log \frac{\overline{w}(D')}{\eps'}\right)
        \subseteq |S'|\cdot O\left(\log \overline{w}(D')\right)=(\log M)^{O(1)}\cdot O\left(\log \overline{w}(D')\right),
        \]
        in time $n\cdot |S|^{O(1)}$.

        By Corollary~\ref{thm:pdimgmm}, the dimension of $(D,\vartheta_k(0),f)$ is $d'\in O(d^4k^4)$
        where $f$ is defined there in~\eqref{ffdef}.
        Plugging $s$ in Corollary~\ref{55} and choosing $\eps,\delta\in(0,1)$ yields that an $\eps$-coreset $(C,u)$ for $(D',\vartheta_k(0),\phi_{\ta},\norm{\cdot}_1)$ of size
        \[
        \begin{split}
        |C|&\in O(1)\cdot\frac{(t+1)}{\eps^2}\left(d'\log (t+1)+\log\left(\frac{1}{\delta}\right)\right)\\
        &=O(1)\cdot\frac{(t+1)}{\eps^2}\left(d^4k^4\log (t+1)+\log\left(\frac{1}{\delta}\right)\right)\\
        &\subseteq O(1)\cdot\left(\frac{t}{\eps}\right)^2\log\left(\frac{1}{\delta}\right)
        \end{split}
        \]
        can be computed in $O(n)\cdot n\cdot |S|^{O(1)}=O(n^2)\cdot (\log M)^{g(d,k)}$ time,
        with probability at least $1-\delta$.

        By Observation~\ref{obss}, $(C,u)$ is also an $\eps$-coreset of $(D',\vartheta_k(e^{\frac{\xi}{2\pi}}),L,\norm{\cdot}_1)$.
        \end{proof}

        \begin{theorem}\label{thm33}
        Let $M\geq2$ be an integer, and $\stream$ be a stream of points from $\br{-M,-M+1,\cdots,M}^d$.
        Let $\eps,\delta\in(0,1)$, and for every $h\geq1$ let
        \begin{equation}\label{ssh5}
        s(h)=\frac{h}{\eps^2}\log\frac{h}{\eps}\cdot  \log^2\left(\frac{1}{\delta}\right)\log^{2g(d,k)} M,
        \end{equation}
        \sloppy where $g(d,k)$ is a function that depends only on $d$ and $k$ as defined in Theorem~\ref{evmaintheorem}.
        Let $\coralgfinal$ be defined as in Algorithm~\ref{Alg_kgmmcoreset}, and $C'_1,C'_2,\cdots$ be the output of a call to
        $\stralg(\stream,\frac{\eps}{6},\frac{\delta}{6},\coralgfinal,s)$; see Algorithm~\ref{Alg_Coreset}.
        Then, with probability at least $1-\delta$, the following hold.

        For every integer $n\geq1$:
        \begin{enumerate}
        \renewcommand{\labelenumi}{\theenumi}
        \renewcommand{\theenumi}{(\roman{enumi})}
        \item (Correctness) $C'_n$ is an $\eps$-coreset of
        $\left(D_n, \vartheta_k\left(\frac{e^{\ta}}{2\pi}\right), L,\norm{\cdot}_1\right)$ and for $(D_n, \vartheta_k(0), \phi_{\ta},\norm{\cdot}_1)$,
        where $D_n$ is the first $n$ points in $\stream$, and $\ta>0$ is an arbitrarily small constant.
        \item (Size) \[
        \displaystyle |C_n|\in (\log M)^{g(d,k)}\cdot\frac{\log^2 n}{\eps^2}\log\left(\frac{1}{\delta}\right).\]
        \item (Memory) there are
        \[
        b\in 
        \frac{1}{\eps^2}\log\frac{1}{\eps}\cdot  \log^2\left(\frac{1}{\delta}\right)\log^{O(1)}(n)\log^{2g(d,k)} M
        \] points in memory during the streaming.
        \item (Update time) $C'_n$ is outputted in additional $t\in O(b^2)\cdot (\log M)^{g(d,k)}$ time after $C'_{n-1}$.
        \item (Overall time) $C'_n$ is computed in $nt$ time.
        \end{enumerate}
        \end{theorem}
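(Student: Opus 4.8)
The plan is to derive Theorem~\ref{thm33} by feeding two earlier results into one another: the inefficient off-line construction of Lemma~\ref{offline}, which already supplies a weight-preserving $(\eps,\delta)$-coreset scheme for the $\phi_{\ta}$ (equivalently $k$-GMM) query space, and the generic streaming reduction of Theorem~\ref{thmstream}, which upgrades \emph{any} mergable coreset scheme of constant log-Lipschitz order into a streaming one. The only genuine work is to certify that the explicit function $s$ of~\eqref{ssh5} is a valid halving function for the size bound of Lemma~\ref{offline}; once the pair $(\coralgfinal,s)$ is shown to satisfy Definition~\ref{merge}, claims (i)--(v) follow by reading off Theorem~\ref{thmstream} with the concrete $\size$ and $\time$ substituted.

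First I would record, from Lemma~\ref{offline}, that $\coralgfinal$ realizes an $(\eps,\delta)$-coreset scheme with
\[
\size(|Q|,\overline{w},\eps,\delta)=(\log M)^{g(d,k)}\cdot\frac{\log^2\overline{w}}{\eps^2}\log\frac{1}{\delta},\qquad \time(n,\cdot,\eps,\cdot)=O(n^2)(\log M)^{g(d,k)},
\]
and that it preserves total weight, $\sum_{p\in C}u(p)=\sum_{p\in Q}w(p)$; this last identity is exactly the extra requirement Definition~\ref{merge} imposes on a mergable scheme. The log-likelihood half of claim (i) is inherited from the $\phi_{\ta}$ half through Observation~\ref{obss}, precisely as in the closing line of Lemma~\ref{offline}.

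Second, and this is the crux, I would verify the halving inequality of Definition~\ref{halv}: for $n=s(h)$ and $w'=2^h n$ one needs $\size(2n,w',\eps/h,\delta/4^h)\le n$, together with the $r$-log-Lipschitz property for a constant $r$. I would route this through Corollary~\ref{corhal} with $c=2$, matching the $\log^2\overline{w}$ in $\size$: taking $u(h)$ to be the polynomial-in-$h$ envelope forced by $\size(2n,\cdot,\eps/h,\delta/4^h)\le(u(h)\ln\overline{w}/(2h))^2$, one checks that $u$ is $r$-log-Lipschitz with $r=O(1)$, whence Corollary~\ref{corhal} certifies $s$ as an $(\eps,\delta,2cr)$-halving function as soon as $s(h)\ge(4u(h)\ln 4u(h))^2$; the closed form~\eqref{ssh5} is exactly the output of solving the implicit bound $\size(\cdot)\le n$ explicitly via the Lambert-$W$ estimate of Lemma~\ref{firstlem}. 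The delicate bookkeeping lies in tracking how the three sources of growth combine---the $\eps/h$ rescaling ($h^2$), the $\delta/4^h$ rescaling ($\log(4^h/\delta)$), and the weight blow-up $w'=2^h s(h)$ entering through $\log^2 w'$---and confirming that the $s$ of~\eqref{ssh5} dominates their product while staying log-Lipschitz of constant order. This single matching of exponents against Corollary~\ref{corhal} is the main obstacle; everything else is substitution.

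Finally, with $(\coralgfinal,s,\time)$ established as an $(\eps,\delta,r)$-mergable coreset scheme of constant $r$, I would invoke Theorem~\ref{thmstream} on $\stream$ with parameters $\eps/6,\delta/6$. Correctness (i) is its claim~\ref{resss1}; the size (ii) is claim~\ref{resss2}, namely $|C_n|\in\size(s(c),n,\eps,\delta)=(\log M)^{g(d,k)}\log^2 n/\eps^2\cdot\log(1/\delta)$, since the average weight reaching the root of the merge-reduce tree equals the number $n$ of points seen so far; the memory (iii) is claim~\ref{resss3} with $b=s(c)\cdot O(\log^{r+1}n)$ after inserting~\eqref{ssh5}; and the update and overall times (iv)--(v) follow from claims~\ref{resss4} and~\ref{resss5} with $\time(b,n,\cdot,\cdot)=O(b^2)(\log M)^{g(d,k)}$, the polylogarithmic factors of $b$ absorbing the extra $O(\log n)$ from the tree height.
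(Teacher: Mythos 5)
Your proposal is correct and follows essentially the same route as the paper's proof: it instantiates the coreset scheme from Lemma~\ref{offline} (including the weight-preservation needed for Definition~\ref{merge}), certifies $s$ as a halving function via Corollary~\ref{corhal} with $c=2$ and a polynomial-in-$h$ envelope $u(h)$ (the paper takes $u(h)=\bigl(\log^{g(d,k)}M\cdot 4h^5\eps^{-2}\log(4/\delta)\bigr)^{1/2}$, which is $(5/2)$-log-Lipschitz, yielding an $(\eps,\delta,15)$-halving function), and then reads claims (i)--(v) off Theorem~\ref{thmstream}, transferring from $\phi_{\ta}$ to $L$ by Observation~\ref{obss}. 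Your identification of the exponent-matching in the halving verification as the only substantive step is exactly where the paper's proof concentrates its work.
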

        \begin{proof}
        Let $h,w'\geq 1$ and $n=s(h)$. Substituting $D'=D_n$ in Lemma~\ref{offline} yields,
        $(\coralgfinal,\size,\time)$ is an $(\eps,\delta)$-coreset scheme for
        $(D_n, \vartheta_k(0), \phi_{\ta},\norm{\cdot}_1)$, where
        \begin{equation}\label{ss3}
        \size(n,w',\eps,\delta)
        \leq (\log M)^{g(d,k)}\cdot \left(\frac{\log (w')}{\eps}\right)^2\log\left(\frac{1}{\delta}\right),
        \end{equation}
        and
        \[
        \time(n,w',\eps,\delta)\in O(n^2)\cdot (\log M)^{g(d,k)}.
        \]

        Let $h\geq1$, and
        \begin{equation}\label{uu4}
        u(h)=\left(\log M^{g(d,k)}\cdot \frac{4h^5}{\eps^2}\cdot \log\left(\frac{4}{\delta}\right)\right)^{1/2}.
        \end{equation}
        Hence,
        \[
        \size(2n,w',\eps/h, \delta/4^h)
        \leq (\log M)^{g(d,k)}\cdot \left(\frac{h\log (w')}{\eps}\right)^2\log\left(\frac{4^h}{\delta}\right)
        \leq  \left(\frac{u(h)\log(w')}{4h}\right)^2,
        \]
        where the first inequality is by~\eqref{ss3}. Since $u$ is $(5/2)$-log-Lipschitz, and
        \[
        s(h) \geq 10u^3(h) \geq (4u(h)\ln 4u(h))^2,
        \]
        by~\eqref{ssh5} and~\eqref{uu4}, substituting $r=(5/2)$ and $c=2$ in Corollary~\ref{corhal} yields that $s$ is $(\eps,\delta,15)$-halving of $\size$. Substituting
        \[
        s(24)\in
        \left(\log^{g(d,k)} M \log\left(\frac{1}{\delta}\right)\right)^2\cdot \frac{1}{\eps^2}\log\frac{1}{\eps},
        \]
        and
        \[
        \time(b, n,\frac{\eps}{O(\log n)},\frac{\delta}{n^{O(1)}})
        \in O(b^2)\cdot (\log M)^{g(d,k)}
        \]
        in Theorem~\ref{thmstream} then proves Theorem~\ref{thm33} for the query space $(D_n, \vartheta_k(0), \phi_{\ta},\norm{\cdot}_1)$.

        Substituting $P:=D_n$ in Observation~\ref{obss}, proves the theorem also for $\left(D_n, \vartheta_k\left(\frac{e^{\ta}}{2\pi}\right), L,\norm{\cdot}_1\right)$.
        \end{proof}


\chapter{Experimental Results}\label{sec Experimental Results}
    We implemented our main coreset construction and its subroutines into a Python library called CoreGMM~\cite{code}.
    In this section we provide preliminary evaluations of our implementations for several public real-world databases.
    All our experimental results are reproducible and the scripts that generated them can be found in the library.
    While our experiments are preliminary for demonstration only, we publish CoreGMM as open code
    to help future research of the community that may use the code for other ML/DL problems or improve the results in future papers.
    In the next sections we describe our experiments and evaluations for
    effectiveness of using coresets of different sizes for training mixture models.

    \paragraph{From theory to practice. }
        As common in the coreset literature and theoretical computer science in general,
        the worst-case bounds, the VC-dimension and the $O(\cdot)$ notation
        are extremely pessimistic compared to practical experiments.
        E.g., because the analysis is not tight and since there is usually some structure in the data unlike the worst input.
        To this end, our implementation does not contain parameters such as
        $\eps$ or $\delta$, and there is no assumption on the input data
        (e.g. that it is contained in a grid).
        Instead, the input is the desired size of sample (coreset size),
        and the output is a coreset of this size.
        The coreset is computed based on the distribution that is defined by our algorithms.
        See for example the input $m$ to Algorithm~\ref{Alg_Coreset} which hides $\eps$ and $\delta$
        that appears only in the analysis of Theorem~\ref{supsampe}.
        We then run existing heuristics on our coreset and comapre the results to uniform sampling and existing state of the art for coresets of the same size.

    \section{Input Datasets}
        Our experiments were applied on the following three  public real-world datasets from~\cite{feldman2011scalable,lucic2017training}.

        \paragraph{CSN cell phone accelerometer data.}
        As explained in~\cite{feldman2011scalable},
        smart phones with accelerometers are being used by the Community Seismic Network (CSN) as inexpensive seismometers for earthquake detection.
        7GB of acceleration data were recorded from volunteers while carrying and operating their phone
        in normal conditions (walking, talking, on desk, etc.) \cite{faulkner2011next}.
        As done in \cite{feldman2011scalable}, from this data, 17-dimensional feature vectors were computed (containing frequency information, moments, etc.).
        The goal is to train GMMs based on normal data, which then can be used to perform anomaly detection to detect possible seismic activity.
        Motivated by the limited storage on smart phones.

        \paragraph{MNIST handwritten digits.}
        The MNIST dataset contains 60,000 grayscale images of handwritten digits.
        As in \cite{krause2010discriminative}, we normalize each component of the data to have zero mean and unit variance,
        and then reduce each 784-pixel (28x28) image using PCA.

        \paragraph{Higgs high-energy physics.}
        This databset contains 11,000,000 instances describing signal processes which produce Higgs bosons and background processes which do not \cite{baldi2014searching}.

    \section{The Experiment}
        \sloppy We trained a GMM $G_{org}$ using each dataset $T$ with a common python library
        (\href{https://pomegranate.readthedocs.io/en/latest/GeneralMixtureModel.html?highlight=gmm}{Pomegranate}),
        and computed the negative log-likelihood $\ell(G_{org},T)$.
        We used the entire data set several times to train a target GMM $G_{trg}$,
        and computed the average log-likelihood $\ell(G_{trg},T)$ of $T$ using $G_{trg}$.
        The idea behind it was that our coreset upper bound is to describe the dataset as $G_{trg}$ (which was trained with full dataset).
        Therefore, we define the optimal log-likelihood as $\ell_{opt}:= \left| \ell(G_{org},T) - \ell(G_{trg},T) \right|$.

        We compared our algorithm, the algorithm from \cite{lucic2017training} and uniform sample to construct coresets with different sizes in the range between 20 and 5000.
        We tested each coreset $C$ several time to train a GMM $G_C$,
        and computed the coreset average log-likelihood $\ell(G_C,T)$ of $T$ using $G_C$.
        We define the error log-likelihood of each coreset to be $\left| \ell_{opt} - \ell(G_C,T) \right|$.

        We evaluate coresets on, CSN, a dataset of $n=40,000$ feature vectors ($d=17$) using the parameter $k=6$,
        As for Higgs Dataset, we fitted GMMs with $d=150$ components and $k=15$.
        We reproduced \cite{lucic2017training} experiment with a scaled down training size ($n=100,000$).
        From MNIST dataset, we used only the top $d=100$ principal components as a feature vector.
        Using all $n=60000$ we produce coresets and uniformly sampled subsets of sizes between $20$ and $5000$, using $k=10$ and fit GMMs using EM.

    \section{Results}
        As shown in Fig.~\ref{fig:experimentals}, for all the three databases, and
        for every fixed sample size, our coresets introduce smaller approximation errors and obtain significant speedups with respect to solving the problem on the full data set.
        The coreset construction time was similar to \cite{lucic2017training} for every dataset and coreset size.

        For a coreset of size 20 (first point in~\ref{fig:experimentals} ),
        the approximation error using CSN dataset was 10 times smaller then \cite{lucic2017training} and 40 times smaller then uniform sampling,
        and using MNIST \cite{lucic2017training} coreset approximation error was twice then our approximated error.
        For some coreset sizes on Higgs dataset, our approximation error was 3 times smaller then \cite{lucic2017training}.

        \begin{figure*}
            \centering

            \begin{subfigure}[t]{0.45\textwidth}
                \includegraphics[width=\textwidth]{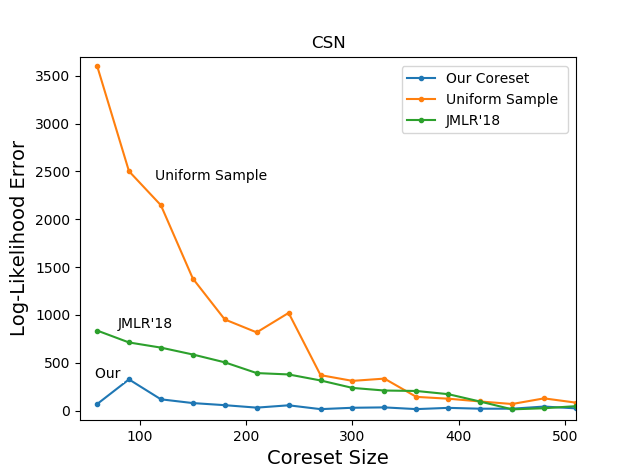}
                \label{fig:CSN_final}
            \end{subfigure}
            ~ \quad
            \begin{subfigure}[t]{0.45\textwidth}
                \includegraphics[width=\textwidth]{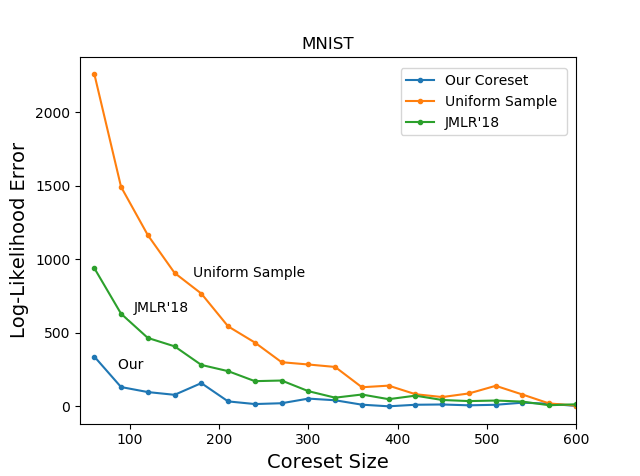}
                \label{fig:MNIST_final}
            \end{subfigure}

            ~

            ~
            \begin{subfigure}[t]{0.45\textwidth}
                \includegraphics[width=\textwidth]{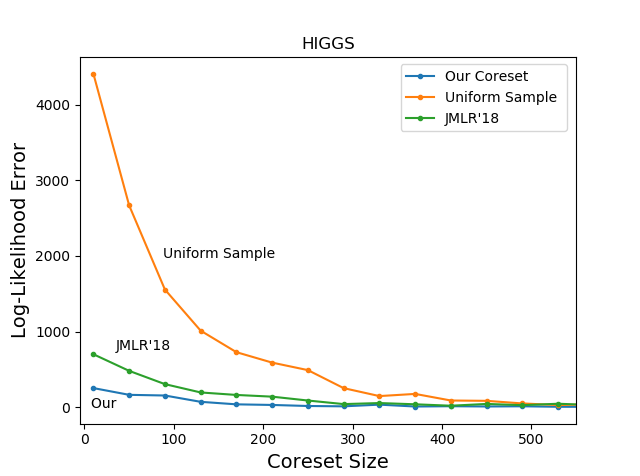}
                \label{fig:HIGGS_final}
            \end{subfigure}

            \caption[Experimental Results]
        	{
                Results are presented for three reconstructed experiments on three real datasets,
                two from \cite{feldman2011scalable} and a third from \cite{lucic2017training}.
                We trained a GMM $G_{org}$ using each dataset $T$ with a common python library
                (\href{https://pomegranate.readthedocs.io/en/latest/GeneralMixtureModel.html?highlight=gmm}{Pomegranate}),
                and computed the log-likelihood $\ell(G_{org},T)$.
                We used the entire data set several times to train a target GMM $G_{trg}$,
                and computed the average log-likelihood $\ell(G_{trg},T)$ of $T$ using $G_{trg}$.
                The idea behind it was that our coreset upper bound is to describe the dataset as $G_{trg}$ (which was trained with full dataset).
                Therefore, we define the optimal log-likelihood as $\ell_{opt}:= \left| \ell(G_{org},T) - \ell(G_{trg},T) \right|$.
                We used our algorithm, \cite{lucic2017training} algorithm and uniform sample to create coresets with different sizes between 20 and 5000.
                We used each coreset $C$ several time to train a GMM $G_C$,
                and computed the coreset average log-likelihood $\ell(G_C,T)$ of $T$ using $G_C$.
                We define the error log-likelihood of each coreset to be $\left| \ell_{opt} - \ell(G_C,T) \right|$.
                As presented here, for all three databases,
                for every fixed sample size, our coresets enjoy smaller approximation errors and obtain significant speedups with respect to solving the problem on the full data set.
        	}
            \label{fig:experimentals}
        \end{figure*}

\chapter{Conclusion and Open Problems}\label{sec Conclusion and Open Problems}
    Most of the papers that are related to coresets suggest a coreset that is tailored to a very specific loss function and family of models.
    In the recent years, there is an effort to replace the approach of "paper after paper" to a generic algorithm and frameworks that can be used compute coresets for a large family of problems.
    This is also the case in this paper.

    We provided algorithms that compute, with high probability (exponential in the coreset's size),
    an $\eps$-coresets for the family of mixtures of $k$ Gaussian  models.
    The cost (fitting error) function is over non-negative log-likelihood
    (with a lower bound on their smallest eigenvalue),
    $\phi$ function (for every mixture of $k$-Gaussians).
    Similarly, we proved the first coresets for maximum over these cost functions.
    The coreset can be maintained for streaming, distributed and dynamic input data using existing techniques.
    The key idea is a reduction for coresets that approximate every set of $k$ subspaces,
    known as the projective clustering problem,
    which is a well known problem with many related coresets in the community of computational geometry and machine learning.

    \paragraph{Limitations. }
    Our coreset has size exponential in both $d$ and $k$ for the general case of arbitrarily ratio between the eigenvalues, and we also assume that the points are scaled to be on a polynomial grid.
    Unfortunately, these assumptions were proved to be unavoidable for the case of projective clustering~\cite{edwards2005no,har2004no}.
    We believe that similar techniques can prove such lower bounds also to the $k$-GMM problem.

    \paragraph{Further improvements and open problems. }
    Still, there may be many lee-ways to obtain smaller coresets for $k$-GMMs.
    We may restrict the dimension of the subspaces in the projective clustering problem which corresponds to restrictions on the eigenvalues of the covariance matrix of each GMM.
    The result from~\cite{feldman2011scalable} can be considered as a special case where each GMM corresponds to a point,
    and the suggested coreset is essentially a coreset for $k$-means whose size is polynomial in $k$ and $d$ with no assumption on the input coordinates.
    The dependency of the coreset size on $d$ may be completely removed using weak coresets (approximates only optimal GMM) as explained in~\cite{feldman2015more},
    or by projecting the input on a lower dimensional space as in~\cite{feldman2013turning}.
    The exponential dependency on $k$ may be removed by assuming a finite but still large set of possible eigenvalues.

    Our experimental results show a huge but common gap between the pessimistic worst case analysis theoretical bounds and results on real-world input that has more structure.
    Adding assumptions on the input data or its distribution might allow provably smaller coresets than ours.
    For example, assume that the input is i.i.d. (as common in machine/PAC learning) and not arbitrary (as in computational geometry).

    The result from~\cite{feldman2011scalable,lucic2017training} can be seen as a special case of our framework,
    where we assume that the Gaussians are semi-spherical (all the eigenvalues of all their covariance matrices are in $[\eps,1/\eps]$).
    In this case we can use $\norm{\cdot}_\infty$ $\eps'$-coreset for $k$-center (points) instead of hyperplanes.

    While such coresets has size exponential in $d$ for small $\eps'$, for our reduction $\eps'=1$ ($2$-approximation) suffices to get $\eps$-coreset for $k$-GMMs; see Theorem~\ref{sensitivitylemmasquareddistances}.
    In this case, coresets for $k$-center of size $k+1$ (independent of $d$ and linear in $k$)
    can be obtained by simply running the algorithm in~\cite{gonzalez1985clustering} for $k+1$ iterations $O(ndk)$ time.
    Plugging the rest of our framework would yield $\eps$-coreset for $\ell_1$, whose size are linear in $k$ and $d$ and logarithmic in $n$.
    This results is the same as~\cite{feldman2011scalable,lucic2017training} for streaming data, but has a gap of $\log n$ factor due to the direct sensitivity bounds (without $\ell_{\infty}$ that were used there.
    We leave the closing of this gap to a future research.

    Some lower bound on the smallest eigenvalue of each covariance matrix is necessary due to scalability issues.
    We assumed that it is $e^{\ta}/2\pi$ when $\ta>0$ is an arbitrarily small constant, compared to $\ta=0$ in~\cite{feldman2011scalable,lucic2017training}. An open problem is whether we can have such coresets for $\ta<0$.

    The size of our coresets depends on bounds for the VC-dimension of projective clustering.
    Proving such lower bounds is still an active field (e.g.~\cite{bhattacharya2017k}) and the exponents in GMMs makes it even harder.

    Another open problem is to apply our technique to other distance functions, e.g. Laplacians mixtures models.
    Generic techniques to obtain such results for log-Lipschitz loss functions were suggested in~\cite{jubran2018minimizing}. Our coreset construction is generic in the sense that it never uses the GMM cost function explicitly.
    We thus suspect that exactly the same algorithm may be applied for a very large family of kernels and soft clustering.
    We leave the definition of this family as an open problem.

\bibliographystyle{plain}
\bibliography{bibfile}
\addcontentsline{toc}{chapter}{Bibliography}

\end{document}